\documentclass{article}

\usepackage[preprint]{main}

\usepackage[utf8]{inputenc} % allow utf-8 input
\usepackage[T1]{fontenc}    % use 8-bit T1 fonts
\usepackage{hyperref}       % hyperlinks
\usepackage{url}            % simple URL typesetting
\usepackage{booktabs}       % professional-quality tables
\usepackage{amsfonts}       % blackboard math symbols
\usepackage{nicefrac}       % compact symbols for 1/2, etc.
\usepackage{microtype}      % microtypography
\usepackage{xcolor}         % colors

\usepackage{makecell}
\usepackage{tabularx}
\usepackage{array}

\usepackage{graphicx}
\usepackage{subcaption}
\usepackage{wrapfig}

\usepackage{amsmath} 
\usepackage{algorithm}
\usepackage{algorithmic}

\usepackage{amsmath}
\usepackage{amssymb}
\usepackage{mathtools}
\usepackage{amsthm}

\theoremstyle{plain}
\newtheorem{theorem}{Theorem}[section]

\newtheorem{lemma}[theorem]{Lemma}
\newtheorem{corollary}[theorem]{Corollary}
\theoremstyle{definition}
\newtheorem{definition}[theorem]{Definition}
\newtheorem{assumption}[theorem]{Assumption}
\theoremstyle{remark}

\usepackage{multirow}

\newcommand{\E}{\mathbb{E}}
\newcommand{\Argmax}{\mathop{\mathrm{Argmax}}}

\DeclareMathOperator{\tr}{tr}
\DeclareMathOperator{\Dom}{Dom}
\DeclareMathOperator{\Ran}{Ran}
\DeclareMathOperator{\supp}{supp}

\title{Operator-Guided Invariance Learning for Continuous Reinforcement Learning}
\author{%
    Zuyuan Zhang\\
    The George Washington University\\
    \texttt{zuyuan.zhang@gwu.edu}\\
    \And
    Fei Xu Yu\\
    The George Washington University\\
    \texttt{fxyu@gwu.edu}\\
    \And
    Tian Lan\\
    The George Washington University\\
    \texttt{tlan@gwu.edu}
}

\begin{document}

\maketitle

\begin{abstract}
Reinforcement learning (RL) with continuous time and state/action spaces is often data-intensive and brittle under nuisance variability and shift, motivating methods that exploit value-preserving structures to stabilize and improve learning. Most existing approaches focus on special cases, such as prescribed symmetries and exact equivariance, without addressing how to discover more general structures that require nonlinear operators to transform and map between continuous state/action systems with isomorphic value functions. We propose \textbf{VPSD-RL} (Value-Preserving Structure Discovery for Reinforcement Learning). It models continuous RL as a controlled diffusion with value-preserving mappings defined through Lie-group actions and associated pullback operators. We show that a value-preserving structure exists exactly when pulling back the value function and pushing forward actions commute with the controlled generator and reward functional. Further, approximate value-preserving structures with rigorous guarantees can be found when the Hamilton--Jacobi--Bellman mismatch is small. This framework discovers exact and approximate value-preserving structures by searching for the associated Lie group operators. VPSD-RL fits differentiable drift, diffusion, and reward models; learns infinitesimal generators via determining-equation residual minimization; exponentiates them with ODE flows to obtain finite transformations; and integrates them into continuous RL through transition augmentation and transformation-consistency regularization. We show that bounded generator/reward mismatch implies quantitative stability of the optimal value function along approximate orbits, with sensitivity governed by the effective horizon, and observe improved data efficiency and robustness on continuous-control benchmarks.
\end{abstract}

\section{Introduction}

Reinforcement learning (RL) has achieved great success across a variety of sequential decision-making tasks, yet modern RL are known to be sample-inefficient in complex environments and brittle under nuisance variability and shifts
\citep{puterman2014markov,sutton1998reinforcement,bertsekas2012dynamic,kaelbling1996reinforcement,mnih2015human,schulman2015trust,schulman2017proximal,haarnoja2018soft,xiang2019continuous,fujimoto2018addressing,kober2013reinforcement,levine2020offline,zhang2026cochain,zhang2026geometry,zhang2026structuring,zhang2025learning}. 
A key challenge is not optimization or exploration alone but also \emph{limited exploitation of underlying structures}: if transformations exist to preserve (or nearly preserve) the optimal value function, value-learning can generalize across otherwise distinct transitions/trajectories, significantly improving consistency, reducing complexity, and stabilizing learning
\citep{dean1997model,ravindran2001symmetries,ravindran2004approximate,li2006towards}. This viewpoint subsumes prior practice such as known forms of invariance-inducing structures (like symmetry, rotation, and linear scaling) and suggests a more ambitious goal: to \emph{discover} general value-preserving structures and associated transformations--both exactly and approximately--from data and exploit them in reinforcement learning, while ensuring rigorous guarantees on the policy performance
\citep{tobin2017domain,laskin2020reinforcement,yarats2021mastering,cohen2016group,bronstein2021geometric}.

This is particularly challenging in time- and state-continuous reinforcement learning, which is often modeled as controlled diffusion, a mathematically principled model widely used 
in robotics, finance, and physical processes \citep{oksendal2013stochastic,karatzas2014brownian,fleming2006controlled,yong1999stochastic,borkar2008stochastic,kushner1990numerical}. In this model, optimality is governed by an \emph{operator}---the controlled generator (or the induced semigroup)---and by the discounted Hamilton--Jacobi--Bellman (HJB) equation in viscosity form \citep{crandall1992user,bardi1997optimal,barles1994solutions}. We define value-preserving transformations/mappings by Lie-group actions (due to the continuous spaces under consideration) and associated pullback operators. Consequently, we show that whether a transformation is ``value-preserving'' must be addressed through an operator-level question: Whether pulling back the value function (and pushing forward actions) \emph{commutes} with the controlled generator and the reward functional, exactly or approximately.

This operator-centric view provides a novel and tractable method to discover general value-preserving structures \citep{fleming2006controlled,bardi1997optimal} in time- and space continuous RL. Existing approaches only considered certain known forms and special cases, such as exact symmetry by equivariant RL \citep{cohen2016group,kondor2018generalization,weiler2019general,maron2018invariant,hutchinson2021lietransformer,bronstein2021geometric} and invariant features by representation learning \citep{ha2018world,hafner2019dream,laskin2020curl,stooke2021decoupling}. Data augmentation methods \citep{tobin2017domain,laskin2020reinforcement,yarats2021mastering} can improve learning robustness if the relations are already known. Classical isomorphism and homomorphism formalize equivalence relations in MDPs, yet 
focusing only on discrete-time and discrete-state problems \citep{dean1997model,ravindran2001symmetries,ravindran2004approximate,li2006towards}.

Our proposed approach, \textbf{VPSD-RL} (Value-Preserving Structure Discovery for Reinforcement Learning), defines candidate transformations through an exact or approximate commutation property with respect to the controlled generator and reward function, using
the discounted HJB operator \citep{fleming2006controlled,crandall1992user,bardi1997optimal}. Algorithmically, VPSD-RL: %(1) works for differentiable drift/diffusion/reward models; 
(1) learns \emph{infinitesimal} generators by minimizing determining-equation residuals; (2) exponentiates the learned vector fields via ODE flows to obtain finite transformations; and (3) injects these transformations into RL through transition augmentation and a transformation-consistency regularizer for value/policy networks \citep{olver1993applications,bluman1989symmetries,ibragimov2024crc,lee2003smooth}. A notable byproduct is that, when the discovered structure integrates to a (local) group action, it recovers classical Lie symmetries as a \emph{special case} rather than a design premise \citep{olver1993applications,bluman1989symmetries}.

Theoretically, we prove that exact value-preserving structure implies exact value invariance of the optimal value, while approximate generator/reward mismatch yields the stability bound
$
\|V^\star\circ g_\vartheta - V^\star\|_\infty
\le
\frac{1}{\beta}\Big(\varepsilon_r+\varepsilon_{\mathcal L}\|V^\star\|_{C^2}\Big).
$
We further establish end-to-end guarantees for the VPSD-RL pipeline: stochastic structure discovery converges to approximate stationary points at standard SGD rates, learned infinitesimal generators are statistically consistent under realizability, numerical flow integration has order-$h^p$ error, and exact/approximate transition augmentation respectively preserves the Bellman fixed point or perturbs it by a controlled amount. VPSD-RL demonstrates improved data efficiency and robustness on representative continuous-control benchmarks \citep{fujimoto2018addressing,haarnoja2018soft}.

\section{Preliminaries}
\label{sec:prelim}

\noindent {\bf Discounted MDPs on Continuous State--Action Spaces.}
\label{sec:prelim_cmdp}
Let $\mathcal{S}$ and $\mathcal{A}$ be Borel subsets of $\mathbb{R}^d$ and $\mathbb{R}^m$ (or, more generally,
Polish spaces equipped with their Borel $\sigma$-algebras).
A discounted Markov decision process (MDP) is
$\mathcal{M}=(\mathcal{S},\mathcal{A},P,r,\gamma)$ with $\gamma\in(0,1)$,
where $P(\cdot\mid s,a)$ is a Markov transition \emph{kernel} on $\mathcal{S}$ and
$r:\mathcal{S}\times\mathcal{A}\to\mathbb{R}$ is a measurable reward bounded by $|r(s,a)|\le R_{\max}$.
A (stationary Markov) policy $\pi(\cdot\mid s)$ is a stochastic kernel on $\mathcal{A}$ given $s\in\mathcal{S}$.
The value function of $\pi$ is $V^\pi(s)
=\mathbb{E}^\pi\!\left[\sum_{t\ge 0}\gamma^t r(s_t,a_t)\,\Big|\,s_0=s\right],
\qquad
a_t\sim\pi(\cdot\mid s_t),\ \ s_{t+1}\sim P(\cdot\mid s_t,a_t),$
and the optimal value is $V^\star(s)=\sup_{\pi}V^\pi(s)$. In finite-state MDPs, exact value-preserving correspondences are often represented by permutations or homomorphisms.
Because $\mathcal{S},\mathcal{A}$ are continuous,
%and we seek an intrinsic operator-level treatment,
we work with smooth local transformation families on $\mathcal{S}$ and $\mathcal{A}$.
Lie-group actions provide one important special case and one convenient parameterization of transformation families used in our analysis, while stronger invertibility or closure properties can recover isomorphism-type correspondences within the broader value-preserving structure considered here.

\noindent {\bf Controlled Diffusions, Semigroups, and Generators.}
\label{sec:prelim_diffusion}
Let $\mathcal{S}\subseteq\mathbb{R}^d$ and $\mathcal{A}\subseteq\mathbb{R}^m$ be open sets
(or smooth manifolds with local coordinates).
Consider infinite-horizon controlled diffusion in \emph{It\^{o}} form
\begin{equation}
\label{eq:sde_ito}
ds_t=b(s_t,a_t)\,dt+\Sigma(s_t,a_t)\,dW_t,
\end{equation}
where $W_t\in\mathbb{R}^q$ is standard Brownian motion,
$b:\mathcal{S}\times\mathcal{A}\to\mathbb{R}^d$ is the drift,
and $\Sigma:\mathcal{S}\times\mathcal{A}\to\mathbb{R}^{d\times q}$ is the diffusion coefficient.
We use the same bounded reward $r:\mathcal{S}\times\mathcal{A}\to\mathbb{R}$ and fix a discount rate $\beta>0$.
For a (possibly stochastic) Markov control $a_t\sim\pi(\cdot\mid s_t)$, define the discounted return
\begin{equation}
\label{eq:cts_value}
V^\pi(s)=\mathbb{E}^\pi\!\left[\int_0^\infty e^{-\beta t} r(s_t,a_t)\,dt\,\Big|\,s_0=s\right],
\qquad
V^\star(s)=\sup_{\pi}V^\pi(s).
\end{equation}

For each fixed action $a\in\mathcal{A}$, the associated infinitesimal generator $\mathcal{L}^a$
acts on $f\in C^2(\mathcal{S})$ by
\begin{equation}
\label{eq:generator}
(\mathcal{L}^{a} f)(s)
=b(s,a)\cdot\nabla f(s)
+\frac12\operatorname{tr}\!\Big(Q(s,a)\,\nabla^2 f(s)\Big),
\qquad
Q(s,a):=\Sigma(s,a)\Sigma(s,a)^\top.
\end{equation}
Let $(P_t^a)_{t\ge 0}$ be the Markov semigroup under constant control $a$:
\begin{equation}
\label{eq:semigroup}
(P_t^a f)(s)=\mathbb{E}\!\left[f(s_t)\ \big|\ s_0=s,\ a_u\equiv a\text{ for }u\in[0,t]\right].
\end{equation}
Intuitively, $P_t^a$ propagates functions forward in time under fixed dynamics,
while $\mathcal{L}^a$ captures the \emph{first-order infinitesimal} change:
formally, $\frac{d}{dt}P_t^a f=\mathcal{L}^a P_t^a f$ whenever the derivative is justified.

\noindent {\bf Lie-Group Actions and Pullback Operators}
\label{sec:prelim_group}
Let $G$ be a Lie group acting smoothly on the state and action spaces through maps
$g_\theta:\mathcal{S}\to\mathcal{S}$ and $h_\theta:\mathcal{A}\to\mathcal{A}$ for $\theta\in G$,
where each $g_\theta$ is a diffeomorphism and each $h_\theta$ is an invertible smooth map.
We define the pullback operator $U_{g_\theta}$ on functions $f:\mathcal{S}\to\mathbb{R}$ by
\begin{equation}
\label{eq:pullback}
(U_{g_\theta}f)(s)=f(g_\theta(s)).
\end{equation}

We emphasize that we adopt the convention $U_g f = f\circ g$ (rather than $f\circ g^{-1}$).
With this choice, $(U_g f)(s)$ means: \emph{first move the state by the symmetry $g$, then evaluate $f$}.
This is the operator that naturally appears when expressing equivariance of generators/semigroups
in pointwise form, e.g., comparing $(\mathcal{L}^{h(a)}(U_g f))(s)$ with $(U_g(\mathcal{L}^a f))(s)$.

\noindent {\bf Viscosity HJB Basics and Well-Posedness}
\label{sec:prelim_hjb}
Define the Hamiltonian operator on bounded continuous functions $V:\mathcal{S}\to\mathbb{R}$ by
\begin{equation}
\label{eq:hamiltonian}
(\mathcal{H}V)(s)=\sup_{a\in\mathcal{A}}\Big\{r(s,a)+(\mathcal{L}^a V)(s)\Big\},
\end{equation}
and consider the discounted Hamilton--Jacobi--Bellman (HJB) equation
\begin{equation}
\label{eq:hjb}
\beta V(s)=(\mathcal{H}V)(s).
\end{equation}

Under the dynamic programming principle, the optimal value $V^\star$ in~\eqref{eq:cts_value} is characterized
as the unique bounded viscosity solution of~\eqref{eq:hjb}.
We use viscosity solutions because $V^\star$ need not be $C^2$ even when the coefficients are smooth;
viscosity theory provides a robust notion of solution and a comparison principle that yields uniqueness.

\begin{theorem}[Existence and uniqueness of the optimal value]
\label{thm:hjb_unique}
Under the HJB well-posedness conditions stated in Assumption~\ref{ass:hjb}, the HJB equation~\eqref{eq:hjb} admits a unique bounded viscosity solution.
Moreover, this solution coincides with the optimal value function $V^\star$ defined in~\eqref{eq:cts_value}.
\end{theorem}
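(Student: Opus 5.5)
The plan follows the standard route of Fleming--Soner and Crandall--Ishii--Lions. We show that $V^\star$ is a bounded viscosity solution via the dynamic programming principle, then prove a comparison principle; uniqueness follows, and so does the identification with $V^\star$. Assumption~\ref{ass:hjb} is taken to supply the usual hypotheses. These are: $b,\Sigma$ bounded and Lipschitz in $s$ uniformly in $a$, and continuous in $a$; $\mathcal{A}$ compact, or the supremum in \eqref{eq:hamiltonian} attained or approximable uniformly; $r$ bounded and uniformly continuous; and $\beta>0$.

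\emph{Step 1 (regularity of $V^\star$).} Boundedness is immediate: $|V^\star|\le R_{\max}/\beta$. Continuity comes from standard SDE estimates. For two initial states under the same control, $\mathbb{E}\sup_{u\le t}|s_u-s_u'|^2\le C e^{Ct}|s-s'|^2$. We split the integral at a horizon $T$: the tail beyond $T$ is bounded by $2R_{\max}e^{-\beta T}/\beta$, and on $[0,T]$ we use the uniform continuity of $r$. Together these give uniform continuity of $V^\star$.

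\emph{Step 2 (DPP and the viscosity property).} We establish the dynamic programming principle
\begin{equation*}
V^\star(s)=\sup_{\pi}\mathbb{E}\Big[\int_0^\tau e^{-\beta t}r(s_t,a_t)\,dt+e^{-\beta\tau}V^\star(s_\tau)\Big]
\end{equation*}
for small deterministic $\tau$. For this we work with relaxed or progressively measurable controls, so that the supremum over Markov policies is attained in the limit, and use a measurable-selection or $\varepsilon$-optimal patching argument for the "$\ge$" direction.

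Given the DPP, take $\varphi\in C^2$ touching $V^\star$ from above at $s_0$. Using the constant control $a$ and applying Itô's formula to $e^{-\beta t}\varphi(s_t)$ gives
\begin{equation*}
0\ge \frac1\tau\,\mathbb{E}\int_0^\tau e^{-\beta t}\big(r+\mathcal{L}^a\varphi-\beta\varphi\big)(s_t)\,dt .
\end{equation*}
Letting $\tau\to0$ yields $\beta\varphi(s_0)\ge r(s_0,a)+\mathcal{L}^a\varphi(s_0)$ for every $a$, so $V^\star$ is a subsolution.

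The supersolution inequality goes the other way. Suppose the inequality failed at a test function touching from below. Then there would be a neighbourhood and a $\delta>0$ on which $\beta\varphi-\sup_a\{r+\mathcal{L}^a\varphi\}\le-\delta$. Applying Itô's formula up to the exit time from that ball, for $\varepsilon$-optimal controls, produces a contradiction with the DPP.

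\emph{Step 3 (comparison), the main obstacle.} Let $u$ be a bounded upper semicontinuous subsolution and $v$ a bounded lower semicontinuous supersolution. We must show $u\le v$. On unbounded $\mathcal{S}$ we first add a penalization $-\eta\langle s\rangle$ to force maxima to be attained; the Lipschitz growth of $b,\Sigma$ controls the error term as $\eta\to0$. We then double variables with $\Phi(s,y)=u(s)-v(y)-\frac{\alpha}{2}|s-y|^2$ and apply Ishii's lemma to obtain elements $(p,X)$ and $(p,Y)$ of the second-order semijets satisfying the standard matrix inequality.

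Subtracting the two viscosity inequalities bounds $\beta(u(\hat s)-v(\hat y))$ by three terms. The drift term is at most $\alpha L|\hat s-\hat y|^2$. The diffusion term, $\tfrac12\operatorname{tr}(Q(\hat s,a)X)-\tfrac12\operatorname{tr}(Q(\hat y,a)Y)$, is at most $3\alpha L^2|\hat s-\hat y|^2$, obtained by testing the matrix inequality with $(\Sigma(\hat s,a),\Sigma(\hat y,a))$. The reward term is $\omega_r(|\hat s-\hat y|)$. All three bounds hold uniformly in $a$, which is why uniform Lipschitz continuity is needed. Since $\alpha|\hat s-\hat y|^2\to0$, we get $\beta\sup(u-v)\le0$.

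\emph{Step 4 (conclusion).} Comparison gives uniqueness among bounded viscosity solutions, and Steps 1--2 show that $V^\star$ is one of them. Hence the unique solution coincides with $V^\star$. Existence also follows directly from Step 2, so Perron's method is not needed.
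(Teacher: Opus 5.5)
There is a genuine gap in Step~2: the subsolution half of the viscosity property is never proved. In your first paragraph, $\varphi$ touches $V^\star$ from above and you use a constant control $a$. With a fixed control, the DPP only gives the lower bound $V^\star(s_0)\ge\mathbb{E}\bigl[\int_0^\tau e^{-\beta t}r(s_t,a)\,dt+e^{-\beta\tau}V^\star(s_\tau)\bigr]$. Replacing $V^\star(s_\tau)$ by $\varphi(s_\tau)$ there requires $V^\star\ge\varphi$, i.e.\ touching from \emph{below}; with $V^\star\le\varphi$ the two inequalities point in opposite directions and nothing follows.

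Your displayed estimate, and its conclusion $\beta\varphi(s_0)\ge(\mathcal{H}\varphi)(s_0)$, are valid at a local \emph{minimum} of $V^\star-\varphi$, and that is the supersolution inequality. Your second paragraph then targets the same inequality again, since its failure is written as $\beta\varphi-\sup_a\{r+\mathcal{L}^a\varphi\}\le-\delta$. So the statement a subsolution actually needs never appears: $\beta\varphi(s_0)\le(\mathcal{H}\varphi)(s_0)$ whenever $V^\star-\varphi$ has a local maximum at $s_0$. In that second paragraph, $\varepsilon$-optimal controls are also the wrong tool. With $\varphi\le V^\star$ you need a constant action nearly maximizing the Hamiltonian at $s_0$, combined with the DPP lower bound, which is just your first computation again.

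The missing half is exactly where $\varepsilon$-optimal controls and uniformity in the action are needed, and it is the first part of the paper's proof. At a local maximum with $V^\star\le\varphi$ on $\overline{B_\rho(s_0)}$:
\begin{itemize}
\item Take $\tau=t\wedge\tau_\rho$; you need the DPP at this stopping time, not only at deterministic $\tau$.
\item Choose $\pi^\varepsilon$ with $V^\star(s_0)\le\mathbb{E}^{\pi^\varepsilon}\bigl[\int_0^\tau e^{-\beta u}r(s_u,a_u)\,du+e^{-\beta\tau}V^\star(s_\tau)\bigr]+\varepsilon t$.
\item Now $V^\star(s_\tau)\le\varphi(s_\tau)$ can be inserted and It\^o's formula applied.
\item The integrand $r(s_u,a_u)+(\mathcal{L}^{a_u}\varphi)(s_u)-\beta\varphi(s_u)$ is bounded by $\Psi(s_u):=(\mathcal{H}\varphi)(s_u)-\beta\varphi(s_u)$, whatever non-constant actions $\pi^\varepsilon$ uses.
\item Dividing by $t$, letting $t\downarrow0$ and then $\varepsilon\downarrow0$ gives $\Psi(s_0)\ge0$.
\end{itemize}
Your contradiction format also works here. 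If $\Psi\le-\delta$ near $s_0$, the same chain gives $V^\star(s_0)\le\varphi(s_0)-\delta\,\mathbb{E}^{\pi^\varepsilon}\int_0^\tau e^{-\beta u}\,du+\varepsilon$. This is contradicted for small $\varepsilon$, because bounded coefficients give a lower bound on $\mathbb{E}\int_0^\tau e^{-\beta u}\,du$ that is uniform over controls.

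With that repair, the rest of your plan is sound and goes beyond the paper. The paper does not prove continuity of $V^\star$ (your Step~1). It also does not prove comparison, taking it as Assumption~\ref{ass:hjb}(iv), whereas your Ishii-lemma argument makes uniqueness self-contained for $\mathcal{S}=\mathbb{R}^d$. For the bounded-domain alternative in Assumption~\ref{ass:hjb}(iii) you would need boundary handling, so there invoking (iv) is the simpler route.
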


This well-posedness is the formal anchor for our value-preservation analysis:
once the transformed controlled generator and reward leave the HJB operator unchanged, uniqueness forces the optimal value function to inherit the corresponding value-preserving relation.
Subsequent sections use this uniqueness-to-value-preservation logic repeatedly.

\section{Value-Preserving Structure and Reduction for Controlled Diffusion}
\label{sec:lie_sym}

We characterize \emph{exact} value-preserving structure of controlled diffusions at the level of the controlled generator (equivalently, the Markov semigroup), which directly links to the discounted HJB equation and avoids discretization dependence.
Differentiating operator equivariance along one-parameter transformation families yields the \emph{determining equations}, i.e., PDE constraints on drift, diffusion, and reward.
This structure induces an intrinsic reduction via invariants, leading to value invariance along transformation orbits and structure-consistent optimal policies under mild conditions.

\subsection{Exact Value-Preserving Structure}
\label{sec:exact_equiv}

\begin{definition}[Exact value-preserving structure]
\label{def:exact_sym}
Let \(G\) be a Lie group acting smoothly on states and actions via
\((g_\vartheta,h_\vartheta)\), \(\vartheta\in G\). We say
\((g_\vartheta,h_\vartheta)\) is an exact value-preserving structure of
the controlled diffusion if, for all \(\vartheta\in G\), all \(a\in A\),
and all \(f\in C^2(S)\),
\begin{equation}
\label{eq:gen_equiv}
L_a(f\circ g_\vartheta)
=
(L_{h_\vartheta(a)}f)\circ g_\vartheta,
\qquad\text{equivalently}\qquad
L_a U_{g_\vartheta}=U_{g_\vartheta}L_{h_\vartheta(a)},
\end{equation}
and
\begin{equation}
\label{eq:reward_inv}
r(g_\vartheta(s),h_\vartheta(a))=r(s,a),
\qquad \forall (s,a)\in S\times A .
\end{equation}
\end{definition}

Equation~\eqref{eq:gen_equiv} is the generator-level value-preservation condition:
evolving after transforming $(s,a)$ is equivalent, as seen through test functions, to transforming the outcome of evolution.
In classical group-symmetry settings, this operator-commutation condition coincides with generator equivariance; here we use it as a criterion for value preservation.
Crucially, since $\mathcal{L}^a$ depends on the diffusion only through $Q=\Sigma\Sigma^\top$,
the definition is intrinsic to the induced Markov process and does not depend on a particular factorization $\Sigma$.

\begin{lemma}[Semigroup form of exact value preservation]
\label{lem:semigroup_equiv}
Assume for each \(a\in A\) that \((P^a_t)_{t\ge 0}\) is a strongly continuous Markov semigroup on a Banach space \(\mathcal F\), and \(L^a\) is its generator with domain \(\mathrm{Dom}(L^a)\supseteq C^2(S)\cap\mathcal F\). Then Eq.~\ref{eq:gen_equiv} implies, for all \(t\ge 0\),
\begin{equation}
\label{eq:semigroup_equiv}
P^a_t U_{g_\vartheta}=U_{g_\vartheta}P^{h_\vartheta(a)}_t .
% P_t^{h_\vartheta(a)}U_{g_\vartheta}=U_{g_\vartheta}P_t^a.
\end{equation}
Conversely, if this semigroup identity holds for all \(t\ge 0\) and all \(f\in\mathrm{Dom}(L^{h_\vartheta(a)})\), then Eq.~\ref{eq:gen_equiv} holds on the corresponding generator domain.
\end{lemma}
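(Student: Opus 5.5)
The plan is the standard ``intertwining of generators implies intertwining of semigroups'' argument: a Duhamel-type interpolation. Fix $\vartheta\in G$ and $a\in A$, and write $U:=U_{g_\vartheta}$ and $b:=h_\vartheta(a)$. I will assume what the statement implicitly needs: $U$ is a bounded operator on $\mathcal F$ (for the sup norm on bounded functions it is an isometry, since $g_\vartheta$ is a diffeomorphism), and $U$ maps $\mathrm{Dom}(L^{b})$ into $\mathrm{Dom}(L^{a})$ with $L^aU f=UL^{b}f$. The second condition is Eq.~\ref{eq:gen_equiv} extended from $C^2(S)\cap\mathcal F$ to the full domain. I would obtain it either by assuming $C^2(S)\cap\mathcal F$ is a core for $L^{b}$ and closing the graph, or by stating it as part of the hypothesis.

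\textbf{Forward direction.} For $f\in\mathrm{Dom}(L^{b})$ and fixed $t>0$, define $\Phi(u):=P^a_{t-u}\,U\,P^{b}_u f$ for $u\in[0,t]$. Since $P^b_u f\in\mathrm{Dom}(L^b)$, we have $\frac{d}{du}P^b_uf=L^bP^b_uf$ in norm. Moreover, $U$ is bounded and carries $\mathrm{Dom}(L^b)$ into $\mathrm{Dom}(L^a)$, and $P^a_{t-u}$ is uniformly bounded and strongly continuous. The product rule for strongly continuous semigroups then gives
\[
\Phi'(u)=-P^a_{t-u}L^aUP^b_uf+P^a_{t-u}UL^bP^b_uf .
\]
This vanishes by the intertwining relation. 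Hence $\Phi(0)=\Phi(t)$, that is, $P^a_tUf=UP^b_tf$.

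Both sides of this identity are bounded operators. Since $\mathrm{Dom}(L^b)$ is dense, the identity extends to all of $\mathcal F$ and yields Eq.~\ref{eq:semigroup_equiv}. The main obstacle is making the differentiation of $\Phi$ rigorous. One needs joint continuity in $u$ together with the difference quotient split
\[
\frac{P^a_{t-u-\delta}X_{u+\delta}-P^a_{t-u}X_u}{\delta},\qquad X_u:=UP^b_uf,
\]
and the two pieces are handled by local boundedness of $\|P^a_s\|$ and by $X_u\in\mathrm{Dom}(L^a)$. The domain condition on $U$ is exactly where the hypothesis $\mathrm{Dom}(L^a)\supseteq C^2(S)\cap\mathcal F$ is used. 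It is also where an extra core assumption may be needed, so I would state that assumption explicitly.

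\textbf{Converse.} Suppose $P^a_tUf=UP^b_tf$ for all $t\ge0$ and $f\in\mathrm{Dom}(L^b)$. Then
\[
\frac{P^a_tUf-Uf}{t}=U\,\frac{P^b_tf-f}{t}\;\longrightarrow\;UL^bf
\]
as $t\downarrow0$, using boundedness of $U$. So the left-hand limit exists, which means $Uf\in\mathrm{Dom}(L^a)$ and $L^aUf=UL^bf$. This is Eq.~\ref{eq:gen_equiv} on the generator domain, and in particular on $C^2(S)\cap\mathcal F$.
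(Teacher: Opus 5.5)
Your proposal is correct and follows essentially the paper's route. Your interpolant $\Phi(u)=P^a_{t-u}\,U\,P^{b}_u f$ is exactly the function $\phi(s)=S_{t-s}w(s)$ in the paper's uniqueness lemma for the abstract Cauchy problem, which the paper applies to $w(t)=U\tilde T_t f$; you then use the same density extension and the same difference-quotient converse. Your explicit assumptions are genuinely needed: that $U$ is bounded on $\mathcal F$, and that the intertwining extends from $C^2(S)\cap\mathcal F$ to all of $\mathrm{Dom}(L^{h_\vartheta(a)})$ (for example, via a core). The paper uses both silently, by taking the intertwining on the whole generator domain from the start.
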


The semigroup identity~\eqref{eq:semigroup_equiv} compares the full time-$t$ evolution under constant control.
The generator form~\eqref{eq:gen_equiv} is the infinitesimal version that is algebraically convenient
for deriving determining equations and for interfacing with the HJB operator.

\subsection{Determining Equations: From Global Value Preservation to Local PDE Constraints}
\label{sec:determining}

To obtain explicit value-preservation constraints on $(b,Q,r)$, we differentiate the operator identity
in Definition~\ref{def:exact_sym} along a one-parameter subgroup.
This yields linear PDE conditions, the determining equations, on the infinitesimal generators $(X,Y)$.

Consider a one-parameter subgroup $\{\vartheta_\epsilon\}_{\epsilon\in\mathbb{R}}\subseteq G$
with $\vartheta_0=e$ (the identity), and define the infinitesimal generators
\begin{equation}
\label{eq:XY_def}
X(s)=\left.\frac{d}{d\epsilon}g_{\vartheta_\epsilon}(s)\right|_{\epsilon=0}\in T_s\mathcal{S}\cong\mathbb{R}^d,
\qquad
Y(a)=\left.\frac{d}{d\epsilon}h_{\vartheta_\epsilon}(a)\right|_{\epsilon=0}\in T_a\mathcal{A}\cong\mathbb{R}^m.
\end{equation}

\begin{theorem}[Determining equations (necessity, It\^o form under $U_g f=f\circ g$)]
\label{thm:determining_necessity}
Assume $b,Q,r$ are differentiable and $(g_\vartheta,h_\vartheta)$ is an exact value-preserving structure
in the sense of Definition~\ref{def:exact_sym}.
Then for all $(s,a)\in\mathcal{S}\times\mathcal{A}$,
\begin{align}
\label{eq:det_reward}
&X(s)\cdot\nabla_s r(s,a)+Y(a)\cdot\nabla_a r(s,a)=0,\\
\label{eq:det_drift_ito}
&\nabla_s b(s,a)\,X(s)-(\nabla_s X(s))\,b(s,a)\;+\;\nabla_a b(s,a)\,Y(a)\;-\;\frac12\,\Delta_{Q(s,a)}X(s)=0,\\
\label{eq:det_diffQ_ito}
&\nabla_s Q(s,a)[X(s)]\;-\;(\nabla_s X(s))\,Q(s,a)\;-\;Q(s,a)\,(\nabla_s X(s))^\top
\;+\;\nabla_a Q(s,a)[Y(a)]=0,
\end{align}
where $\nabla_s Q(s,a)[X]$ denotes the directional derivative of the matrix field $Q(\cdot,a)$ at $s$
along direction $X(s)$, similarly for $\nabla_a Q(s,a)[Y]$, and
$
\Delta_{Q}X:=\bigl(\mathrm{tr}(Q\,\nabla_s^2 X_1),\ldots,\mathrm{tr}(Q\,\nabla_s^2 X_d)\bigr)^\top,
\quad
\mathrm{tr}(Q\,\nabla_s^2 X_k)=\sum_{i,j=1}^d Q_{ij}\,\partial_{ij}X_k.
$
In particular, if $g_{\vartheta_\epsilon}$ is affine in $s$ (equivalently $\nabla_s^2 X\equiv 0$), then the It\^o correction
$\Delta_Q X$ vanishes and~\eqref{eq:det_drift_ito} reduces to the first-order Lie-bracket form.
\end{theorem}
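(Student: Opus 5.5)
The plan is to differentiate the two identities of Definition~\ref{def:exact_sym} along the one-parameter subgroup $\vartheta_\epsilon$ at $\epsilon=0$ and read off coefficients. For the reward this is immediate: differentiating $r(g_{\vartheta_\epsilon}(s),h_{\vartheta_\epsilon}(a))=r(s,a)$ in $\epsilon$ at $\epsilon=0$ and using the chain rule together with \eqref{eq:XY_def} gives \eqref{eq:det_reward}.

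For the generator identity~\eqref{eq:gen_equiv} I will first convert it into pointwise coefficient identities. Write $g=g_\vartheta$, $J=\nabla_s g$, and expand both sides for an arbitrary $f\in C^2$. The left side is
\[
b(s,a)\cdot J^\top\nabla f(g(s))+\tfrac12\operatorname{tr}\bigl(Q(s,a)\,[J^\top\nabla^2 f(g(s))J+\textstyle\sum_k \nabla f_k(g(s))\,\nabla^2 g_k(s)]\bigr).
\]
The right side is
\[
b(g(s),h(a))\cdot\nabla f(g(s))+\tfrac12\operatorname{tr}\bigl(Q(g(s),h(a))\nabla^2 f(g(s))\bigr).
\]
Because $f$ is arbitrary, I can choose $f$ with prescribed gradient and prescribed symmetric Hessian at the single point $g(s)$, using local quadratic polynomials times a cutoff. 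This separates the identity into two parts. The first-order part is $J b(s,a)+\tfrac12\Delta_{Q(s,a)}g=b(g(s),h(a))$. The second-order part is $JQ(s,a)J^\top=Q(g(s),h(a))$. These hold for all $\vartheta$ in the subgroup.

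Next I substitute $\vartheta=\vartheta_\epsilon$ and differentiate at $\epsilon=0$, using $g_{\vartheta_0}=\mathrm{id}$, $J|_{\epsilon=0}=I$ and $\partial_\epsilon J|_{\epsilon=0}=\nabla_s X$. Differentiating the second-order identity gives $(\nabla_sX)Q+Q(\nabla_sX)^\top=\nabla_sQ[X]+\nabla_aQ[Y]$, which is \eqref{eq:det_diffQ_ito}. Differentiating the first-order identity gives $(\nabla_sX)b+\tfrac12\Delta_QX=\nabla_sb\,X+\nabla_ab\,Y$, which is \eqref{eq:det_drift_ito}. In the second step I use that $\nabla^2 g_{\vartheta_0}=0$, so only $\partial_\epsilon\nabla^2 g=\nabla^2X$ survives in the product rule. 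The affine remark follows at once, since $\nabla^2X\equiv0$ kills $\Delta_QX$.

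The main obstacle is a regularity issue. Interchanging $\partial_\epsilon$ with $\nabla_s$ and $\nabla_s^2$ of $g_{\vartheta_\epsilon}$ needs $g$ to be jointly $C^1$ in $\epsilon$ and $C^2$ in $s$. This is automatic for a smooth Lie group action, so I will simply invoke smoothness of the action. The other technical point is to justify the test-function separation locally on the open set $\mathcal S$, where the bump construction suffices.

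I also have to be careful with the orientation convention: the left side of the generator identity is evaluated at $(s,a)$ and the right side at the transformed point $(g(s),h(a))$. Getting this wrong would flip the signs of the $X,Y$ terms relative to the $\nabla_sX$ terms. I will keep the statement's convention and verify the signs at the linearization step.
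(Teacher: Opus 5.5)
Your proposal is correct and follows the paper's proof essentially step for step. You differentiate reward invariance, expand both sides of the generator identity by the chain rule, and use local quadratic test functions to split the identity into the finite transport relations $Q(g(s),h(a))=JQ(s,a)J^\top$ and $b(g(s),h(a))=Jb(s,a)+\tfrac12 c$, which you then linearize at $\epsilon=0$. Your explicit remark that swapping $\partial_\epsilon$ with $\nabla_s$ and $\nabla_s^2$ relies on joint smoothness of the Lie group action makes precise a point the paper leaves implicit.
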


Our value-preserving definition is posed at the level of the generator~\eqref{eq:gen_equiv},
which depends on the diffusion only through $Q=\Sigma\Sigma^\top$.
Therefore, the intrinsic necessary condition is~\eqref{eq:det_diffQ_ito}.
A condition written directly for $\Sigma$ would be \emph{strictly stronger} and generally not necessary,
because different $\Sigma$ factorizations can induce the same $Q$ and hence the same generator.% Our symmetry 

\begin{theorem}[Local sufficiency]
\label{thm:local_sufficiency}
Assume $b,Q,r$ are smooth.
Let $X$ and $Y$ be smooth vector fields such that
\eqref{eq:det_reward}, \eqref{eq:det_drift_ito}, and \eqref{eq:det_diffQ_ito} hold on an open set
$\Omega\subseteq\mathcal{S}\times\mathcal{A}$.
Assume the flows $g_\epsilon$ and $h_\epsilon$ generated by the ODEs
$\frac{d}{d\epsilon}g_\epsilon(s)=X(g_\epsilon(s))$ and
$\frac{d}{d\epsilon}h_\epsilon(a)=Y(h_\epsilon(a))$
exist and are diffeomorphisms for $|\epsilon|<\epsilon_0$.
Then $(g_\epsilon,h_\epsilon)$ is an exact value-preserving structure on $\Omega$ for all $|\epsilon|<\epsilon_0$.
\end{theorem}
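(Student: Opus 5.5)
The plan is to run the standard Lie-theoretic argument: define the defects of the generator identity and the reward identity along the flow, show they satisfy linear first-order transport (ODE) equations in $\epsilon$, and conclude that they vanish because they vanish at $\epsilon=0$. For $f\in C^2$ and fixed $a$, set
\[
D_\epsilon f(s):=\mathcal{L}^{a}(f\circ g_\epsilon)(s)-(\mathcal{L}^{h_\epsilon(a)}f)(g_\epsilon(s)),\qquad \rho_\epsilon(s,a):=r(g_\epsilon(s),h_\epsilon(a))-r(s,a).
\]
The reward part comes first. Differentiating $\rho_\epsilon$ and using the group property $g_{\epsilon+\delta}=g_\delta\circ g_\epsilon$ gives
\[
\tfrac{d}{d\epsilon}\rho_\epsilon(s,a)=\bigl(X\cdot\nabla_s r+Y\cdot\nabla_a r\bigr)(g_\epsilon(s),h_\epsilon(a)),
\]
which vanishes by \eqref{eq:det_reward} as long as the orbit point stays in $\Omega$. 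Since $\rho_0=0$, this yields \eqref{eq:reward_inv}. The statement "on $\Omega$" will be read as requiring that the orbits $(g_\epsilon(s),h_\epsilon(a))$ remain in $\Omega$ for $|\epsilon|<\epsilon_0$, with $\Omega$ taken invariant or $\epsilon_0$ shrunk; I will state this explicitly.

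Next I rewrite the generator identity coefficientwise. By the chain rule,
\[
\nabla(f\circ g)=(\nabla g)^\top \nabla f\circ g,\qquad \nabla^2(f\circ g)=(\nabla g)^\top(\nabla^2 f\circ g)(\nabla g)+\textstyle\sum_k (\partial_k f\circ g)\,\nabla^2 g_k .
\]
Because $f$ is arbitrary, $D_\epsilon\equiv 0$ is equivalent to two pointwise identities, obtained by matching the first- and second-order coefficients:
\begin{align*}
&(\nabla g_\epsilon)\,b(s,a)+\tfrac12\Delta_{Q(s,a)}g_\epsilon(s)=b(g_\epsilon(s),h_\epsilon(a)),\\
&(\nabla g_\epsilon)\,Q(s,a)\,(\nabla g_\epsilon)^\top=Q(g_\epsilon(s),h_\epsilon(a)).
\end{align*}
In other words, the pushforward of the diffusion under $(g_\epsilon,h_\epsilon)$ has drift $b$ and diffusion $Q$ evaluated at the transformed point. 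Define the defects
\[
B_\epsilon:=(\nabla g_\epsilon) b+\tfrac12\Delta_Q g_\epsilon-b\circ(g_\epsilon,h_\epsilon),\qquad M_\epsilon:=(\nabla g_\epsilon)Q(\nabla g_\epsilon)^\top-Q\circ(g_\epsilon,h_\epsilon).
\]
Both vanish at $\epsilon=0$.

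The key step is to differentiate these defects in $\epsilon$. I will use the flow relation in the form $g_{\epsilon+\delta}=g_\delta\circ g_\epsilon$, so that the $\epsilon$-derivative is the infinitesimal action of $(X,Y)$ applied at the current image point. For $M_\epsilon$, the chain rule gives
\[
\partial_\epsilon M_\epsilon=(\nabla X\circ g_\epsilon)\,M_\epsilon+M_\epsilon\,(\nabla X\circ g_\epsilon)^\top-\mathcal{E}_Q\circ(g_\epsilon,h_\epsilon),
\]
where $\mathcal{E}_Q$ is the left-hand side of \eqref{eq:det_diffQ_ito} and vanishes. This is a linear ODE in $M_\epsilon$ with $M_0=0$, so $M_\epsilon\equiv0$. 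For $B_\epsilon$ the computation is the same in spirit but involves the second derivatives $\nabla^2 g_\epsilon$. The cleanest route is to first prove the identity $\partial_\epsilon D_\epsilon f=D_\epsilon(Xf)$-type for the operator defect, in which the Itô term $\tfrac12\Delta_Q X$ combines with the $\nabla^2 g$ term. Using \eqref{eq:det_drift_ito} together with $M_\epsilon=0$ (needed so that the $Q$-weighted Hessian terms transform consistently), I expect to obtain
\[
\partial_\epsilon B_\epsilon=(\nabla X\circ g_\epsilon)\,B_\epsilon,
\]
again linear with zero initial data, hence $B_\epsilon\equiv0$. Uniqueness for these linear ODEs follows from the smoothness of $X$, $Y$, $b$, $Q$ and the continuity of the flow on $|\epsilon|<\epsilon_0$.

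The main obstacle is this drift-defect computation. The Itô correction couples first and second derivatives of $g_\epsilon$, and one must show that the cross terms close up into $\mathcal{E}_b\circ(g_\epsilon,h_\epsilon)$ plus terms proportional to $B_\epsilon$ and $M_\epsilon$. My first attempt will be to avoid coordinates by working at the operator level. Set $\Phi_\epsilon:=U_{g_{-\epsilon}}\mathcal{L}^{a}U_{g_\epsilon}$ and compare it with $\mathcal{L}^{h_\epsilon(a)}$. Then $\partial_\epsilon\Phi_\epsilon=U_{g_{-\epsilon}}[\mathcal{L}^a,X\cdot\nabla]U_{g_\epsilon}$, while
\[
\partial_\epsilon \mathcal{L}^{h_\epsilon(a)}=\mathcal{L}'_{Y}\big|_{h_\epsilon(a)},
\]
the derivative of the generator along $Y$ in the action argument. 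The determining equations \eqref{eq:det_drift_ito}--\eqref{eq:det_diffQ_ito} state exactly that $[X\cdot\nabla,\mathcal{L}^{a}]+\mathcal{L}'_Y=0$ as second-order operators; the $\tfrac12\Delta_Q X$ term is precisely the first-order part of this commutator. With that identity, $\Phi_\epsilon$ and $\mathcal{L}^{h_\epsilon(a)}$ satisfy the same linear evolution in the space of second-order differential operators with smooth coefficients and agree at $\epsilon=0$, so they coincide. Finally, \eqref{eq:gen_equiv} follows for all $f\in C^2$ by approximating with smooth functions.
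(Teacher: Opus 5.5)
Your proposal is correct, and its skeleton matches the paper's proof. Reward invariance comes from differentiating along the flow. The diffusion identity $Q(g_\epsilon(s),h_\epsilon(a))=J_\epsilon Q(s,a)J_\epsilon^\top$ comes from uniqueness for a linear ODE in $\epsilon$; the paper shows both sides solve the same ODE with the same initial value, and your homogeneous equation for $M_\epsilon$ is the same argument. The conclusion then follows from the chain-rule expansion of $\mathcal{L}^a(f\circ g_\epsilon)$.

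You go beyond the paper on the drift identity. The paper only asserts it (``similarly, using the drift equation and the Hessian evolution of $g_\epsilon$''). Your anticipated defect equation does hold, with exactly the coupling you suspected. From $\partial_\epsilon c_\epsilon=(\nabla X\circ g_\epsilon)c_\epsilon+\Delta_{J_\epsilon Q(s,a)J_\epsilon^\top}X\circ g_\epsilon$ one gets $\partial_\epsilon B_\epsilon=(\nabla X\circ g_\epsilon)B_\epsilon+\tfrac12\Delta_{M_\epsilon}X\circ g_\epsilon-\mathcal{E}_b\circ(g_\epsilon,h_\epsilon)$, where $\mathcal{E}_b$ is the left side of the drift determining equation. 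So $M_\epsilon\equiv 0$ must be established first, and $B_\epsilon\equiv 0$ then follows.

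Your operator-level variant is a genuinely different, coordinate-free route. Reading the two determining equations as the single identity $[X\cdot\nabla,\mathcal{L}^{a}]+\mathcal{L}'_Y|_{a}=0$ removes all Hessian bookkeeping. It also shows why the It\^o term $-\tfrac12\Delta_Q X$ must appear: it arises in the first-order part of the commutator.

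One step there needs tightening. The map $L\mapsto[L,X\cdot\nabla]$ differentiates coefficients along $X$, so ``same linear evolution, same initial value'' is a transport equation, not an ODE. Used as an operator identity, it would also need the determining equations at all points $(x,h_\epsilon(a))$, not just along the orbit. Both issues disappear if you differentiate the conjugate directly. For $f\in C^3$,
$\tfrac{d}{d\epsilon}\bigl(U_{g_\epsilon}\mathcal{L}^{h_\epsilon(a)}U_{g_{-\epsilon}}f\bigr)(s)=\bigl(([X\cdot\nabla,\mathcal{L}^{h_\epsilon(a)}]+\mathcal{L}'_Y|_{h_\epsilon(a)})(f\circ g_{-\epsilon})\bigr)(g_\epsilon(s))$.
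This vanishes using only the determining equations at $(g_\epsilon(s),h_\epsilon(a))$. Hence $U_{g_\epsilon}\mathcal{L}^{h_\epsilon(a)}U_{g_{-\epsilon}}=\mathcal{L}^a$, i.e.\ $\mathcal{L}^aU_{g_\epsilon}=U_{g_\epsilon}\mathcal{L}^{h_\epsilon(a)}$. Passing to $f\in C^2$ is immediate, since both sides depend only on the $2$-jet of $f$ at $g_\epsilon(s)$.

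The coordinate route gives explicit finite transport formulas for $(b,Q)$, the same ones the paper's necessity proof derives. The operator route gives brevity and handles the ``orbit stays in $\Omega$'' issue one orbit at a time.
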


Theorem~\ref{thm:determining_necessity} converts a \emph{global} value-preserving operator identity
into \emph{local} PDE constraints on the coefficients.
Theorem~\ref{thm:local_sufficiency} states the converse: if the PDE constraints hold and the induced flows exist,
then integrating the infinitesimal generators recovers local value preservation.
Together, they justify treating value-preserving structure discovery as solving the determining equations.

\subsection{Intrinsic Reduction by Invariants and Constraints}
\label{sec:reduction}

\paragraph{From value-preserving directions to reduced coordinates.}
Once value-preserving directions are characterized by vector fields $X_1,\dots,X_K$,
we can reduce the effective state dimension by passing to \emph{invariant coordinates} that are constant
along transformation orbits. This gives a coordinate-free description of the orbit space without explicitly
constructing the quotient $\mathcal{S}/G$.

Let $X_1,\dots,X_K$ be linearly independent infinitesimal state generators and define the distribution
\begin{equation}
\label{eq:distribution}
\mathcal{D}(s)=\mathrm{span}\{X_1(s),\dots,X_K(s)\}\subseteq T_s\mathcal{S}.
\end{equation}
A scalar function $I:\mathcal{S}\to\mathbb{R}$ is an invariant if it is constant along $\mathcal{D}$:
\begin{equation}
\label{eq:invariant_pde}
X_k(s)\cdot\nabla I(s)=0,\qquad k=1,\dots,K.
\end{equation}

\begin{theorem}[Frobenius reduction and invariant coordinates]
\label{thm:frobenius}
Assume $\mathcal{D}$ is a smooth distribution of \emph{constant rank} $K$ on a neighborhood,
and it is involutive: $[X_i,X_j](s)\in\mathcal{D}(s)$ for all $i,j$ and all $s$ in that neighborhood.
Then locally there exist $(d-K)$ functionally independent invariants $I_1,\dots,I_{d-K}$
whose joint map $I(s)=(I_1(s),\dots,I_{d-K}(s))$ is constant on the integral leaves of $\mathcal{D}$.
In particular, the local orbit space can be represented intrinsically by the invariant coordinates $z=I(s)$,
without explicitly constructing a quotient $\mathcal{S}/G$.
\end{theorem}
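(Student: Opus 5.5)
The plan is to reduce the statement to the classical Frobenius theorem and then use the resulting flat coordinates. Since $\mathcal{D}$ is smooth, of constant rank $K$ and involutive near a point $s_0$, Frobenius gives a chart $\phi=(y_1,\dots,y_K,z_1,\dots,z_{d-K})$ on a neighborhood $W$ of $s_0$ with two properties. First, $\mathcal{D}=\mathrm{span}\{\partial_{y_1},\dots,\partial_{y_K}\}$ on $W$. Second, the integral leaves of $\mathcal{D}$ in $W$ are exactly the slices $\{z=\text{const}\}$. I will define $I_j:=z_j\circ\phi$ for $j=1,\dots,d-K$ and verify the claims for these functions.

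For the construction itself, if I want to avoid quoting Frobenius as a black box, I would proceed in two steps. First, replace $X_1,\dots,X_K$ by a locally equivalent frame that pairwise commutes. Choose coordinates in which $X_1,\dots,X_K$ are transverse to the last $d-K$ coordinate directions. Write $X_i=\sum_k c_{ik}\partial_k$ and use the invertible $K\times K$ block of $(c_{ik})$ to produce $\tilde X_i=\partial_i+\sum_{k>K}\tilde c_{ik}\partial_k$. Involutivity forces $[\tilde X_i,\tilde X_j]\in\mathcal{D}$. Because this bracket has no $\partial_1,\dots,\partial_K$ components, it must vanish. Second, straighten the frame by the map $(t,z)\mapsto \exp(t_1\tilde X_1)\circ\cdots\circ\exp(t_K\tilde X_K)(0,z)$. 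Commutativity of the flows together with the inverse function theorem make this map a local diffeomorphism that sends $\partial_{t_i}$ to $\tilde X_i$. This is the step I expect to be the main obstacle. It needs the commuting-frame normalization and the fact that commuting vector fields have commuting flows, and it is where involutivity is genuinely used. For a paper at this level I will cite Frobenius (e.g.\ Lee's \emph{Introduction to Smooth Manifolds}) rather than reprove it.

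Next I check the invariance PDE \eqref{eq:invariant_pde}. Each $X_k$ lies in $\mathcal{D}$, so in the straightened coordinates $X_k=\sum_{i\le K}a_{ki}(y,z)\,\partial_{y_i}$. Hence $X_k\cdot\nabla I_j=X_k(z_j)=0$, and each $I_j$ is constant along every integral curve of every $X_k$. A piecewise-integral-curve argument then shows $I$ is constant on the connected leaves $\{z=c\}$.

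For functional independence, $dI_1,\dots,dI_{d-K}$ are the differentials $dz_1,\dots,dz_{d-K}$ of chart coordinates. They are therefore linearly independent at every point of $W$, so the Jacobian of $I$ has rank $d-K$.

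Finally I show that $z=I(s)$ represents the local orbit space. Any smooth local invariant $F$ satisfies $\partial_{y_i}F=0$, since $\partial_{y_i}$ lies in the span of the $X_k$. On a product-shaped neighborhood this gives $F=\tilde F(z)$. So every invariant factors through $I$, and $I$ separates distinct leaves in $W$. The map $I$ therefore identifies the local leaf space with an open subset of $\mathbb{R}^{d-K}$, without forming $\mathcal{S}/G$.
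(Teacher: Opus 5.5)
Your proposal is correct, but it takes a different route from the paper. You get a flat chart from the classical Frobenius theorem, via the normalized frame $\tilde X_i=\partial_i+\sum_{k>K}\tilde c_{ik}\partial_k$ (whose brackets are forced to vanish) and the composed-flow map, and then take the transverse chart coordinates as the invariants. The paper never builds a flat chart for all of $\mathcal{D}$. It straightens only $X_1$ using the single-field flow-box theorem and passes to the transversal $\Sigma=\{y^1=0\}$ with projected fields $\bar X_j=X_j|_\Sigma-(X_jy^1)\partial_{y^1}$. Induction on the rank then gives $d-K$ invariants of $\bar{\mathcal D}$ on $\Sigma$, which it extends independently of $y^1$. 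Finally it propagates $X_jI_\alpha=0$ off $\Sigma$ through the linear ODE $\partial_{y^1}(X_jI_\alpha)=\sum_{\ell\ge 2}c_{1j}^\ell X_\ell I_\alpha$, which comes from bracket closure.

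What each route buys:
\begin{itemize}
\item The paper's route needs only flow-box and linear-ODE uniqueness, and it constructs the invariants directly as first integrals, with no commuting frames or simultaneous flows.
\item Your route makes invariance and functional independence immediate. Your last paragraph also proves that every local invariant factors through $I$ and that $I$ separates plaques. That is the actual content of the ``local orbit space'' clause, which the paper leaves implicit.
\end{itemize}

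Your route also avoids a delicate step in the paper's induction. The paper justifies involutivity of $\bar{\mathcal D}$ by claiming that the tangential part of $[X_i,X_j]|_\Sigma$ equals $[\bar X_i,\bar X_j]$. This fails in general because the $X_j$ are not tangent to $\Sigma$. For example, take $X_1=\partial_{y^1}$, $X_2=e^{y^1}\partial_{y^2}$, $X_3=\partial_{y^3}+\partial_{y^1}$. Then $[X_2,X_3]|_\Sigma=-\partial_{y^2}$, while $[\bar X_2,\bar X_3]=0$. The conclusion still holds once one first replaces $X_j$ by $X_j-(X_jy^1)X_1$, which is tangent to every slice $y^1=\text{const}$, before restricting. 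That is a one-field version of your $\tilde X_i$ normalization.
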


% \paragraph{Post-constraint reduction.}
Let the feasible set be $\mathcal{S}_{\mathrm{feas}}=\{s:\ c(s)=0,\ g(s)\le 0\}$ for smooth constraints $(c,g)$.
A sufficient local condition for the value-preserving directions to preserve feasibility is tangency at feasible points:
\begin{equation}
\label{eq:tangent_constraints}
\nabla c(s)\,X_k(s)=0,\qquad
\nabla g_\ell(s)\,X_k(s)\le 0\ \text{on active constraints }(g_\ell(s)=0),\qquad \forall k,\ell.
\end{equation}
Under~\eqref{eq:tangent_constraints}, the flow of each $X_k$ stays in $\mathcal{S}_{\mathrm{feas}}$ for small time,
so one can solve~\eqref{eq:invariant_pde} restricted to $\mathcal{S}_{\mathrm{feas}}$ to obtain invariant coordinates
for the post-constraint state space.

\subsection{Value Invariance and Structure-Consistent Policies}
\label{sec:value_inv}

\paragraph{From value preservation to control structure.}
We now connect the generator-level value-preserving condition to consequences for optimal control.
The key step is that exact value preservation leaves the HJB operator unchanged:
reward compatibility and the generator-level condition imply
$\mathcal{H}(U_{g_\vartheta}V)=U_{g_\vartheta}(\mathcal{H}V)$,
where invertibility of $h_\vartheta$ ensures the supremum over actions is preserved.
Then, by uniqueness of bounded viscosity solutions (Theorem~\ref{thm:hjb_unique}),
the optimal value must be invariant along the transformation orbit.

\begin{theorem}[Exact value-preserving structure implies value invariance]
\label{thm:value_invariance}
Assume Assumption~\ref{ass:hjb} and let $(g_\vartheta,h_\vartheta)$ satisfy the exact structure conditions of Definition~\ref{def:exact_sym}.
Then the optimal value satisfies
\begin{equation}
\label{eq:value_invariance}
V^\star(g_\vartheta(s))=V^\star(s),\qquad \forall \vartheta\in G,\ \forall s\in\mathcal{S}.
\end{equation}
Moreover, if $I$ is a complete invariant coordinate map as in Theorem~\ref{thm:frobenius},
then locally there exists $\bar V^\star$ such that $V^\star(s)=\bar V^\star(I(s))$.
\end{theorem}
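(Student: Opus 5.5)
The plan is to show that $W:=V^\star\circ g_\vartheta=U_{g_\vartheta}V^\star$ is again a bounded viscosity solution of the HJB equation~\eqref{eq:hjb}, and then invoke uniqueness from Theorem~\ref{thm:hjb_unique} to conclude $W=V^\star$. Boundedness and continuity of $W$ are immediate, since $g_\vartheta$ is a diffeomorphism and $V^\star$ is bounded and continuous.

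\textbf{Step 1 (operator commutation on smooth functions).} For $\phi\in C^2(\mathcal{S})$ and fixed $s$, use~\eqref{eq:gen_equiv} and~\eqref{eq:reward_inv} to write
\[
r(s,a)+\mathcal{L}^a(\phi\circ g_\vartheta)(s)=r(g_\vartheta(s),h_\vartheta(a))+(\mathcal{L}^{h_\vartheta(a)}\phi)(g_\vartheta(s)).
\]
Take the supremum over $a\in\mathcal{A}$. Since $h_\vartheta$ is a bijection of $\mathcal{A}$, the variable $a'=h_\vartheta(a)$ ranges over all of $\mathcal{A}$. This gives $\mathcal{H}(U_{g_\vartheta}\phi)=U_{g_\vartheta}(\mathcal{H}\phi)$ pointwise.

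\textbf{Step 2 (transfer to viscosity solutions).} This is the main obstacle, because $V^\star$ need not be $C^2$ and Step 1 cannot be applied to it directly. Instead I will use test functions. Suppose $\phi\in C^2$ touches $W$ from above at $s_0$, meaning $W-\phi$ has a local maximum at $s_0$. Set $\psi:=\phi\circ g_\vartheta^{-1}$, which is $C^2$ because $g_\vartheta$ is a diffeomorphism. Then $V^\star-\psi=(W-\phi)\circ g_\vartheta^{-1}$ has a local maximum at $y_0=g_\vartheta(s_0)$. The subsolution property of $V^\star$ gives $\beta V^\star(y_0)\le(\mathcal{H}\psi)(y_0)$. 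By Step 1 applied to $\psi$, and using $U_{g_\vartheta}\psi=\phi$, we get $(\mathcal{H}\psi)(g_\vartheta(s_0))=(\mathcal{H}\phi)(s_0)$. Hence $\beta W(s_0)\le(\mathcal{H}\phi)(s_0)$. The supersolution inequality follows by the symmetric argument with minima.

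Two points need care in this step. First, the identity~\eqref{eq:gen_equiv} must hold pointwise at the specific point $g_\vartheta(s_0)$ for $C^2$ test functions; this is how Definition~\ref{def:exact_sym} is stated, so it is fine. Second, the supremum in $\mathcal{H}$ must be finite or handled consistently, which falls under Assumption~\ref{ass:hjb}. Having both inequalities, uniqueness yields $V^\star\circ g_\vartheta=V^\star$ for every $\vartheta\in G$, which is~\eqref{eq:value_invariance}.

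\textbf{Step 3 (reduced value function).} For the second claim, work in the neighborhood given by Theorem~\ref{thm:frobenius}. Each one-parameter family $g_{\vartheta_\epsilon}$ leaves $V^\star$ invariant, so $V^\star$ is constant along the flow of each generator $X_k$. It is therefore constant on the integral leaves of $\mathcal{D}$, with no differentiability of $V^\star$ needed, since only constancy along flow lines is used.

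Now choose local flat coordinates $(z,w)$ in which the leaves are $\{z=\text{const}\}$ with $z=I(s)$; these exist by the Frobenius theorem, using the constant-rank hypothesis. Leaves in a sufficiently small flow box are connected. So $V^\star$ depends only on $z$, and we define $\bar V^\star(z):=V^\star(s)$ for any $s$ in the box with $I(s)=z$, which gives $V^\star=\bar V^\star\circ I$ locally. The only subtlety is restricting to a flow box, so that each level set of $I$ is a single connected plaque.
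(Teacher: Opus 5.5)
Your proof is correct. It follows the paper's overall strategy: show that the Hamiltonian commutes with the pullback $U_{g_\vartheta}$ (via reward invariance, the generator identity, and bijectivity of $h_\vartheta$), conclude that $V^\star\circ g_\vartheta$ solves the same HJB equation, and invoke uniqueness from Theorem~\ref{thm:hjb_unique}. The two technical steps differ, though.

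For the first claim, the paper's transport lemma applies the generator identity directly with $f=u=V^\star$. That is only legitimate when $V^\star\in C^2$, and the viscosity case is mentioned just parenthetically. Your Step 2 supplies the missing argument. You transport test functions via $\psi=\phi\circ g_\vartheta^{-1}$ and use $U_{g_\vartheta}\psi=\phi$, which turns the smooth-function identity of Step 1 into the sub- and supersolution inequalities for $W$. This uses that $g_\vartheta$ is a $C^2$ diffeomorphism, which the paper's pointwise argument never needs. It is exactly what makes the non-smooth case, the one the paper itself emphasizes, rigorous.

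For the second claim, the paper defines a ``complete'' invariant map by requiring that $I(s_1)=I(s_2)$ forces $s_2=g_\vartheta(s_1)$. Well-definedness of $\bar V^\star$ is then immediate from invariance. You instead derive the factorization from the Frobenius flow-box structure. This avoids that orbit-separation hypothesis, which is introduced ad hoc in the paper's proof. In exchange, it requires that the $X_k$ spanning $\mathcal D$ be infinitesimal generators of the given action, so that their flows are the maps $g_{\vartheta_\epsilon}$. It also needs a short argument you only gesture at:
\begin{itemize}
\item The map $(t_1,\dots,t_K)\mapsto$ (composition of the $X_k$-flows applied to $s$) has rank $K$ at $0$, so it covers a neighborhood of $s$ in its plaque.
\item Hence flow-reachability classes are open in the connected plaque, and $V^\star$ is constant on it.
\item To identify plaques with level sets of the \emph{given} $I$, rather than some other flat coordinate, note that $\mathcal D\subseteq\ker dI$ and both have rank $K$. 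So the connected slices $\{I=c\}$ of a box chart completing $I$ are integral manifolds of $\mathcal D$.
\end{itemize}
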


Value invariance already implies that the control problem effectively lives on the reduced coordinates $z=I(s)$.
To make this operational at the policy level, one often seeks a structure-consistent optimal selector.

\begin{corollary}[Orbit-consistent optimal actions (measurable selector form)]
\label{cor:policy_structure}
In addition to Assumption~\ref{ass:hjb}, assume that for each $s$ the maximizer set
$
\Argmax(s):=\arg\max_{a\in\mathcal{A}}\{\,r(s,a)+(\mathcal{L}^a V^\star)(s)\,\}
$
is nonempty and admits a measurable selector $a^\star(s)\in\Argmax(s)$.
Then for any $\vartheta\in G$,
$
a\in \Argmax(s)\quad \Longrightarrow \quad h_\vartheta(a)\in \Argmax(g_\vartheta(s)).
$
In particular, one can choose an optimal selector satisfying the structure-consistency relation
$a^\star(g_\vartheta(s))=h_\vartheta(a^\star(s))$ on any neighborhood where such a selector exists.
\end{corollary}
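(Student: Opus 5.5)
The plan is to show that the objective being maximized at the transformed point, evaluated at the transformed action, equals the objective at the original point and action. Invertibility of $h_\vartheta$ then transfers maximizers between the two points.

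Fix $\vartheta\in G$ and $s\in\mathcal{S}$, and define
\[
\Phi(s,a):=r(s,a)+(\mathcal{L}^a V^\star)(s).
\]
The key identity to establish is
\begin{equation*}
\Phi(g_\vartheta(s),h_\vartheta(a))=\Phi(s,a)\qquad\forall a\in\mathcal{A}.
\end{equation*}
The reward part is immediate from \eqref{eq:reward_inv}. For the generator part, use Theorem~\ref{thm:value_invariance}, which gives $V^\star\circ g_\vartheta=V^\star$. Then apply \eqref{eq:gen_equiv} with $f=V^\star$:
\[
(\mathcal{L}^a V^\star)(s)=\mathcal{L}^a(V^\star\circ g_\vartheta)(s)=(\mathcal{L}^{h_\vartheta(a)}V^\star)(g_\vartheta(s)).
\]
This needs $V^\star\in C^2$, or at least that $\mathcal{L}^aV^\star$ is defined pointwise. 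That is implicit in the hypothesis, since $\Argmax(s)$ is stated with $\mathcal{L}^aV^\star$ evaluated classically. I will state this as the standing interpretation. If only viscosity regularity is available, one can restrict to points where $V^\star$ is twice differentiable, or work with test functions touching $V^\star$. I expect this regularity bookkeeping to be the main (and really the only) delicate point.

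Given the identity, take $a\in\Argmax(s)$ and any $a'\in\mathcal{A}$. Since $h_\vartheta$ is a bijection of $\mathcal{A}$, write $a'=h_\vartheta(b)$ with $b=h_\vartheta^{-1}(a')$. Then
\[
\Phi(g_\vartheta(s),a')=\Phi(s,b)\le\Phi(s,a)=\Phi(g_\vartheta(s),h_\vartheta(a)),
\]
so $h_\vartheta(a)\in\Argmax(g_\vartheta(s))$. Applying the same argument with $\vartheta^{-1}$ gives equality of sets: $h_\vartheta(\Argmax(s))=\Argmax(g_\vartheta(s))$. Here I use that the action is a group action, so $g_{\vartheta^{-1}}=g_\vartheta^{-1}$ and $h_{\vartheta^{-1}}=h_\vartheta^{-1}$.

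For the selector statement, on a neighborhood where a measurable cross-section of the orbits exists, pick $a^\star$ on the cross-section using the given measurable selector. Then define $a^\star(g_\vartheta(s_0)):=h_\vartheta(a^\star(s_0))$ for $s_0$ in the cross-section. This is consistent when $\vartheta$ acts freely, or more generally when the chosen value is fixed by the stabilizer, which is exactly the proviso ``on any neighborhood where such a selector exists.'' By the set equality above, the definition lands in $\Argmax$, so the resulting selector is optimal and satisfies the structure-consistency relation by construction.
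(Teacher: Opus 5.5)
Your proposal is correct and follows essentially the same route as the paper: the paper's Lemma~\ref{lem:ham_invariance} proves exactly your identity $\Phi(g_\vartheta(s),h_\vartheta(a))=\Phi(s,a)$ from reward invariance, the generator intertwining applied at $f=V^\star$, and the invariance $V^\star\circ g_\vartheta=V^\star$. It then transfers maximizers through the bijection $h_\vartheta$ just as you do. Your extras go beyond the paper: the reverse inclusion via $\vartheta^{-1}$, the explicit $C^2$ caveat (which the paper leaves implicit), and the cross-section construction with the stabilizer condition. For the final sentence, the paper only remarks that an equivariant selector, if one exists, is automatically optimal because it selects from $\Argmax(\cdot)$.
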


A concrete planar-rotation instance, together with its visualization, is provided in Appendix~\ref{app:running_example}.

\section{Approximate Value-Preserving Structure and Stability of the Optimal Value}
\label{sec:approx_sym}

Exact value-preserving structure implies exact value invariance along transformation orbits (Section~\ref{sec:lie_sym}).
Here we define \emph{approximate} value-preserving structure by bounding the controlled-generator and reward mismatch over a test-function class, and show that it yields a corresponding \emph{approximate} value-invariance bound.
The error scales linearly with the mismatch and is amplified by $1/\beta$, reflecting increased sensitivity under weaker discounting.

\subsection{Approximate Value-Preserving Structure as Bounded Operator Mismatch}
\label{sec:approx_def}

Exact value-preserving equalities may be too strict in data-driven or partially observed systems.
We therefore quantify value preservation through bounded mismatch of the transformed controlled generator and reward.

\begin{definition}[$(\varepsilon_{\mathcal{L}},\varepsilon_r)$-approximate value-preserving structure]
\label{def:eps_sym}
Fix a test-function class $\mathcal{F}\subseteq C^2(\mathcal{S})$ and the (standard) $C^2$ norm
$
\|f\|_{C^2}:=\|f\|_\infty+\|\nabla f\|_\infty+\|\nabla^2 f\|_\infty.
$
We say $(g_\vartheta,h_\vartheta)$ is an $(\varepsilon_{\mathcal{L}},\varepsilon_r)$-approximate value-preserving structure
if for all $\vartheta\in G$, all $a\in\mathcal{A}$, and all $f\in\mathcal{F}$,
\begin{equation}
\label{eq:eps_gen_equiv}
\left\|
L_a(f\circ g_\vartheta)
-
(L_{h_\vartheta(a)}f)\circ g_\vartheta
\right\|_\infty
\le
\epsilon_L\|f\|_{C^2}.
\end{equation}
and for all $(s,a)\in\mathcal{S}\times\mathcal{A}$,
\begin{equation}
\label{eq:eps_reward}
\big|r(g_\vartheta(s),h_\vartheta(a))-r(s,a)\big|\le \varepsilon_r.
\end{equation}
\end{definition}

Condition~\eqref{eq:eps_gen_equiv} measures how far the transformed dynamics deviate from exact generator-level value preservation
\emph{as seen through} a function class $\mathcal{F}$.
The scaling by $\|f\|_{C^2}$ is natural because $\mathcal{L}^a f$ involves first and second derivatives of $f$
through $b\cdot\nabla f$ and $\mathrm{tr}(Q\nabla^2 f)$.
Condition~\eqref{eq:eps_reward} separately controls violations of reward compatibility.

\subsection{From Approximate Value Preservation to Approximate Value Invariance}
\label{sec:approx_value}

Under exact value preservation, reward compatibility and the generator-level value-preserving condition imply that the HJB operator
$\mathcal{H}$ commutes with pullbacks: $\mathcal{H}(U_{g_\vartheta}V)=U_{g_\vartheta}(\mathcal{H}V)$.
With approximate value preservation, we obtain an \emph{inequality} controlling the HJB commutator
$\mathcal{H}U_{g_\vartheta}-U_{g_\vartheta}\mathcal{H}$, which then translates into a bound on
$V^\star\circ g_\vartheta - V^\star$ via the discounted comparison principle.

\begin{theorem}[Approximate value-preserving structure implies approximate value invariance]
\label{thm:approx_value}
Assume Assumption~\ref{ass:hjb}.
Let $(g_\vartheta,h_\vartheta)$ be an $(\varepsilon_{\mathcal{L}},\varepsilon_r)$-approximate value-preserving structure
in the sense of Definition~\ref{def:eps_sym}.
Assume additionally that $V^\star\in C^2(\mathcal{S})\cap \mathcal{F}$.
Then for all $\vartheta\in G$,
\begin{equation}
\label{eq:approx_value}
\|V^\star\circ g_\vartheta - V^\star\|_\infty
\le \frac{1}{\beta}\Big(\varepsilon_r+\varepsilon_{\mathcal{L}}\|V^\star\|_{C^2}\Big).
\end{equation}
\end{theorem}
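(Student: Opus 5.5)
Our plan is to show that $W:=V^\star\circ g_\vartheta$ is an approximate solution of the HJB equation~\eqref{eq:hjb}, with residual at most $\delta:=\varepsilon_r+\varepsilon_{\mathcal L}\|V^\star\|_{C^2}$. We then convert this residual bound into a sup-norm bound using the discounted comparison principle that underlies Theorem~\ref{thm:hjb_unique}. Since $V^\star\in C^2(\mathcal S)$ is then a classical solution, we may work pointwise. Because $g_\vartheta$ is a diffeomorphism, $W\in C^2$ as well.

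\textbf{Step 1 (commutator bound for $\mathcal H$).} Fix $s$. For each $a\in\mathcal A$, the reward bound~\eqref{eq:eps_reward} and the generator mismatch~\eqref{eq:eps_gen_equiv} applied with $f=V^\star\in\mathcal F$ give
\[
\Bigl|\bigl[r(s,a)+(\mathcal L^a W)(s)\bigr]-\bigl[r(g_\vartheta(s),h_\vartheta(a))+(\mathcal L^{h_\vartheta(a)}V^\star)(g_\vartheta(s))\bigr]\Bigr|\le \delta .
\]
Take the supremum over $a$. Since $h_\vartheta$ is a bijection of $\mathcal A$, the supremum of the second bracket over $a$ equals $\sup_{a'}\{r(g_\vartheta(s),a')+(\mathcal L^{a'}V^\star)(g_\vartheta(s))\}=(\mathcal H V^\star)(g_\vartheta(s))$. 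Because the difference of two suprema is bounded by the supremum of the differences, $|(\mathcal H W)(s)-(\mathcal HV^\star)(g_\vartheta(s))|\le\delta$. Using $\beta V^\star=\mathcal HV^\star$ at the point $g_\vartheta(s)$, we obtain $|\beta W(s)-(\mathcal HW)(s)|\le\delta$ for all $s$.

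\textbf{Step 2 (comparison).} Set $W^\pm:=W\pm\delta/\beta$. Since $\mathcal L^a$ annihilates constants, $\mathcal H(W+c)=\mathcal HW+c$ for any constant $c$. Hence
\[
\beta W^+-\mathcal H W^+=\beta W-\mathcal HW+\delta\ge 0,
\]
so $W^+$ is a classical, and hence viscosity, supersolution. By the same computation $W^-$ is a subsolution. Both are bounded, because $V^\star$ is bounded. The comparison principle for the discounted HJB equation under Assumption~\ref{ass:hjb} (the one yielding uniqueness in Theorem~\ref{thm:hjb_unique}) then gives $W^-\le V^\star\le W^+$. This is exactly~\eqref{eq:approx_value}.

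\textbf{Main obstacle.} The delicate point is Step 2: we need the comparison principle in a form that compares a bounded sub-/supersolution with the solution, not merely uniqueness. We would state it as the standard consequence of Assumption~\ref{ass:hjb}. If only uniqueness is available, we would instead argue directly at a near-maximum point $s_0$ of $W-V^\star$, where the gradients coincide and the Hessians are ordered. Then, since $Q\succeq 0$, $\mathcal HW(s_0)\le \mathcal HV^\star(s_0)$ up to a vanishing error. Combined with Step 1, this gives $\beta\sup(W-V^\star)\le\delta$, and by symmetry the same bound holds for $\sup(V^\star-W)$. On an unbounded $\mathcal S$, the maximum may not be attained. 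In that case we would use a small penalization $\eta\,(1+|s|^2)^{1/2}$ and let $\eta\to0$, exploiting the growth bounds on $b,Q$ from the assumption.
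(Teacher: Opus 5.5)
Your proof is correct. Step 1 is exactly the paper's argument: evaluate the HJB at $g_\vartheta(s)$, reparametrize the supremum through the bijection $h_\vartheta$, apply \eqref{eq:eps_reward} and \eqref{eq:eps_gen_equiv} with $f=V^\star$, and conclude $|\beta W-\mathcal H W|\le\delta$ pointwise.

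Step 2 is where you differ. The paper does not invoke Assumption~\ref{ass:hjb}(iv) at this point. Instead it proves a separate residual-comparison lemma via a classical penalized maximum principle. It maximizes $W-V^\star-\eta\|s-s_\eta\|^2$. At the maximizer it uses the first- and second-order conditions, boundedness of $b,Q$, and $Q\succeq 0$ to get $\mathcal H W\le \mathcal H V^\star + C\eta$. It then lets $\eta\downarrow 0$. This is essentially your fallback.

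Your route works differently. Since $\mathcal L^a$ annihilates constants, $W^\pm=W\pm\delta/\beta$ are exact classical (hence viscosity) super/subsolutions, and Assumption~\ref{ass:hjb}(iv) finishes the argument. What this buys you:
\begin{itemize}
\item It is shorter.
\item It outsources the delicate points (where the penalized maximum is attained, unbounded domains or bounded ones with boundary conditions) to the comparison hypothesis.
\item In Step 2 it only needs $V^\star$ to be a viscosity solution. The $C^2$ regularity is used solely to apply \eqref{eq:eps_gen_equiv} at $f=V^\star$.
\end{itemize}
The paper's route is more self-contained: it uses only (i)--(iii) and elementary calculus. However, it needs both functions to be $C^2$, and it glosses over attainment of the maximum.

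Your ``main obstacle'' is not actually an obstacle. Assumption~\ref{ass:hjb}(iv) is stated precisely as comparison between bounded viscosity sub- and supersolutions, and the paper uses it in exactly that form in the proof of Theorem~\ref{thm:hjb_unique}.

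One slip needs fixing. Because $\mathcal L^a c=0$ for constants $c$, the correct identity is $\mathcal H(W+c)=\mathcal H W$, not $\mathcal H W + c$. Your displayed computation $\beta W^+-\mathcal H W^+=\beta W-\mathcal H W+\delta$ in fact uses the correct identity, so the conclusion stands.
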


The bound~\eqref{eq:approx_value} is intentionally stated in a strong-regularity regime ($V^\star\in C^2$)
so that the generator mismatch in~\eqref{eq:eps_gen_equiv} can be evaluated on $V^\star$ directly.
In general, $V^\star$ may only be a viscosity solution; extending~\eqref{eq:approx_value} to that setting
typically requires replacing $\|V^\star\|_{C^2}$ by an appropriate smooth approximation argument
(or a stability theorem for viscosity solutions under perturbations of the Hamiltonian).

\section{Our VPSD-RL Algorithm}
\label{sec:learning_algo}

To develop VPSD-RL, we learn infinitesimal generators $(X,Y)$ by minimizing empirical determining-equation residuals from differentiable environment models, exponentiate them via ODE flows to obtain finite transforms $(g_\alpha,h_\alpha)$, and use these transforms for RL via (i) transition augmentation and (ii) transformation-consistency regularization.
Algorithm~\ref{alg:main} in Appendix gives the full procedure.

\noindent {\bf Learning Infinitesimal Generators from Data}
\label{sec:learn_infinitesimal}
Let $\hat b_\omega,\hat\Sigma_\omega,\hat r_\omega$ be differentiable parametric models of $(b,\Sigma,r)$.
Since the controlled generator depends on the diffusion only through $Q=\Sigma\Sigma^\top$,
we work with the induced diffusion matrix model
$
\hat Q_\omega(s,a):=\hat\Sigma_\omega(s,a)\hat\Sigma_\omega(s,a)^\top \in \mathbb{R}^{d\times d}.
$
We parameterize infinitesimal generators by $X_\theta:\mathcal{S}\to\mathbb{R}^d$ and
$Y_\phi:\mathcal{A}\to\mathbb{R}^m$.

Motivated by Theorem~\ref{thm:determining_necessity}, define residuals
\begin{align}
\label{eq:residuals}
R_r(s,a)
&:= X_\theta(s)\cdot\nabla_s \hat r_\omega(s,a)+Y_\phi(a)\cdot\nabla_a \hat r_\omega(s,a),\\
R_b(s,a)
&
:=
\nabla_s\hat b_\omega(s,a)X_\theta(s)
-
(\nabla_sX_\theta(s))\hat b_\omega(s,a)
+
\nabla_a\hat b_\omega(s,a)Y_\phi(a)
-
\frac{1}{2}\Delta_{\hat Q_\omega(s,a)}X_\theta(s).
\\
R_Q(s,a)
&:= \nabla_s \hat Q_\omega(s,a)[X_\theta(s)]
\\&-(\nabla_s X_\theta(s))\,\hat Q_\omega(s,a)-\hat Q_\omega(s,a)(\nabla_s X_\theta(s))^\top
+\nabla_a \hat Q_\omega(s,a)[Y_\phi(a)].
\end{align}
Here $\nabla_s \hat Q_\omega(s,a)[X]$ denotes the directional derivative of the matrix field $\hat Q_\omega(\cdot,a)$
at $s$ along direction $X_\theta(s)$, and similarly for $\nabla_a\hat Q_\omega(s,a)[Y]$. In the experiments, the generator class is restricted to affine or locally linear vector fields, so the It\^o correction term \(\Delta_{\hat Q_\omega}X_\theta\) vanishes. For general nonlinear generator classes, \(R_b\) should include the additional \(-\frac12\Delta_{\hat Q_\omega}X_\theta\) term from Theorem~\ref{thm:determining_necessity}.

We minimize a weighted squared residual loss over a replay distribution $\rho(s,a)$:
\begin{equation}
\label{eq:sym_loss}
\min_{\theta,\phi}\ 
\mathbb{E}_{(s,a)\sim\rho}\Big[
\|R_b(s,a)\|_2^2+\lambda_Q\|R_Q(s,a)\|_F^2+\lambda_r|R_r(s,a)|^2
\Big]
\;+\;
\lambda_{\mathrm{nrm}}\Big(\mathbb{E}_{s\sim\rho_S}\|X_\theta(s)\|_2^2-1\Big)^2,
\end{equation}
where $\rho_S$ is the state marginal of $\rho$.
The normalization penalty prevents the trivial zero-field solution and fixes the overall scale of $X_\theta$.
(When learning multiple generators, one may additionally impose orthonormality constraints
$\mathbb{E}[X_i\cdot X_j]=\delta_{ij}$ to fix a basis of the symmetry algebra.)

% \paragraph{Connection to approximate operator equivariance.}
The residuals in~\eqref{eq:residuals} enforce the \emph{infinitesimal} determining equations.
After exponentiation (Section~\ref{sec:flow_integration}), small residuals imply that the finite-step transforms
$(g_\alpha,h_\alpha)$ yield small generator mismatch for small $|\alpha|$ (up to model and numerical integration errors),
thereby providing a concrete path from minimizing~\eqref{eq:sym_loss} to reducing
$(\varepsilon_{\mathcal{L}},\varepsilon_r)$ in Definition~\ref{def:eps_sym}.

\noindent {\bf Exponentiation via Flow Integration}
\label{sec:flow_integration}
%
% \paragraph{From vector fields to finite transformations.}
Given learned infinitesimal generators $(X_\theta,Y_\phi)$, we obtain finite transformations by integrating
their flows (i.e., exponentiating in the Lie sense):
\begin{equation}
\label{eq:flow}
\frac{d}{d\alpha}g_\alpha(s)=X_\theta(g_\alpha(s)),\quad g_0(s)=s,
\qquad
\frac{d}{d\alpha}h_\alpha(a)=Y_\phi(h_\alpha(a)),\quad h_0(a)=a.
\end{equation}
In practice, we integrate~\eqref{eq:flow} numerically for small $|\alpha|$.
If $\mathcal{S}$ or $\mathcal{A}$ is constrained (e.g., box constraints),
we apply a projection/retraction after each solver step to keep the iterates feasible.

\noindent {\bf Value-Preserving Reinforcement Learning Updates.}
\label{sec:sym_aware_rl}

Once a finite transformation $(g_\alpha,h_\alpha)$ is available, we use it in RL in two complementary ways.
First, transition augmentation maps collected samples $(s,a,s',r)$ to transformed samples
$(g_\alpha(s),h_\alpha(a),g_\alpha(s'),r)$, allowing the replay buffer to share experience along discovered value-preserving orbits.
Second, transformation-consistency regularization encourages the learned value and policy networks to make compatible predictions on transformed state-action pairs.
Under exact value preservation these operations are structure-preserving data-sharing steps; under approximate value preservation their effect is controlled by the mismatch bounds in Section~\ref{sec:approx_sym}.
Detailed losses and implementation choices are deferred to Appendix~\ref{app:vp_rl_updates}.

Algorithm~\ref{alg:main} in Appendix alternates among (i) data collection with the current policy,
(ii) optional environment model fitting to obtain differentiable $\hat b_\omega,\hat Q_\omega,\hat r_\omega$,
(iii) value-preserving structure discovery by minimizing~\eqref{eq:sym_loss} using automatic differentiation, and
(iv) RL updates using augmentation and/or transformation-consistency regularization with the induced finite transforms.

% ============================================================
\section{Guarantees of VPSD-RL}
\label{sec:guarantees_summary}

We establish an end-to-end guarantees for VPSD-RL which connects (i) learning decision-preserving
infinitesimal generators from data, (ii) exponentiating them into finite transformations, and
(iii) injecting these transformations into RL.
The full formal statements (assumptions, theorems, and proof sketches) are provided in Appendix~\ref{app:guarantees_full}.

\textbf{Optimization guarantee.}
Theorem~\ref{thm:sgd_stationary} shows that stochastic optimization of the structure-discovery objective reaches approximate stationary points under standard smoothness and bounded-variance conditions:
$
\frac{1}{T}\sum_{t=0}^{T-1}
\mathbb{E}\big[\|\nabla \mathcal{L}_{\mathrm{sym}}(\theta_t,\phi_t)\|_2^2\big]
\le
\frac{2\big(\mathcal{L}_{\mathrm{sym}}(\theta_0,\phi_0)-\mathcal{L}_{\inf}\big)}{\eta T}
+\eta L\sigma^2,
$
and diminishing stepsizes give the standard $\mathcal{O}(1/\sqrt{T})$ stationarity rate.

\textbf{Statistical consistency.}
Theorem~\ref{thm:consistency} states that, under realizability and consistent environment-model estimation, learned generators converge in $\rho$-mean to solutions of the determining equations.
Consequently, the induced generator and reward mismatches satisfy $\varepsilon_{\mathcal L}\to 0$ and $\varepsilon_r\to 0$ on the data support.

\textbf{Exact augmentation.}
Theorem~\ref{thm:exact_aug_preserve} gives a discrete-time contraction analogue: if the augmentation is exactly value-preserving, Bellman backups computed from augmented samples preserve the same optimal Bellman operator, so tabular value iteration keeps the usual convergence rate
$\|V_k-V^\star\|_\infty\le \gamma^k\|V_0-V^\star\|_\infty$.

\textbf{Approximate augmentation.}
Theorem~\ref{thm:approx_vi} shows that if augmentation is only $(\varepsilon_P,\varepsilon_r)$-approximate, the perturbed Bellman operator remains a $\gamma$-contraction and its fixed point satisfies
$
\|\tilde V^\star - V^\star\|_\infty
\le
\frac{1}{1-\gamma}
\left(\varepsilon_r+\gamma\,\varepsilon_P\|V^\star\|_\infty\right).
$
The numerical flow approximation guarantee, which bounds $\|\hat g_\alpha-g_\alpha\|$ and $\|\hat h_\alpha-h_\alpha\|$ by order $h^p$, is stated in Theorem~\ref{thm:ode_error} in the appendix.

\section{Experiments}
\label{sec:experiments}

Our experiments provide supporting evidence for the value-preserving structure theory in Sections~\ref{sec:lie_sym}--\ref{sec:guarantees_summary}.
We evaluate whether (i) infinitesimal generators can be recovered from data through determining-equation residual minimization,
(ii) the induced finite transformations yield small operator mismatch and value non-invariance,
and (iii) using the discovered transformations improves sample efficiency and robustness in continuous-control RL.
We report episodic return as a function of environment steps, final performance over the last evaluation window, area under the learning curve (AUC) up to a fixed budget, and wall-clock cost.
For interpretable systems with known ground-truth generators, we additionally report residual statistics
$\mathbb{E}\|R_b\|_2^2$, $\mathbb{E}\|R_Q\|_F^2$, $\mathbb{E}|R_r|^2$,
generator alignment, and empirical value non-invariance
$\mathbb{E}_{s,\alpha}|V(s)-V(g_\alpha(s))|$.
Full environment descriptions, hyperparameters, ablations, and all extended curves are deferred to Appendix~\ref{app:full_experiments}.

\begin{figure}[t]
  \centering
  \begin{subfigure}[t]{0.24\textwidth}
    \centering
    \includegraphics[width=\linewidth]{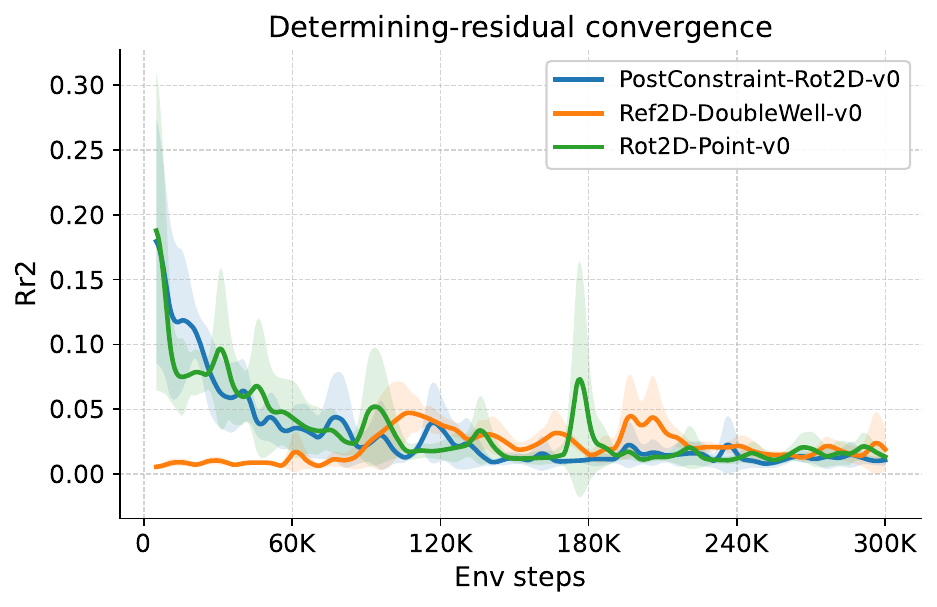}
    \caption{\texttt{Rot2D}}
    \label{fig:rr2}
  \end{subfigure}\hfill
  \begin{subfigure}[t]{0.24\textwidth}
    \centering
    \includegraphics[width=\linewidth]{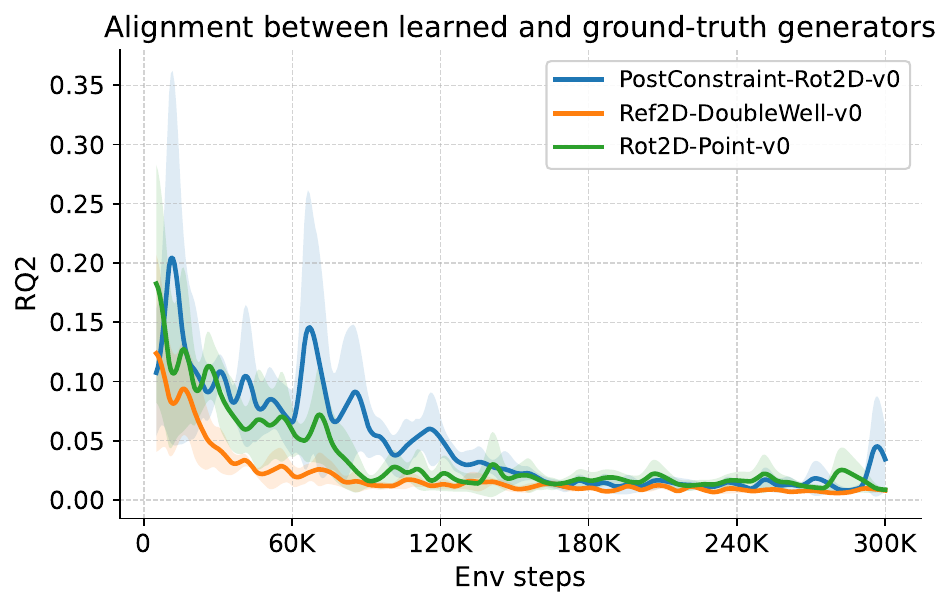}
    \caption{\texttt{Q-res.}}
    \label{fig:rq2}
  \end{subfigure}\hfill
  \begin{subfigure}[t]{0.24\textwidth}
    \centering
    \includegraphics[width=\linewidth]{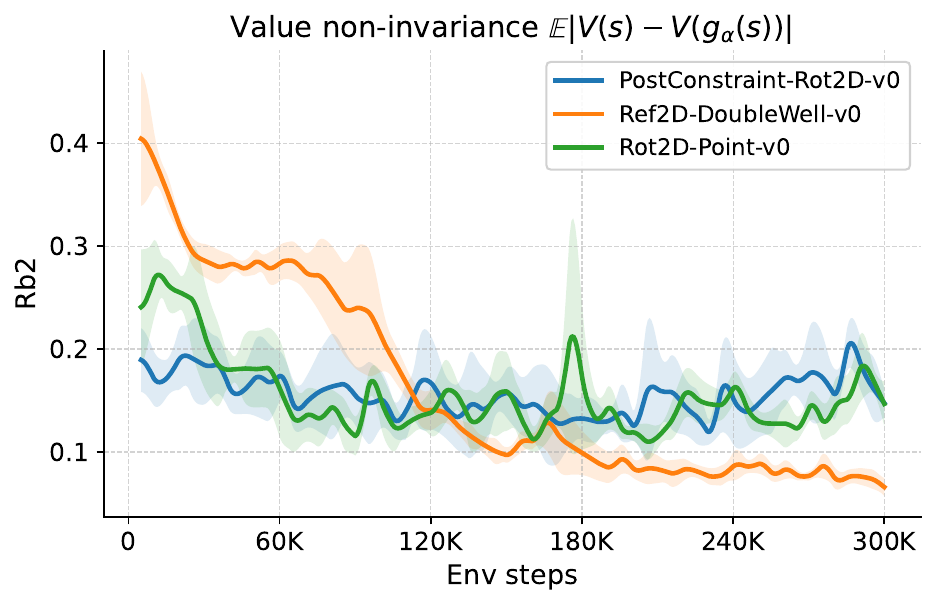}
    \caption{\texttt{b-res.}}
    \label{fig:rb2}
  \end{subfigure}
  \begin{subfigure}[t]{0.24\textwidth}
    \centering
    \includegraphics[width=\linewidth]{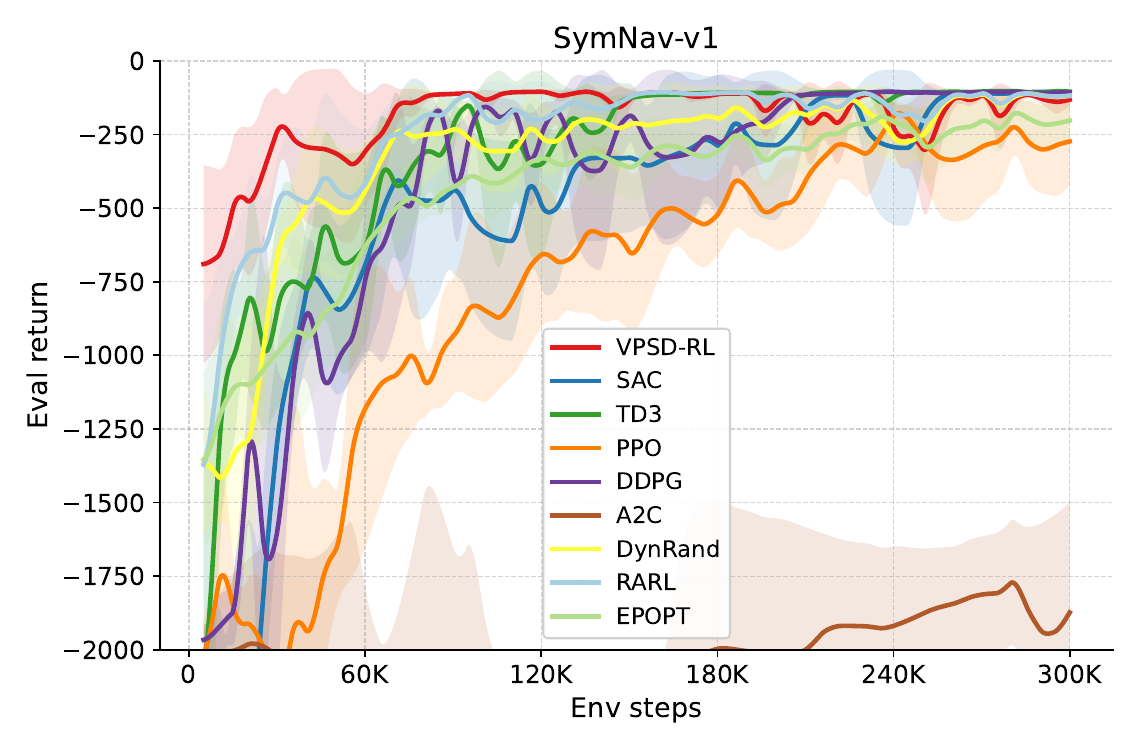}
    \caption{\texttt{SymNav-V1}}
    \label{fig:symnav_representative}
  \end{subfigure}
  \caption{
  Diagnostics on interpretable controlled diffusions and one representative \emph{SymNav-15} variant.
  The first panels show determining-residual convergence, generator alignment, and value non-invariance diagnostics for systems with known value-preserving transformations.
  The last panel shows a representative learning curve on \emph{SymNav-V1}.
  Curves report mean $\pm$ standard error over seeds.
  }
  \label{fig:synth_diag}
\end{figure}

Figure~\ref{fig:synth_diag} supports the mechanism predicted by the theory.
On interpretable controlled diffusions, minimizing the determining-equation residuals recovers transformation directions aligned with the known generators and reduces empirical value non-invariance along the learned flow.
On the representative \emph{SymNav-15} task, using the learned transformations improves sample efficiency relative to standard continuous-control baselines, suggesting that the recovered structure is useful beyond the synthetic diagnostic setting.

\begin{figure*}[t]
  \centering
  \begin{subfigure}[t]{0.32\textwidth}
    \centering
    \includegraphics[width=\linewidth]{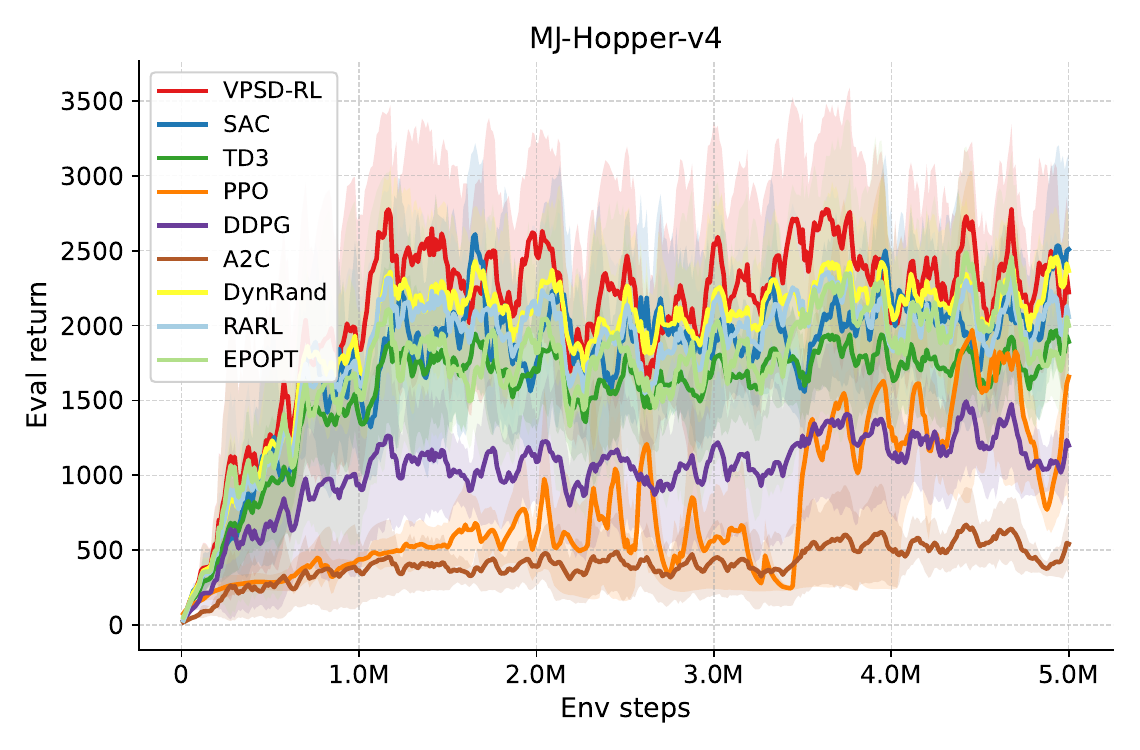}
    \caption{\texttt{Hopper-v4}}
    \label{fig:mujoco_hopper}
  \end{subfigure}\hfill
  \begin{subfigure}[t]{0.32\textwidth}
    \centering
    \includegraphics[width=\linewidth]{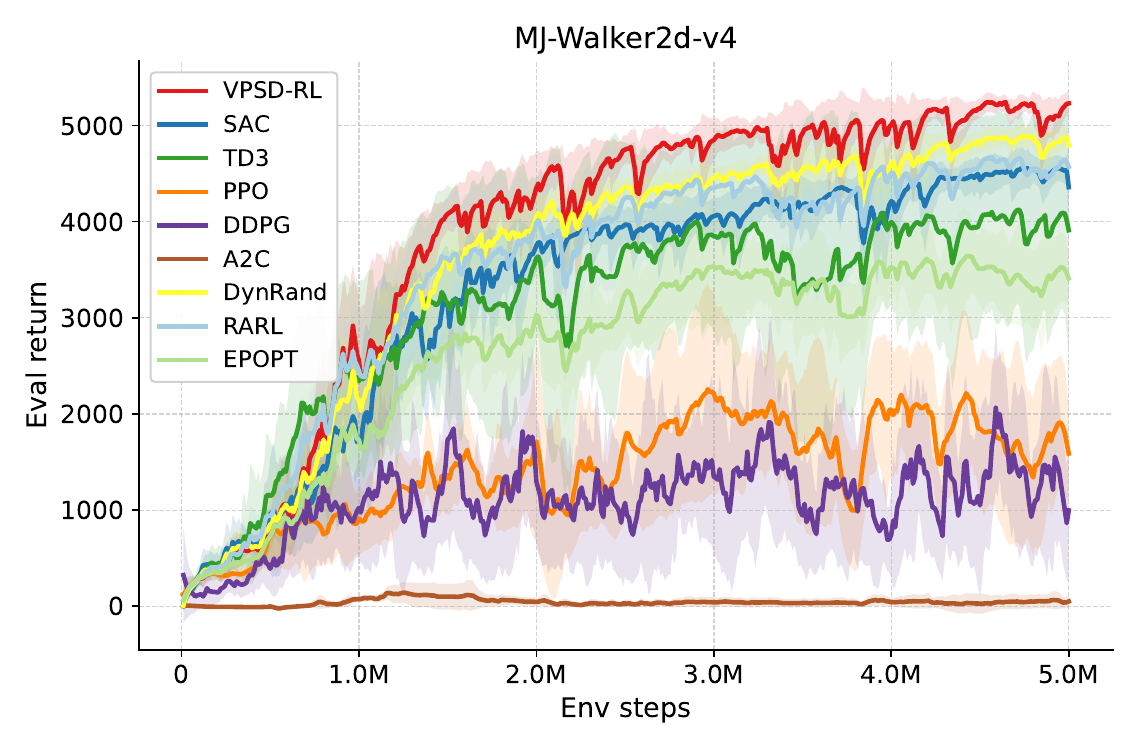}
    \caption{\texttt{Walker2d-v4}}
    \label{fig:mujoco_walker}
  \end{subfigure}\hfill
  \begin{subfigure}[t]{0.32\textwidth}
    \centering
    \includegraphics[width=\linewidth]{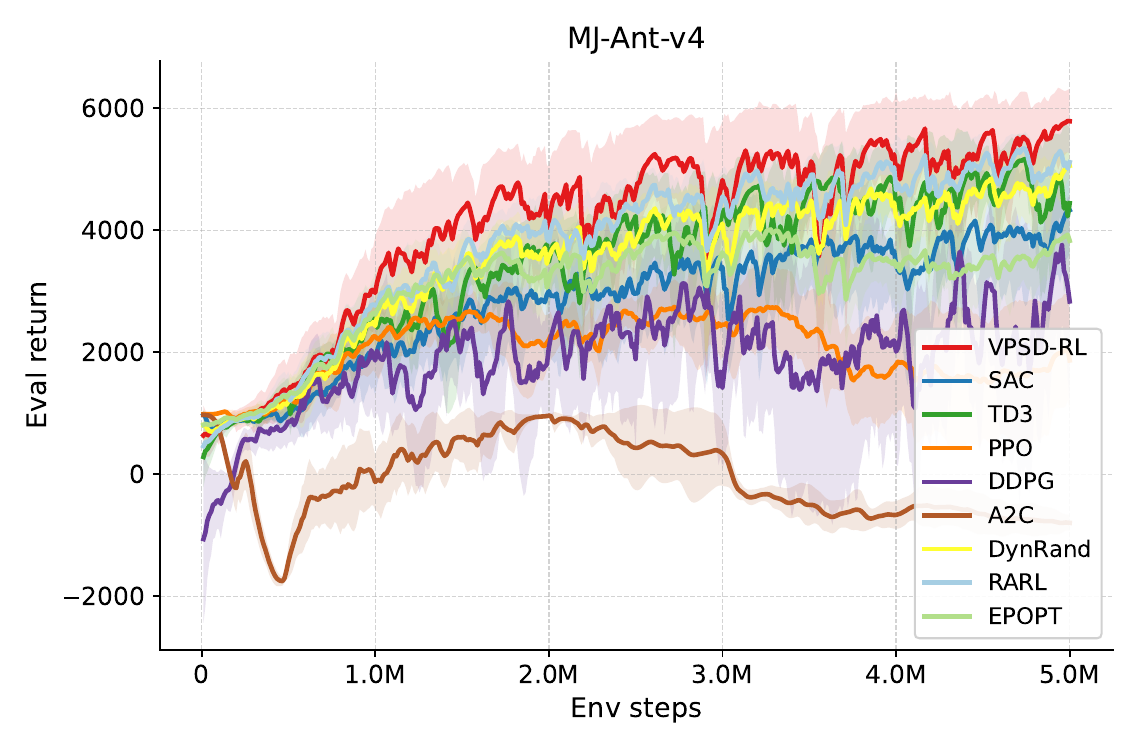}
    \caption{\texttt{Ant-v4}}
    \label{fig:mujoco_ant}
  \end{subfigure}
  \caption{
  MuJoCo locomotion performance on \texttt{Hopper-v4}, \texttt{Walker2d-v4}, and \texttt{Ant-v4}.
  Curves report mean $\pm$ standard error over seeds.
  }
  \label{fig:mujoco_curves}
\end{figure*}

Figure~\ref{fig:mujoco_curves} shows that the benefit of VPSD-RL is task-dependent but consistent with the amount of reusable structure in the task.
The gain is modest on \texttt{Hopper-v4}, whose dominant structure is a single hopping template; clearer gains appear on \texttt{Walker2d-v4}, where left-right phase correspondence creates reusable coordination; and the strongest improvement appears on \texttt{Ant-v4}, where multi-limb coordination produces richer approximately value-preserving transformation patterns.
Thus the structural summary previously reported as a separate table can be stated directly in text.

\begin{table*}[t]
\centering
\begin{minipage}[c]{0.34\textwidth}
\caption{
Compact quantitative summary on \texttt{Ant-v4}.
Numbers are mean $\pm$ std over seeds.
Final return is averaged over the last evaluation window; AUC is computed over a fixed step budget; wall-clock is seconds per $10^5$ environment steps.
}
\label{tab:ant_summary}
\end{minipage}\hfill
\begin{minipage}[c]{0.64\textwidth}
\centering
\scriptsize
\setlength{\tabcolsep}{3pt}
\begin{tabular}{lccc}
\toprule
Method & Final return $\uparrow$ & AUC@B $\uparrow$ & Wall-clock $\downarrow$ \\
\midrule
A2C                 & \texttt{-790.30}$\pm$\texttt{103.76} & \texttt{-388.66}$\pm$\texttt{127.69} & \texttt{391}$\pm$\texttt{113} \\
DDPG                & \texttt{2637.74}$\pm$\texttt{1385.82} & \texttt{924.95}$\pm$\texttt{307.18} & \texttt{437}$\pm$\texttt{8} \\
PPO                 & \texttt{1922.56}$\pm$\texttt{615.62} & \texttt{1414.10}$\pm$\texttt{205.77} & \texttt{215}$\pm$\texttt{0} \\
SAC                 & \texttt{4477.82}$\pm$\texttt{677.98} & \texttt{1227.46}$\pm$\texttt{172.73} & \texttt{397}$\pm$\texttt{159} \\
TD3                 & \texttt{3984.84}$\pm$\texttt{1111.78} & \texttt{1463.37}$\pm$\texttt{123.77} & \texttt{767}$\pm$\texttt{77} \\
\midrule
DynRand             & \texttt{5149.99}$\pm$\texttt{110.37} & \texttt{1440.34}$\pm$\texttt{152.56} & \texttt{434}$\pm$\texttt{108} \\
RARL                & \texttt{4903.49}$\pm$\texttt{615.20} & \texttt{1558.29}$\pm$\texttt{227.95} & \texttt{619}$\pm$\texttt{119} \\
EPOpt               & \texttt{3872.36}$\pm$\texttt{89.81}  & \texttt{1533.66}$\pm$\texttt{254.87} & \texttt{343}$\pm$\texttt{81} \\
\midrule
VPSD-RL (Aug+Reg)   & \texttt{5822.15}$\pm$\texttt{470.25} & \texttt{1653.21}$\pm$\texttt{336.34} & \texttt{471}$\pm$\texttt{161} \\
\bottomrule
\end{tabular}
\end{minipage}
\end{table*}

The compact comparison in Table~\ref{tab:ant_summary} shows that VPSD-RL improves final return and AUC on \texttt{Ant-v4} while adding moderate computational overhead.
Detailed aggregate results for all \emph{SymNav-15} variants, full MuJoCo curves, robustness baselines, and ablations are provided in Appendix~\ref{app:full_experiments}. These empirical diagnostics are used as reliability checks for the discovered transformations, rather than as an assumption that every continuous-control task admits a useful global symmetry.

\section{Conclusion}
\label{sec:conclusion}

We introduced \textbf{VPSD-RL}, an operator-level framework for discovering and exploiting value-preserving structure in continuous-time reinforcement learning.
Rather than assuming a prescribed symmetry, VPSD-RL defines exact and approximate value preservation through commutation of the transformed controlled generator and reward with the HJB operator, making classical Lie symmetries a special case.
We showed that exact value-preserving structure implies value invariance, while bounded generator/reward mismatch yields quantitative stability of the optimal value along approximate orbits.
We then turned these principles into a practical pipeline that learns infinitesimal generators from data, exponentiates them into finite transformations, and uses them for transition augmentation and transformation-consistency regularization.
The resulting theory and experiments suggest that value-preserving structure discovery can improve data efficiency and robustness in continuous-control RL, especially when useful transformations are present but not known in advance. These gains should be interpreted under the stated local-modeling assumptions: the learned flows are local and their reliability is assessed through residual and invariance diagnostics. Limitations and broader impacts are discussed in Appendix.

{
\small
\nocite{*}
\bibliographystyle{unsrtnat}
\bibliography{ref}
}

%%%%%%%%%%%%%%%%%%%%%%%%%%%%%%%%%%%%%%%%%%%%%%%%%%%%%%%%%%%%

\appendix

\section{Algorithm}
\begin{algorithm}[t]
\caption{Value-preserving structure discovery and reinforcement learning with learned transformations}
\label{alg:main}
\begin{algorithmic}[1]
\STATE Initialize replay buffer $\mathcal{D}$; models $(\hat b_\omega,\hat\Sigma_\omega,\hat r_\omega)$; fields $(X_\theta,Y_\phi)$; RL parameters $\psi$
\FOR{iterations $=1,2,\dots$}
  \STATE Collect rollouts with policy $\pi_\psi$; store $(s,a,s',r)$ into $\mathcal{D}$
  \STATE Update $(\hat b_\omega,\hat\Sigma_\omega,\hat r_\omega)$ on minibatches from $\mathcal{D}$ if unknown (and set $\hat Q_\omega=\hat\Sigma_\omega\hat\Sigma_\omega^\top$)
  \STATE Sample minibatch $\{(s_i,a_i)\}$ from $\mathcal{D}$
  \STATE Compute residuals $R_r,R_b,R_Q$ via automatic differentiation
  \STATE Update $(\theta,\phi)$ by (stochastic) gradient steps on \eqref{eq:sym_loss}
  \STATE Numerically integrate \eqref{eq:flow} for small $\alpha$ to obtain $(g_\alpha,h_\alpha)$
  \STATE Use $(g_\alpha,h_\alpha)$ for transition augmentation and/or transformation-consistency regularization; update $\psi$
\ENDFOR
\end{algorithmic}
\end{algorithm}

\subsection{Using learned transformations in RL}
\label{app:vp_rl_updates}

Given a transition tuple $(s,a,s',r)$ collected from the environment, transition augmentation generates
\[
(\tilde s,\tilde a,\tilde s',\tilde r)
:=
\big(g_\alpha(s),\,h_\alpha(a),\,g_\alpha(s'),\,r\big),
\]
and inserts it into the replay buffer.
Under exact value preservation, $\tilde r=r$ and the augmented transition is distributed as a valid transition from $(\tilde s,\tilde a)$.
Under approximate value preservation, the augmented transition is a controlled perturbation whose effect is governed by the generator/reward mismatch bounds in Section~\ref{sec:approx_sym} and by the discrete-time analogue in Section~\ref{app:guarantees_full}.

We also encourage the learned critic/value and policy to respect the discovered transformation.
For a value function $V_\psi$, we use the invariance penalty
\[
\mathcal{R}_{\mathrm{inv}}(\psi)
:=
\mathbb{E}_{s\sim\rho_S,\ \alpha\sim \nu}
\Big[\big(V_\psi(s)-V_\psi(g_\alpha(s))\big)^2\Big],
\]
where $\nu$ is a distribution over small flow times, such as the uniform distribution on $[-\alpha_0,\alpha_0]$.
For a stochastic policy $\pi_\psi(\cdot\mid s)$, we use a policy-transport consistency penalty based on the pushforward $(h_\alpha)_\#\pi_\psi(\cdot\mid s)$:
\[
\mathcal{R}_{\mathrm{pol}}(\psi)
:=
\mathbb{E}_{s\sim\rho_S,\ \alpha\sim\nu}
\Big[
\mathrm{KL}\big(\pi_\psi(\cdot\mid g_\alpha(s))
\ \|\ 
(h_\alpha)_\#\pi_\psi(\cdot\mid s)\big)
\Big].
\]
In implementation, this term can be estimated by sampling $a\sim\pi_\psi(\cdot\mid s)$ and evaluating the log-density of $h_\alpha(a)$ under $\pi_\psi(\cdot\mid g_\alpha(s))$.
The RL parameters are updated with the base actor--critic loss plus these consistency penalties:
\[
\min_\psi\ 
\mathcal{L}_{\mathrm{RL}}(\psi)
+
\lambda_{\mathrm{inv}}\mathcal{R}_{\mathrm{inv}}(\psi)
+
\lambda_{\mathrm{pol}}\mathcal{R}_{\mathrm{pol}}(\psi).
\]

\section{Theory Guarantees }
\label{app:guarantees_full}

% \paragraph{Roadmap of this section.}
We provide three layers of guarantees that mirror the pipeline in Section~\ref{sec:learning_algo}.
First, we give standard nonconvex optimization guarantees for stochastic structure discovery under smoothness and bounded variance.
Second, under realizability and consistent dynamics estimation, we state a consistency result showing that the learned fields
converge (in data measure) to a solution of the determining equations, implying vanishing operator mismatch on the data support.
Third, we bound numerical errors introduced by finite-step ODE solvers when exponentiating the learned vector fields.
Finally, to give an end-to-end non-asymptotic statement free of actor-training nonconvexity,
we include a discrete-time tabular analogue showing how exact/approximate value-preserving augmentation perturbs the Bellman fixed point.

\subsection{Optimization Guarantees for Structure Discovery}
\label{sec:sgd_sym}

\begin{assumption}[Smoothness and bounded variance for structure-discovery optimization]
\label{ass:sgd}
Let $\mathcal{L}_{\mathrm{sym}}(\theta,\phi)$ denote the (penalized) objective in~\eqref{eq:sym_loss}.
Assume $\mathcal{L}_{\mathrm{sym}}$ is $L$-smooth in $(\theta,\phi)$, i.e., its gradient is $L$-Lipschitz.
Assume the stochastic gradients used by the algorithm are unbiased and have uniformly bounded second moments:
$\mathbb{E}\|g_t-\nabla \mathcal{L}_{\mathrm{sym}}(\theta_t,\phi_t)\|_2^2\le \sigma^2$.
\end{assumption}

\begin{theorem}[Convergence of stochastic structure discovery to a stationary point]
\label{thm:sgd_stationary}
Under Assumption~\ref{ass:sgd}, running SGD with constant stepsize $\eta\le 1/L$ yields
\begin{equation}
\label{eq:sgd_const}
\frac{1}{T}\sum_{t=0}^{T-1}\mathbb{E}\big[\|\nabla \mathcal{L}_{\mathrm{sym}}(\theta_t,\phi_t)\|_2^2\big]
\le
\frac{2\big(\mathcal{L}_{\mathrm{sym}}(\theta_0,\phi_0)-\mathcal{L}_{\inf}\big)}{\eta T}
+\eta L\sigma^2,
\end{equation}
where $\mathcal{L}_{\inf}$ is a lower bound of $\mathcal{L}_{\mathrm{sym}}$.
With appropriately diminishing stepsizes, one obtains the standard rate
$\min_{0\le t\le T-1}\mathbb{E}\|\nabla \mathcal{L}_{\mathrm{sym}}(\theta_t,\phi_t)\|_2^2=\mathcal{O}(1/\sqrt{T})$.
\end{theorem}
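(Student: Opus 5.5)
The plan is the standard descent-lemma argument for nonconvex SGD. Write $x_t=(\theta_t,\phi_t)$, $F=\mathcal{L}_{\mathrm{sym}}$, and the update $x_{t+1}=x_t-\eta g_t$. By $L$-smoothness (Assumption~\ref{ass:sgd}),
\[
F(x_{t+1})\le F(x_t)-\eta\langle\nabla F(x_t),g_t\rangle+\tfrac{L\eta^2}{2}\|g_t\|_2^2 .
\]
Take conditional expectation given $x_t$. Unbiasedness gives $\mathbb{E}[g_t\mid x_t]=\nabla F(x_t)$. The bias--variance split gives $\mathbb{E}[\|g_t\|_2^2\mid x_t]\le\|\nabla F(x_t)\|_2^2+\sigma^2$. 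Hence
\[
\mathbb{E}F(x_{t+1})\le\mathbb{E}F(x_t)-\eta\bigl(1-\tfrac{L\eta}{2}\bigr)\mathbb{E}\|\nabla F(x_t)\|_2^2+\tfrac{L\eta^2\sigma^2}{2}.
\]
The assumption should be read as conditional on the past, or at least as holding with $g_t$ unbiased given $x_t$; I would state this explicitly.

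Next, use $\eta\le 1/L$, so that $1-L\eta/2\ge 1/2$. Rearranging gives
\[
\tfrac{\eta}{2}\,\mathbb{E}\|\nabla F(x_t)\|_2^2\le \mathbb{E}F(x_t)-\mathbb{E}F(x_{t+1})+\tfrac{L\eta^2\sigma^2}{2}.
\]
Sum over $t=0,\dots,T-1$. The sum telescopes, and $\mathbb{E}F(x_T)\ge\mathcal{L}_{\inf}$. Dividing by $\eta T/2$ yields exactly \eqref{eq:sgd_const}, with $F(x_0)$ deterministic since the initialization is fixed.

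For the diminishing-stepsize claim, repeat the same per-step inequality with $\eta_t\le 1/L$. This gives
\[
\sum_t \eta_t\,\mathbb{E}\|\nabla F(x_t)\|_2^2\le 2\bigl(F(x_0)-\mathcal{L}_{\inf}\bigr)+L\sigma^2\sum_t\eta_t^2 .
\]
Then lower-bound the left side by $\bigl(\min_t\mathbb{E}\|\nabla F(x_t)\|_2^2\bigr)\sum_t\eta_t$. With $\eta_t=\min\{1/L,\,c/\sqrt{T}\}$ (constant over the horizon), we get $\sum_t\eta_t\asymp\sqrt{T}$ and $\sum_t\eta_t^2=O(1)$, which yields $O(1/\sqrt{T})$. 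Alternatively, setting $\eta=\min\{1/L,c/\sqrt T\}$ directly in \eqref{eq:sgd_const} balances the two terms. With truly decaying $\eta_t=c/\sqrt{t+1}$ one gets only $O(\log T/\sqrt T)$, so I would present the horizon-dependent choice as the intended meaning of ``appropriately diminishing''.

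The main obstacle is not the calculation but the hypotheses. $L$-smoothness of the residual loss \eqref{eq:sym_loss} is a genuine assumption, since it involves second derivatives of $X_\theta$ through $\Delta_{\hat Q}X_\theta$ and a quartic normalization penalty that is not globally smooth. The lower bound $\mathcal{L}_{\inf}\ge 0$ is automatic because all terms are squares. I will therefore simply invoke Assumption~\ref{ass:sgd} as stated and remark that it holds on bounded parameter sets for smooth architectures.
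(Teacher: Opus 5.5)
Your proposal is correct and follows essentially the same route as the paper's proof: the $L$-smoothness descent inequality, conditional unbiasedness with the bias--variance split $\mathbb{E}[\|g_t\|_2^2\mid x_t]\le\|\nabla F(x_t)\|_2^2+\sigma^2$, the bound $1-L\eta/2\ge 1/2$, telescoping against $\mathcal{L}_{\inf}$, and a horizon-dependent $\eta=\Theta(1/\sqrt{T})$ for the rate. Your observation that a genuinely decaying schedule $\eta_t=c/\sqrt{t+1}$ gives only $O(\log T/\sqrt{T})$ under this analysis is a fair sharpening of the paper's looser phrase ``appropriately diminishing stepsizes.''
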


% \paragraph{How to interpret Theorem~\ref{thm:sgd_stationary}.}
This result does \emph{not} claim global recovery of the true symmetry fields in general (the problem is nonconvex),
but it ensures that the optimization procedure reaches approximate stationary points at the usual SGD rates.
Consistency requires additional realizability assumptions, addressed next.

\subsection{Consistency under Realizability}
\label{sec:consistency}

\begin{assumption}[Realizability and consistent dynamics estimation]
\label{ass:realizable}
Assume there exist parameters $\omega^\star,\theta^\star,\phi^\star$ such that
$\hat b_{\omega^\star}=b$, $\hat Q_{\omega^\star}=Q$, and $\hat r_{\omega^\star}=r$ on the support of $\rho$,
and $(X_{\theta^\star},Y_{\phi^\star})$ satisfies the determining equations
\eqref{eq:det_reward}, \eqref{eq:det_drift_ito}, and \eqref{eq:det_diffQ_ito} on the support of $\rho$.
Assume the estimators $(\hat b_\omega,\hat Q_\omega,\hat r_\omega)$ are consistent as data grows,
and the structure-discovery objective~\eqref{eq:sym_loss} can be optimized to its global minimum.
\end{assumption}

\begin{theorem}[Consistency of learned infinitesimal generators (on data support)]
\label{thm:consistency}
Under Assumption~\ref{ass:realizable}, as the dataset size tends to infinity and the optimization error tends to zero,
the learned generators $(X_{\theta},Y_{\phi})$ converge in $\rho$-mean to a solution of the determining equations
\eqref{eq:det_reward}--\eqref{eq:det_diffQ_ito}.
In particular, the induced operator mismatch in Definition~\ref{def:eps_sym} satisfies
$\varepsilon_{\mathcal{L}}\to 0$ and $\varepsilon_r\to 0$ on the support of $\rho$.
\end{theorem}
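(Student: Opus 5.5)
The argument splits into a statistical part and an operator part.

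\textbf{Statistical part.} Write the population residual loss
\[
\mathcal{J}(\theta,\phi;\omega)=\mathbb{E}_\rho\big[\|R_b\|_2^2+\lambda_Q\|R_Q\|_F^2+\lambda_r|R_r|^2\big]+\lambda_{\mathrm{nrm}}\big(\mathbb{E}_{\rho_S}\|X_\theta\|^2-1\big)^2 .
\]
By Assumption~\ref{ass:realizable}, at $\omega=\omega^\star$ the fields $(X_{\theta^\star},Y_{\phi^\star})$, rescaled to unit $\rho_S$-norm (the determining equations are linear in $(X,Y)$), give $\mathcal{J}=0$. Hence the infimum is zero, and every global minimizer satisfies the determining equations $\rho$-a.e. with unit normalization.

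Next I would show that $\mathcal{J}(\cdot;\hat\omega_n)\to\mathcal{J}(\cdot;\omega^\star)$ uniformly over the generator class. Consistency of $(\hat b_\omega,\hat Q_\omega,\hat r_\omega)$ should be read in a $C^1$ sense on the support of $\rho$, because the residuals involve $\nabla_s,\nabla_a$ of the models. The residuals are bilinear in (model derivatives) $\times$ $(X,\nabla X,\nabla^2X,Y)$. Assuming the generator class is uniformly bounded in $C^2$, Cauchy--Schwarz gives
\[
|\mathcal{J}(\cdot;\hat\omega)-\mathcal{J}(\cdot;\omega^\star)|\lesssim \|\hat\omega\text{-model}-\text{true}\|_{C^1(\rho)} .
\]
Add a standard uniform law of large numbers for the empirical average. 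Then an approximate minimizer $(\theta_n,\phi_n)$ of the empirical loss, with optimization error tending to zero, satisfies $\mathcal{J}(\theta_n,\phi_n;\omega^\star)\to0$. In other words, the true determining residuals vanish in $L^2(\rho)$.

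To convert vanishing residuals into convergence to a solution, I would use compactness. Take the parameter set compact, or the generator class precompact in $C^1(\rho)$. Every subsequence then has a further subsequence converging to some $(\bar X,\bar Y)$. By lower semicontinuity of $\mathcal{J}(\cdot;\omega^\star)$, this limit has zero residual and unit norm, so it is a nontrivial solution. This gives convergence in $\rho$-mean of the distance to the solution set. It is convergence to a solution, not a specific one, because of non-identifiability up to the symmetry algebra.

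\textbf{Operator part.} Define the first-order mismatch
\[
\Delta_\epsilon f:=\mathcal{L}^a(f\circ g_\epsilon)-(\mathcal{L}^{h_\epsilon(a)}f)\circ g_\epsilon .
\]
Differentiating in $\epsilon$ expresses $\partial_\epsilon\Delta_\epsilon f$ as a linear combination of $R_b\cdot\nabla f$ and $\tfrac12\mathrm{tr}(R_Q\nabla^2f)$, evaluated along the flow. This is the same computation as in the proof of Theorem~\ref{thm:determining_necessity}, run in reverse, as in Theorem~\ref{thm:local_sufficiency}. Integrating over $|\epsilon|\le\alpha_0$ bounds $\|\Delta_\epsilon f\|$ by a constant times $\alpha_0(\|R_b\|+\|R_Q\|)\|f\|_{C^2}$. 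Likewise $r(g_\epsilon,h_\epsilon)-r=\int R_r$ along the flow.

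\textbf{Main obstacle.} The statement asks for a sup-type $\varepsilon$ on the support, but we only have $L^2(\rho)$ control of residuals, and flows may leave the support. I would handle this in two steps. First, use uniform $C^1$ equicontinuity of the class to upgrade $L^2(\rho)$ convergence to uniform convergence on $\supp\rho$, assumed compact. Second, interpret ``on the support'' as taking suprema over $(s,a)$ whose flow segments stay in $\supp\rho$. With these two choices, $\varepsilon_{\mathcal{L}},\varepsilon_r\to0$ follows.
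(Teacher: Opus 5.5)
Your first half is essentially the paper's argument. Realizability plus rescaling makes the infimum of the residual loss zero. Global optimization then drives the fitted-model loss to zero, and nonnegativity forces each residual to vanish in $L^2(\rho)$. Consistency of $(\hat b_\omega,\hat Q_\omega,\hat r_\omega)$ and their first derivatives transfers this from estimated to true residuals, as in Lemma~\ref{lem:residual_consistency}. Your uniform $C^2$ bound on the generator class is a cleaner hypothesis than the paper's second-moment bound on $X,\nabla_sX,Y$. That bound does not control the $\nabla_s^2X$ in the It\^o term, and with only mean-square model consistency it does not bound the products.

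After that the routes diverge. The paper identifies ``true residuals $\to0$ in $\rho$-mean square'' with ``convergence to the solution set.'' It then gets $\varepsilon_{\mathcal L},\varepsilon_r\to0$ by applying Theorem~\ref{thm:local_sufficiency} to ``the limiting fields'' and invoking continuity of the mismatch functionals, with $\varepsilon$ read loosely in an essential-sup or mean-square sense. Your compactness step actually produces such a limit, a nontrivial unit-norm solution, and gives convergence of the distance to the solution set. Your flow-integration estimate makes the mismatch quantitative. It also needs the residuals only along individual trajectories, whereas Theorem~\ref{thm:local_sufficiency} needs the determining equations on an open set, which $\supp\rho$ need not contain. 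The price is a set of explicit extra hypotheses, which the paper leaves implicit:
\begin{itemize}
\item a compact parameter set or precompact generator class;
\item compact $\supp\rho$;
\item enough equicontinuity of the residuals to upgrade $L^2(\rho)$ control to uniform control;
\item flow segments that stay in the support.
\end{itemize}

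One step needs restating. Differentiating $\Delta_\epsilon f$ in $\epsilon$ with $f$ fixed does not give a pure combination of $R_b$ and $R_Q$; it produces $\nabla^3 f$ terms and homogeneous terms. Work instead with the coefficient mismatches from the proof of Theorem~\ref{thm:determining_necessity},
$u_\epsilon:=J_\epsilon b(s,a)+\tfrac12 c_\epsilon-b(g_\epsilon(s),h_\epsilon(a))$ and $M_\epsilon:=J_\epsilon Q(s,a)J_\epsilon^\top-Q(g_\epsilon(s),h_\epsilon(a))$.
Then $\Delta_\epsilon f(s)=u_\epsilon\cdot\nabla f(g_\epsilon(s))+\tfrac12\mathrm{tr}\bigl(M_\epsilon\nabla^2 f(g_\epsilon(s))\bigr)$. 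Evaluate $\nabla X,\nabla^2X_k$ at $g_\epsilon(s)$ and the residuals at $(g_\epsilon(s),h_\epsilon(a))$. The mismatches then satisfy
$M_\epsilon'=(\nabla X)M_\epsilon+M_\epsilon(\nabla X)^\top-R_Q$ and $u_\epsilon'=(\nabla X)u_\epsilon+\tfrac12\bigl(\mathrm{tr}(M_\epsilon\nabla^2X_k)\bigr)_k-R_b$, with $u_0=0$ and $M_0=0$.
Gr\"onwall, rather than plain integration, then gives your bound $|\Delta_\epsilon f|\le C|\epsilon|\,\sup(\|R_b\|+\|R_Q\|)\,\|f\|_{C^2}$, with $C$ depending on $\alpha_0$ and the $C^2$ norm of $X$.
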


% \paragraph{Identifiability remark.}
Even under realizability, infinitesimal generators are typically identifiable only up to choices of basis within the underlying transformation algebra.
The normalization term in~\eqref{eq:sym_loss} fixes the overall scale of $X_\theta$ but does not preclude equivalent generators
related by linear combinations related by linear combinations when multiple transformation directions are present.

\subsection{Numerical Flow Approximation}
\label{sec:ode_flow}

\begin{assumption}[Numerical flow integration]
\label{ass:ode}
Assume $X_\theta$ and $Y_\phi$ are Lipschitz on compact sets $\mathcal{K}_S\subseteq\mathcal{S}$ and
$\mathcal{K}_A\subseteq\mathcal{A}$ that contain the states/actions encountered by the algorithm.
Let $\hat g_\alpha$ and $\hat h_\alpha$ be computed by an order-$p$ one-step ODE solver with step size $h$
for $\alpha$ in a bounded interval.
\end{assumption}

\begin{theorem}[Flow approximation error]
\label{thm:ode_error}
Under Assumption~\ref{ass:ode}, for $\alpha$ in a bounded interval,
the numerical flows satisfy
\begin{equation}
\label{eq:ode_error}
\sup_{s\in\mathcal{K}_S}\|\hat g_\alpha(s)-g_\alpha(s)\|_2 \le C_g h^p,\qquad
\sup_{a\in\mathcal{K}_A}\|\hat h_\alpha(a)-h_\alpha(a)\|_2 \le C_h h^p,
\end{equation}
where $C_g$ and $C_h$ depend on Lipschitz constants of the fields on $\mathcal{K}_S,\mathcal{K}_A$ and on the solver stability region.
\end{theorem}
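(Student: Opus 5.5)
The plan is the standard global error bound for one-step methods: local truncation error plus Lipschitz stability of the increment map, summed over a bounded number of steps with a discrete Grönwall argument. I treat $\hat g_\alpha$ in detail; $\hat h_\alpha$ is identical with $Y_\phi$, $\mathcal{K}_A$ in place of $X_\theta$, $\mathcal{K}_S$.

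Fix the bounded interval $|\alpha|\le \alpha_{\max}$ and take $N=\lceil|\alpha|/h\rceil$ steps of size $\pm h$ (the last possibly shorter). Write the solver as $\hat s_{n+1}=\hat s_n+h\,\Phi_h(\hat s_n)$, where the increment $\Phi_h$ is built from evaluations of $X_\theta$, and let $s_n=g_{nh}(s)$ be the exact flow at the grid points.

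\textbf{Step 1 (invariant compact region).} I first need all exact and numerical iterates, and all intermediate stage points, to stay in a region where the fields are controlled. Since $\mathcal{K}_S$ is assumed to contain every state the algorithm encounters, I will work on a slightly enlarged compact set $\mathcal{K}_S^\delta$ on which $X_\theta$ is Lipschitz with constant $\Lambda$ and bounded by $M$. A bootstrap shows that for $h\le h_0$ the numerical trajectory stays within $\delta$ of the exact one, and hence inside $\mathcal{K}_S^\delta$. This containment step is the main obstacle, and the assumption as written is somewhat informal about it. To make it rigorous I would extend $X_\theta$ to a globally Lipschitz field agreeing with it on $\mathcal{K}_S^\delta$ (via a cutoff function); the flows coincide while the trajectories remain in that set.

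\textbf{Step 2 (local error and stability).} By the definition of an order-$p$ method, the local truncation error satisfies $\|s_{n+1}-s_n-h\Phi_h(s_n)\|\le C_{\mathrm{loc}}h^{p+1}$, uniformly over $s_n\in\mathcal{K}_S$. The constant $C_{\mathrm{loc}}$ depends on bounds for the derivatives of $X_\theta$ up to order $p$. Lipschitz continuity alone only yields order one, so I implicitly assume enough smoothness, which holds for neural fields with smooth activations. For Runge--Kutta-type schemes, $\Phi_h$ inherits a Lipschitz constant $\Lambda_\Phi$ from $\Lambda$ and the method coefficients; this is the \emph{solver stability} dependence mentioned in the statement.

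\textbf{Step 3 (accumulation).} Set $e_n=\|\hat s_n-s_n\|$. Combining the two bounds of Step 2 gives the recursion
\[
e_{n+1}\le(1+h\Lambda_\Phi)e_n+C_{\mathrm{loc}}h^{p+1}, \qquad e_0=0.
\]
By discrete Grönwall,
\[
e_N\le \frac{C_{\mathrm{loc}}}{\Lambda_\Phi}\bigl(e^{\Lambda_\Phi|\alpha|}-1\bigr)h^p .
\]
Taking the supremum over $s\in\mathcal{K}_S$ and $|\alpha|\le\alpha_{\max}$ gives $C_g=C_{\mathrm{loc}}(e^{\Lambda_\Phi\alpha_{\max}}-1)/\Lambda_\Phi$.

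\textbf{Projections.} If the algorithm uses a projection or retraction onto a convex feasible set, that map is nonexpansive, so it does not increase $e_n$ as long as the exact flow stays feasible. The recursion in Step 3, and hence the bound, is unchanged.
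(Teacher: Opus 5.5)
Your proposal is correct and follows essentially the same route as the paper: a uniform $O(h^{p+1})$ local defect bound, a $(1+\Lambda h)$-Lipschitz bound on the one-step map, and discrete Gr\"onwall applied to $e_{n+1}\le(1+\Lambda h)e_n+\kappa h^{p+1}$ with $e_0=0$, giving $C_g=\kappa(e^{\Lambda\alpha_{\max}}-1)/\Lambda$. Two points the paper only asserts in passing, you make explicit: the containment of exact, numerical and stage points (via an enlarged compact set and a cutoff extension), and the fact that order $p>1$ needs derivative bounds beyond Lipschitz continuity (the paper's $\kappa_g$ likewise depends on derivatives of $X_\theta$).
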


\subsection{A Discrete-Time Analogue: Value-Preserving Augmented Bellman Backups}
\label{sec:dt_analogue}

% \paragraph{Why include a discrete-time statement.}
The continuous-time theory above explains why (approximate) generator equivariance leads to (approximate) value invariance.
To provide a fully non-asymptotic end-to-end convergence statement that avoids nonconvex actor training,
we additionally state a discrete-time tabular analogue in which value iteration is a contraction and fixed-point perturbations
can be bounded explicitly.

Let $T$ be the optimal Bellman operator in a discounted discrete-time MDP:
\begin{equation}
\label{eq:bellman_opt}
(TV)(s)=\max_{a\in\mathcal{A}}\left\{r(s,a)+\gamma \int V(s')\,P(ds'\mid s,a)\right\}.
\end{equation}
$T$ is a $\gamma$-contraction in $\|\cdot\|_\infty$.

\begin{theorem}[Exact value-preserving augmentation preserves the Bellman operator]
\label{thm:exact_aug_preserve}
Assume there exists a transformation pair $(g,h)$ forming an exact value-preserving correspondence for the discrete-time MDP,
namely
$r(g(s),h(a))=r(s,a)$ and $P(\cdot\mid g(s),h(a))=g_\#P(\cdot\mid s,a)$ for all $(s,a)$.
Then the Bellman backup is invariant under augmentation by $(g,h)$ in expectation,
so tabular value iteration with exact value-preserving augmentation converges to $V^\star$ at the usual rate
$\|V_{k}-V^\star\|_\infty\le \gamma^k\|V_0-V^\star\|_\infty$.
\end{theorem}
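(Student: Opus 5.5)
The plan is to show that the optimal Bellman operator $T$ of \eqref{eq:bellman_opt} commutes with the pullback $U_g V := V\circ g$. Once that holds, the augmented backup is the same map as $T$, and the rate follows from the $\gamma$-contraction of $T$.

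\emph{Commutation.} For bounded measurable $V$ and any state $s$, write $(TV)(g(s))$ as a maximum over $a'\in\mathcal{A}$. Because $h$ is invertible on $\mathcal{A}$ (a bijection), we may reparameterize $a'=h(a)$, so the maximum runs over $a\in\mathcal{A}$. Reward compatibility gives $r(g(s),h(a))=r(s,a)$. The kernel condition $P(\cdot\mid g(s),h(a))=g_\#P(\cdot\mid s,a)$, combined with the change-of-variables formula for pushforward measures, gives
\[
\int V(s')\,P(ds'\mid g(s),h(a))=\int V(g(s'))\,P(ds'\mid s,a).
\]
Together these yield
\[
(TV)\circ g = T(V\circ g), \qquad \text{i.e.}\qquad U_g T=T U_g .
\]
In particular $V^\star\circ g$ is a fixed point of $T$. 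Since $T$ is a $\gamma$-contraction, its fixed point is unique, so $V^\star\circ g=V^\star$. This is the discrete-time counterpart of Theorem~\ref{thm:value_invariance}.

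\emph{Augmentation in expectation.} An augmented sample $(g(s),h(a),g(s'),r)$ with $s'\sim P(\cdot\mid s,a)$ has reward $r(s,a)=r(g(s),h(a))$. Its next state $g(s')$ has law $g_\#P(\cdot\mid s,a)=P(\cdot\mid g(s),h(a))$. The augmented transition is therefore distributed exactly as a genuine transition from $(g(s),h(a))$. Consequently, the expected sample backup target $r+\gamma V(\tilde s')$ at $(g(s),h(a))$ equals the exact backup $r(g(s),h(a))+\gamma\int V\,dP(\cdot\mid g(s),h(a))$. Taking the maximum over actions, the expected augmented backup operator is exactly $T$. This holds however original and augmented samples are mixed, since every sample at a given pair has the correct conditional law.

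\emph{Rate.} Value iteration with augmentation is $V_{k+1}=TV_k$ in expectation, the tabular idealized setting of the statement. With $TV^\star=V^\star$, the contraction property gives
\[
\|V_{k+1}-V^\star\|_\infty=\|TV_k-TV^\star\|_\infty\le\gamma\|V_k-V^\star\|_\infty,
\]
and induction yields $\|V_k-V^\star\|_\infty\le\gamma^k\|V_0-V^\star\|_\infty$.

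\emph{Main obstacle.} The delicate step is making ``invariant in expectation'' precise. The phrase must mean that the expected augmented backup equals $T$; otherwise a stochastic-approximation argument would be needed, which the statement does not claim. A second point is that the change of variables in the maximum requires $h$ to be surjective onto $\mathcal{A}$, which invertibility supplies. I will state this interpretation explicitly and keep the measure-theoretic pushforward identity as the single technical ingredient.
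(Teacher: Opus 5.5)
Your proposal is correct and follows essentially the same route as the paper, which likewise uses the pushforward change of variables to show that an augmented target $r(g(s),h(a))+\gamma V(g(S'))$ has the same conditional expectation as a genuine backup at $(g(s),h(a))$, proves the commutation $(TV)(g(s))=(T(V\circ g))(s)$ via bijectivity of $h$ as a supplementary remark, and obtains the rate from the $\gamma$-contraction of $T$. Your extra deduction $V^\star\circ g=V^\star$ from uniqueness of the fixed point is a valid addition; note, though, that it is the law-matching step that makes the augmented backup coincide with $T$, not the commutation identity your opening sentence credits.
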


\begin{theorem}[Approximate value-preserving augmentation yields a bounded fixed-point perturbation]
\label{thm:approx_vi}
Define total variation as $d_{\mathrm{TV}}(\mu,\nu):=\sup_{\|f\|_\infty\le 1}\big|\int f\,d(\mu-\nu)\big|$.
Assume the augmentation is $(\varepsilon_P,\varepsilon_r)$ approximate in the discrete-time sense:
\begin{equation}
\label{eq:dt_eps}
\sup_{s,a}\big|r(g(s),h(a))-r(s,a)\big|\le \varepsilon_r,\qquad
\sup_{s,a} d_{\mathrm{TV}}\!\big(P(\cdot\mid g(s),h(a)),\ g_\#P(\cdot\mid s,a)\big)\le \varepsilon_P.
\end{equation}
Let $\tilde T$ be the Bellman operator induced by the approximately augmented model.
Then for any bounded $V$,
\begin{equation}
\label{eq:operator_perturb}
\|(\tilde T V)-(T V)\|_\infty \le \varepsilon_r + \gamma\,\varepsilon_P \|V\|_\infty,
\end{equation}
and $\tilde T$ remains a $\gamma$-contraction.
Let $\tilde V^\star$ be the fixed point of $\tilde T$.
Then the fixed-point error satisfies
\begin{equation}
\label{eq:fixed_point_error}
\|\tilde V^\star - V^\star\|_\infty
\le \frac{1}{1-\gamma}\left(\varepsilon_r+\gamma\,\varepsilon_P\|V^\star\|_\infty\right).
\end{equation}
\end{theorem}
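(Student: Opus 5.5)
We split the argument into the operator perturbation bound \eqref{eq:operator_perturb}, the contraction property of $\tilde T$, and the fixed-point bound \eqref{eq:fixed_point_error}. First we make precise what ``the Bellman operator induced by the approximately augmented model'' is. An augmented sample at $(g(s),h(a))$ carries reward $r(s,a)$ and next state $g(s')$ with $s'\sim P(\cdot\mid s,a)$. The augmented backup therefore uses the surrogate model
\[
\tilde r(g(s),h(a)) := r(s,a),\qquad \tilde P(\cdot\mid g(s),h(a)) := g_\#P(\cdot\mid s,a),
\]
and we set
\[
(\tilde T V)(\tilde s)=\max_{\tilde a}\Big\{\tilde r(\tilde s,\tilde a)+\gamma\int V\,d\tilde P(\cdot\mid \tilde s,\tilde a)\Big\}.
\]
We take $(g,h)$ to be bijections, or, more generally, let the augmented model coincide with the true one off the image, where the bound is trivial. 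Then for every $(\tilde s,\tilde a)$ the $\sup_{s,a}$ hypotheses in \eqref{eq:dt_eps} give
\[
|\tilde r-r|\le\varepsilon_r,\qquad d_{\mathrm{TV}}(\tilde P,P)\le\varepsilon_P .
\]
Fixing this interpretation is the main obstacle. Everything afterwards is standard, but the statement is ambiguous about whether the augmented operator mixes real and transformed samples. If $\tilde T$ is instead a convex mixture of $T$ and the surrogate backup, the same bound holds, since mixing weights at most $1$ only shrink the deviation. We would note this explicitly.

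Next we prove \eqref{eq:operator_perturb}, using $|\max_a F(a)-\max_a G(a)|\le\sup_a|F(a)-G(a)|$. For fixed $(\tilde s,\tilde a)$, the difference of the bracketed terms is at most
\[
|\tilde r-r|+\gamma\,\Big|\int V\,d(\tilde P-P)\Big| .
\]
With the paper's definition $d_{\mathrm{TV}}=\sup_{\|f\|_\infty\le1}|\int f\,d(\mu-\nu)|$, the second term is bounded by $\gamma\,\varepsilon_P\|V\|_\infty$, via $f=V/\|V\|_\infty$. This gives $\varepsilon_r+\gamma\varepsilon_P\|V\|_\infty$. Contraction of $\tilde T$ follows the usual proof for $T$: $\tilde P(\cdot\mid\tilde s,\tilde a)$ is a probability measure, so
\[
|(\tilde TV)(s)-(\tilde TW)(s)|\le\gamma\sup_a\int|V-W|\,d\tilde P\le\gamma\|V-W\|_\infty .
\]
By Banach's fixed-point theorem on bounded functions, $\tilde T$ has a unique fixed point $\tilde V^\star$.

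Finally, we write
\[
\|\tilde V^\star-V^\star\|_\infty=\|\tilde T\tilde V^\star-TV^\star\|_\infty\le\|\tilde T\tilde V^\star-\tilde TV^\star\|_\infty+\|\tilde TV^\star-TV^\star\|_\infty .
\]
The first term is at most $\gamma\|\tilde V^\star-V^\star\|_\infty$ by contraction. Applying \eqref{eq:operator_perturb} at $V=V^\star$ bounds the second term by $\varepsilon_r+\gamma\varepsilon_P\|V^\star\|_\infty$. Rearranging yields \eqref{eq:fixed_point_error}, using that $V^\star$ is bounded by $R_{\max}/(1-\gamma)$.
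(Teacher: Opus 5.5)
Your proposal is correct and follows essentially the same route as the paper. The shared steps are:
\begin{itemize}
\item the surrogate model $\tilde r(g(s),h(a))=r(s,a)$, $\tilde P(\cdot\mid g(s),h(a))=g_\#P(\cdot\mid s,a)$, with bijectivity of $(g,h)$ used to transfer the hypotheses to all state--action pairs;
\item the $1$-Lipschitz property of the max combined with the total-variation bound, giving the operator perturbation estimate;
\item the $\gamma$-contraction of $\tilde T$;
\item the add-and-subtract $\tilde T V^\star$ argument for the fixed-point bound.
\end{itemize}
Your side remark about convex mixtures of real and augmented backups is a harmless extra that the paper does not discuss.
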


\subsection{HJB well-posedness used in the main text}
\label{app:hjb_assumption}

\begin{assumption}[HJB well-posedness]
\label{ass:hjb}
Assume:
(i) $\mathcal{A}$ is a compact metric space and $r,b,\Sigma$ are continuous in $(s,a)$ and Lipschitz in $s$
uniformly over $a$; moreover $|r|\le R_{\max}$ and $b,\Sigma$ are bounded.
(ii) The diffusion matrix $Q(s,a)=\Sigma(s,a)\Sigma(s,a)^\top$ is uniformly elliptic:
there exists $\lambda>0$ such that $\xi^\top Q(s,a)\xi\ge \lambda\|\xi\|^2$ for all $(s,a)$ and $\xi\in\mathbb{R}^d$.
(iii) Either $\mathcal{S}$ is bounded with an appropriate state-constraint or boundary condition,
or $\mathcal{S}=\mathbb{R}^d$ with a growth/boundedness condition ensuring that $V^\star$ is well defined and bounded.
(iv) A comparison principle holds for bounded viscosity sub- and supersolutions of~\eqref{eq:hjb}.
\end{assumption}

\section{Running Example: Planar Rotation as a Special Case}
\label{app:running_example}

Let
$
\hat e_3=
\begin{bmatrix}
0 & -1 & 0\\
1 & 0 & 0\\
0 & 0 & 0
\end{bmatrix},
\qquad
g_\theta=\exp(\theta \hat e_3)\in SO(3),
$
and let the induced action on vectors \(v\in\mathbb{R}^3\) be
$
\rho(g_\theta)v = R_z(\theta)v,
\qquad
R_z(\theta)=
\begin{bmatrix}
\cos\theta & -\sin\theta & 0\\
\sin\theta & \cos\theta & 0\\
0 & 0 & 1
\end{bmatrix},
\qquad \theta\in(-\pi,\pi].
$

Consider a state \(s=(x,x_d,\xi)\), where \(x\in\mathbb{R}^3\) is the current position,
\(x_d\in\mathbb{R}^3\) is a goal/location descriptor, and \(\xi\) collects scalar quantities left invariant by a transformation family.
Let the action be a control vector \(a=F\in\mathbb{R}^3\).
For the planar-orbiting case, define
$
g_\theta\cdot s := \bigl(R_z(\theta)x,\; R_z(\theta)x_d,\; \xi\bigr),
\qquad
h_\theta\cdot a := R_z(\theta)a.
$
If the controlled dynamics and reward depend only on relative geometry and transformation-invariant control quantities,
then the generator-level value-preserving condition and reward compatibility hold:
$
L_a U_{g_\theta} = U_{g_\theta} L_{h_\theta(a)},
\qquad
r(g_\theta\!\cdot s,\; h_\theta\!\cdot a)=r(s,a).
$
Hence the optimal policy can be transported consistently along the orbit,
$
\pi^\star(g_\theta\!\cdot s)\approx h_\theta\!\cdot \pi^\star(s).
$

The same perspective extends beyond globally symmetric coordinates.
For a UAV executing a figure-eight maneuver, one can align repeated phase-matched relative configurations across the two lobes of the trajectory.
These matched configurations need not form a global coordinate symmetry, but they can still induce approximate value-preserving correspondences when the local control semantics, geometry, and reward-to-go remain close.
This illustrates why the target of our method is broader value-preserving structure rather than symmetry alone.

\begin{figure}[t]
  \centering
  \includegraphics[width=\linewidth]{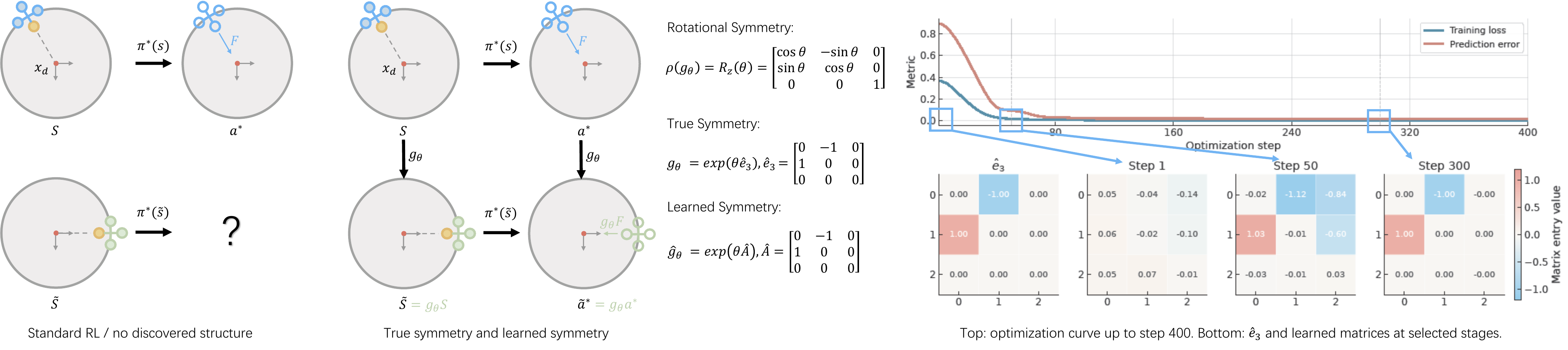}
  \caption{
  Running planar-rotation example illustrating the role of discovered value-preserving structure in continuous control.
  Left: a standard policy learner has no explicit mechanism tying a visited state \(s\) to its transformed counterpart \(\tilde s=g_\theta s\).
  Middle: in the rotational special case, the ground-truth transformation is generated by \(g_\theta=\exp(\theta \hat e_3)\), with induced action \(\rho(g_\theta)v=R_z(\theta)v\); our method instead learns an infinitesimal generator \(\hat A\) and constructs the finite transform \(\hat g_\theta=\exp(\theta \hat A)\).
  Right: numerical diagnostics show that, as optimization proceeds, the learned generator approaches the ground-truth rotational generator and the induced finite transforms become increasingly consistent with true orbit transport, supporting the value-preserving relation \(\pi^\star(g_\theta s)\approx h_\theta \pi^\star(s)\).
  }
  \label{fig:example}
\end{figure}

\section{Additional Experimental Details and Full Results}
\label{app:full_experiments}

This appendix provides full environment specifications, hyperparameters, and complete plots for all benchmark variants.
We also include ablations and additional diagnostics linking generator discovery to operator mismatch and value invariance.
Unless noted otherwise, all environments are implemented in \texttt{gymnasium} style with continuous observation and action spaces,
and at submission time, the anonymized supplementary material contains key implementation code for VPSD-RL, configuration templates. The complete cleaned repository and raw logs will be released upon acceptance.

% ------------------------------------------------------------
\subsection{Stage I: Full synthetic environments and diagnostics}
\label{app:env_synth}

\paragraph{Common state/action conventions.}
All three synthetic environments use the same state and action parameterization.
The state is
$
s=(x,y,v_x,v_y)\in\mathbb{R}^4
$
(position and velocity in the plane),
and the action is a bounded acceleration command
$
a=(a_x,a_y)\in[-1,1]^2
$.
Let $p=(x,y)$ and $v=(v_x,v_y)$.
All stochasticity enters through the velocity channels only.

Each environment is an Euler--Maruyama discretization of a controlled diffusion of the form
\begin{equation}
\label{eq:appendix_sde_template}
\mathrm{d}p_t = v_t\,\mathrm{d}t,\qquad
\mathrm{d}v_t = f(p_t,v_t,a_t)\,\mathrm{d}t + \sigma\,\mathrm{d}W_t,
\end{equation}
where $W_t$ is a 2D standard Wiener process and $\sigma>0$ is isotropic.
In discrete time with step size $\Delta t$ we use the semi-implicit update
\begin{equation}
\label{eq:appendix_em_semiimplicit}
v_{k+1} = v_k + \Delta t\, f(p_k,v_k,a_k) + \sigma\sqrt{\Delta t}\,\xi_k,\qquad
p_{k+1} = p_k + \Delta t\, v_{k+1},
\end{equation}
with $\xi_k\sim\mathcal{N}(0,I_2)$ i.i.d.

\paragraph{Rot2D-Point (\texttt{Rot2DPointEnv}).}
This is the baseline SO(2)-invariant controlled diffusion with linear damping:
\begin{equation}
\label{eq:rot2d_drift}
f(p,v,a)= a - \lambda v,
\end{equation}
where $\lambda>0$ is the damping coefficient.
The instantaneous reward is SO(2)-invariant:
\begin{equation}
\label{eq:rot2d_reward}
r(s,a) = -\|p\|_2^2 - 0.1\,\|v\|_2^2.
\end{equation}
The induced continuous symmetry action is the usual planar rotation:
for any $R\in \mathrm{SO}(2)$,
\begin{equation}
\label{eq:rot2d_group_action}
g_R(p,v,a) = (Rp, Rv, Ra),
\end{equation}
under which both the drift~\eqref{eq:rot2d_drift} and reward~\eqref{eq:rot2d_reward} are invariant.
The environment uses defaults $\Delta t=0.05$, $\sigma=0.02$, $\lambda=0.1$, horizon $T=200$ steps (see code listing).

\paragraph{Ref2D-DoubleWell (\texttt{Ref2DDoubleWellEnv}).}
This environment uses a double-well potential with reflection symmetry $x\mapsto -x$ when $\delta=0$.
Define the base potential
\begin{equation}
\label{eq:dw_U0}
U_0(x,y) = (x^2-1)^2 + y^2,
\end{equation}
and the dynamics potential (used in the drift) with a tunable symmetry-breaking term
\begin{equation}
\label{eq:dw_Udyn}
U_{\mathrm{dyn}}(x,y) = U_0(x,y) + \delta x.
\end{equation}
The drift is
\begin{equation}
\label{eq:dw_drift}
f(p,v,a)= -\nabla U_{\mathrm{dyn}}(p) - \lambda v + a,
\end{equation}
where
$
\nabla U_{\mathrm{dyn}}(x,y) = \big(4x(x^2-1)+\delta,\,2y\big).
$
The reward is kept \emph{independent} of $\delta$ (matching the implementation): 
\begin{equation}
\label{eq:dw_reward}
r(s,a) = -U_0(x,y) - 0.1\,\|v\|_2^2.
\end{equation}
When $\delta=0$, the environment is invariant under the reflection action
\begin{equation}
\label{eq:dw_reflection_action}
g(p,v,a) = ((-x,y),\,(-v_x,v_y),\,(-a_x,a_y)).
\end{equation}
When $\delta\neq 0$, the symmetry is only approximate; we use this setting to stress-test discovery under controlled mismatch.
The implementation additionally applies mild clipping to position/velocity to avoid numerical blow-ups (see code listing).
Defaults: $\Delta t=0.05$, $\sigma=0.02$, $\lambda=0.1$, horizon $T=200$ steps.

\paragraph{PostConstraint-Rot2D (\texttt{PostConstraintRot2DEnv}).}
This environment starts from the Rot2D drift~\eqref{eq:rot2d_drift} and then enforces a feasibility constraint on position:
\begin{equation}
\label{eq:annulus_constraint}
\|p\|_2 \in [r_{\min}, r_{\max}].
\end{equation}
After the unconstrained update~\eqref{eq:appendix_em_semiimplicit}, we project the position to the annulus:
\begin{equation}
\label{eq:annulus_projection}
p_{k+1} \leftarrow \Pi_{\mathcal{A}}(p_{k+1}),\quad \mathcal{A}=\{p:\ r_{\min}\le \|p\|_2 \le r_{\max}\},
\end{equation}
and when a projection occurs we remove the radial velocity component to maintain boundary consistency:
\begin{equation}
\label{eq:annulus_radial_vel_remove}
v_{k+1} \leftarrow v_{k+1} - u\,\langle v_{k+1},u\rangle,\qquad u=\frac{p_{k+1}}{\|p_{k+1}\|_2}.
\end{equation}
The reward matches Rot2D-Point:
\begin{equation}
\label{eq:post_reward}
r(s,a) = -\|p\|_2^2 - 0.1\,\|v\|_2^2.
\end{equation}
This environment preserves rotational symmetry in the interior but introduces non-smooth geometry and boundary effects,
making it a convenient testbed for constrained flows and retraction/projection handling in the discovered transformations.
Defaults: $\Delta t=0.05$, $\sigma=0.02$, $\lambda=0.1$, $r_{\min}=0.5$, $r_{\max}=2.0$, horizon $T=200$ steps.
The step function also logs whether projection occurred (\texttt{info["projected"]}).

\paragraph{Summary of synthetic parameters.}
Table~\ref{tab:synth_params} lists the default parameters used by our reference implementation (exact values in the released config files).

\begin{table}[t]
\centering
\caption{Synthetic environment default parameters in our implementation.}
\label{tab:synth_params}
\resizebox{0.85\linewidth}{!}{
\begin{tabular}{lcccccc}
\toprule
Env & $\Delta t$ & $\sigma$ & $\lambda$ & Horizon & Extra & Notes \\
\midrule
Rot2D-Point & 0.05 & 0.02 & 0.1 & 200 & -- & SO(2) invariant \\
Ref2D-DoubleWell & 0.05 & 0.02 & 0.1 & 200 & $\delta$ & reflection exact if $\delta=0$ \\
PostConstraint-Rot2D & 0.05 & 0.02 & 0.1 & 200 & $[r_{\min},r_{\max}]$ & projection + tangential velocity \\
\bottomrule
\end{tabular}
}
\end{table}

\paragraph{Symmetry discovery setup.}
For Stage I, we collect transition tuples $(s,a,r,s')$ by rolling out a behavior policy (Appendix~\ref{app:impl_details}).
The discovery module fits a generator field (or parameterization thereof) by minimizing determining-equation residuals.
We report:
(i) residual statistics $\mathbb{E}\|R_b\|_2^2$, $\mathbb{E}\|R_Q\|_F^2$, $\mathbb{E}|R_r|^2$;
(ii) generator alignment with ground truth (cosine similarity after normalization); and
(iii) value non-invariance $\mathbb{E}_{s,\alpha}|V(s)-V(g_\alpha(s))|$ vs. $|\alpha|$.

\paragraph{Additional plots.}

\begin{figure}[t]
  \centering

  \begin{subfigure}[t]{0.24\textwidth}
    \centering
    \includegraphics[width=\linewidth]{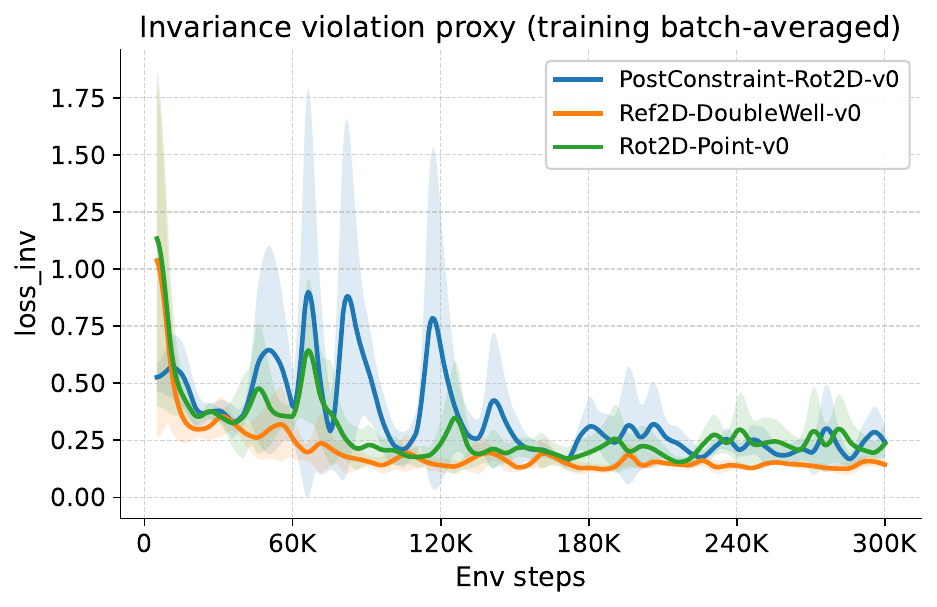}
    \caption{\texttt{}}
    \label{fig:app_loss_inv}
  \end{subfigure}\hfill
  \begin{subfigure}[t]{0.24\textwidth}
    \centering
    \includegraphics[width=\linewidth]{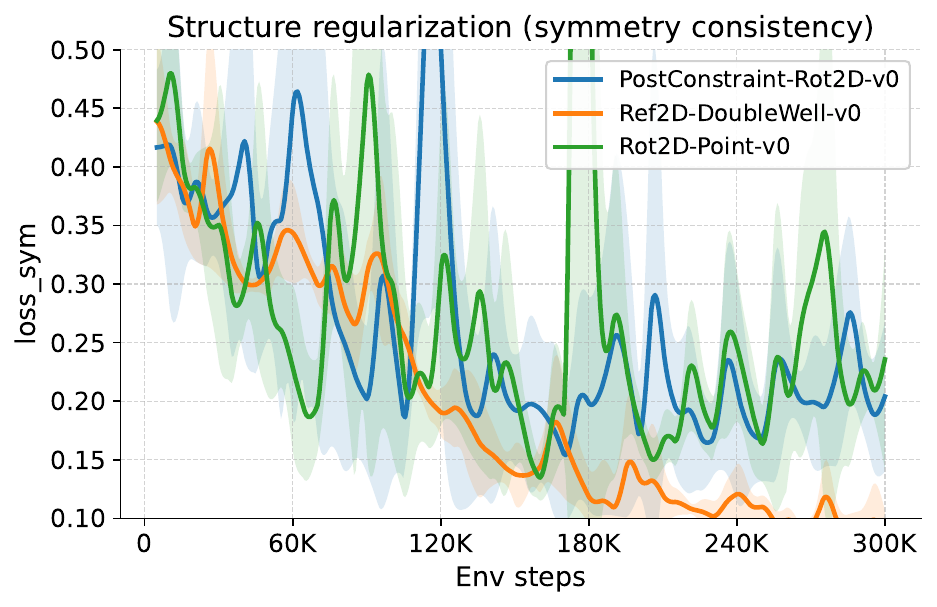}
    \caption{\texttt{}}
    \label{fig:app_loss_sym}
  \end{subfigure}\hfill
  \begin{subfigure}[t]{0.24\textwidth}
    \centering
    \includegraphics[width=\linewidth]{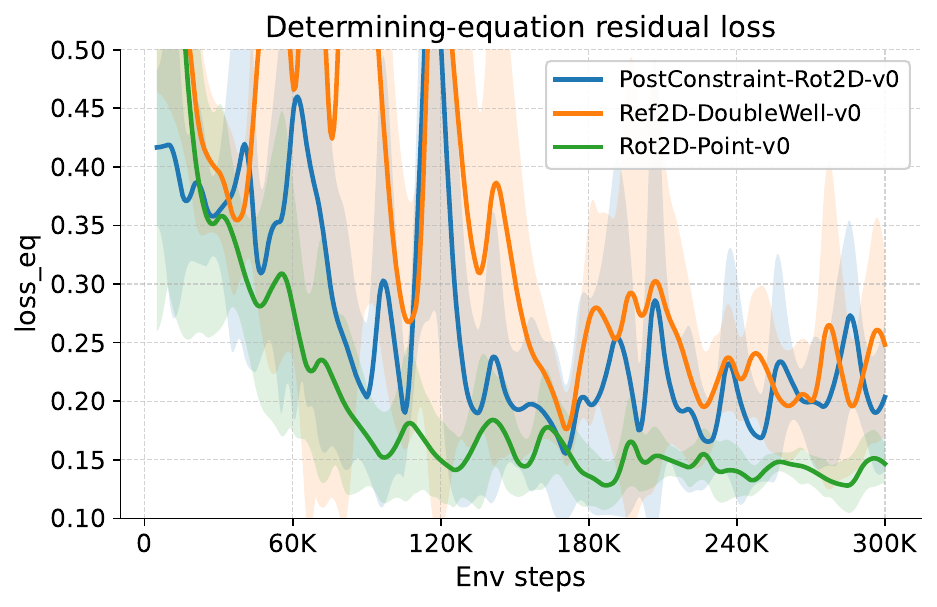}
    \caption{\texttt{}}
    \label{fig:app_loss_eq}
  \end{subfigure}
  \begin{subfigure}[t]{0.24\textwidth}
    \centering
    \includegraphics[width=\linewidth]{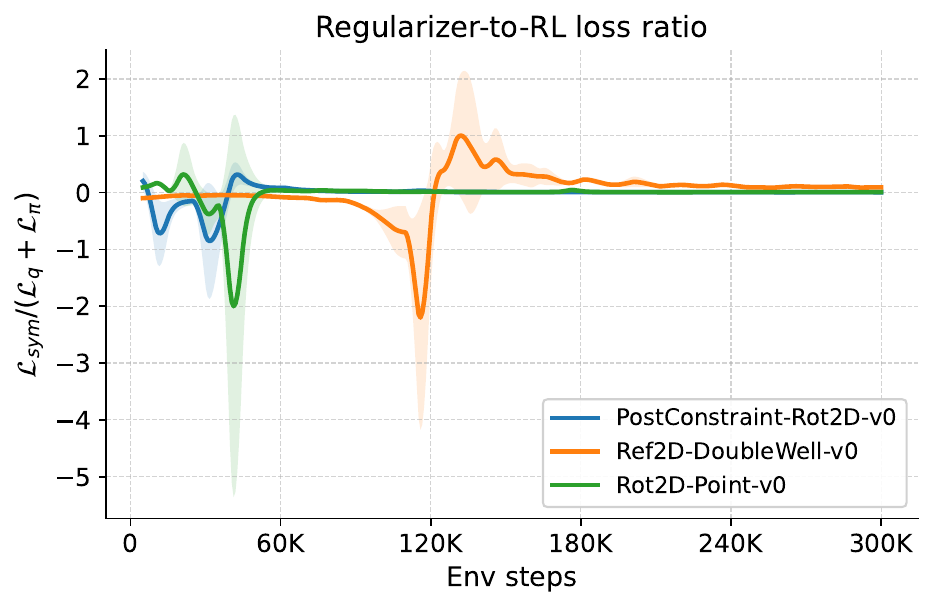}
    \caption{\texttt{}}
    \label{fig:app_loss_ratio_sym}
  \end{subfigure}

  \caption{Fig.~\ref{fig:app_loss_inv} Invariance violation proxy over training (batch-averaged $\mathcal{L}_{\mathrm{inv}}$). Fig.~\ref{fig:app_loss_sym} Symmetry/consistency regularization term over training (batch-averaged $\mathcal{L}_{\mathrm{sym}}$). Fig.~\ref{fig:app_loss_eq} Determining-equation residual term used during Stage I (batch-averaged $\mathcal{L}_{\mathrm{eq}}$).
  Fig.~\ref{fig:app_loss_ratio_sym}
  Regularizer-to-RL loss ratio across tasks, illustrating that regularization does not dominate optimization.
  }
  \label{fig:appendix_loss_diagnostics}
\end{figure}

% \begin{figure}[t]
%   \centering
%   \includegraphics[width=\linewidth]{figures/synth_all_plots.pdf}
%   \caption{Full Stage I results across all synthetic environments.}
%   \label{fig:synth_all}
% \end{figure}

% ------------------------------------------------------------
\subsection{Stage II: SymNav-15 benchmark variants}
\label{app:complex_full}

\label{app:env_symnav}

\paragraph{State, action, and observation.}
\texttt{SymNavEnv} is a 2D navigation environment with obstacles and smooth wind disturbances.
The latent physical state is again $s=(x,y,v_x,v_y)\in\mathbb{R}^4$ and the action is $a=(a_x,a_y)\in[-1,1]^2$.
The \emph{observation} augments state with goal-relative features and lidar rays:
\begin{equation}
\label{eq:symnav_obs}
o = \big[x,y,v_x,v_y,\ (g_x-x),(g_y-y),\ d_1,\ldots,d_8\big]\in\mathbb{R}^{14},
\end{equation}
where $d_i\in[0,1]$ is the normalized distance-to-collision along ray $i$ (8 rays uniformly spaced over $[0,2\pi)$),
computed against both circular obstacles and the world boundary (a disk).

\paragraph{Dynamics.}
The environment implements the same semi-implicit update~\eqref{eq:appendix_em_semiimplicit} with a variant-dependent wind field:
\begin{equation}
\label{eq:symnav_drift}
f(p,v,a) = a + w(p) - \lambda v,
\end{equation}
where $\lambda>0$ is damping and $w(p)$ is a smooth wind:
\begin{equation}
\label{eq:symnav_wind}
w(x,y) = k\big(\sin(0.5\,y),\ \cos(0.5\,x)\big),
\qquad k = 0.15\big((\mathrm{variant}\bmod 5)+1\big).
\end{equation}
Noise is isotropic in the velocity channels, as in~\eqref{eq:appendix_sde_template}.
After each step, position is clamped to remain inside the world disk of radius $R_{\mathrm{world}}$ by radial rescaling.

\paragraph{Map generation (variants).}
Each variant is identified by an integer $\mathrm{variant}\in\{1,\dots,15\}$.
The goal is placed on a circle of radius $r_g$ with angle determined by the variant index:
\begin{equation}
\label{eq:symnav_goal}
g = r_g(\cos\theta,\sin\theta),\qquad \theta = 2\pi\frac{(\mathrm{variant}\bmod 15)}{15},\qquad r_g=3.5.
\end{equation}
Obstacles are 6 circles with centers and radii deterministically generated from $\mathrm{variant}$ (matching the implementation):
for $k\in\{0,\dots,5\}$,
\begin{equation}
\label{eq:symnav_obstacles}
\theta_k = \frac{2\pi k}{6} + 0.15(\mathrm{variant}\bmod 5),\quad
\rho_k = 1.2 + 0.2\big((\mathrm{variant}+k)\bmod 3\big),
\end{equation}
\begin{equation}
\label{eq:symnav_obstacles_center_radius}
c_k = \rho_k(\cos\theta_k,\sin\theta_k),\qquad
r_k = 0.35 + 0.05\big((\mathrm{variant}+2k)\bmod 4\big).
\end{equation}
This generator yields 15 related maps with shared observation/action interfaces but diverse local geometry.

\paragraph{Reward, termination, and diagnostics.}
Let $d(p,g)=\|p-g\|_2$ be the goal distance.
The step reward is dense shaping minus collision penalty plus a terminal success bonus:
\begin{equation}
\label{eq:symnav_reward}
r = -d(p,g) - 2\cdot \mathbb{I}\{\text{collision}\} + 10\cdot \mathbb{I}\{d(p,g)<\varepsilon\},
\qquad \varepsilon=0.3.
\end{equation}
Episodes terminate on success ($d(p,g)<\varepsilon$) and truncate at horizon $T=400$.
The environment returns diagnostic flags \texttt{info["success"]}, \texttt{info["collided"]}, and \texttt{info["dist"]}.

\paragraph{Approximate transformations.}
While the exact rotational symmetry is broken by variant-specific obstacles, goal placement, and wind scaling,
the construction is designed to admit \emph{approximate} continuous transformations: rotating $(p,v,a)$ and rotating the map/goal
yields near-matching local dynamics in many regions, with mismatch localized near obstacles and due to non-invariant wind terms.
This is precisely the regime where operator mismatch bounds and value non-invariance diagnostics are most informative.

\paragraph{Variant list and full learning curves.}
We evaluate variants $\{\mathrm{V}1,\dots,\mathrm{V}15\}$ corresponding to \texttt{variant=1..15}.
Full learning curves for all variants and all methods are in Fig.~\ref{fig:complex_all_15}.
(We optionally include a compact table of map/goal/wind parameters per variant in the released appendix PDF.)

\begin{figure*}[t]
  \centering

  \begin{subfigure}[t]{0.32\textwidth}
    \centering
    \includegraphics[width=\linewidth]{figures/SymNav-v1/eval_return_mean.pdf}
    \caption{\texttt{SymNav-V1}}
    \label{fig:symnav1}
  \end{subfigure}\hfill
  \begin{subfigure}[t]{0.32\textwidth}
    \centering
    \includegraphics[width=\linewidth]{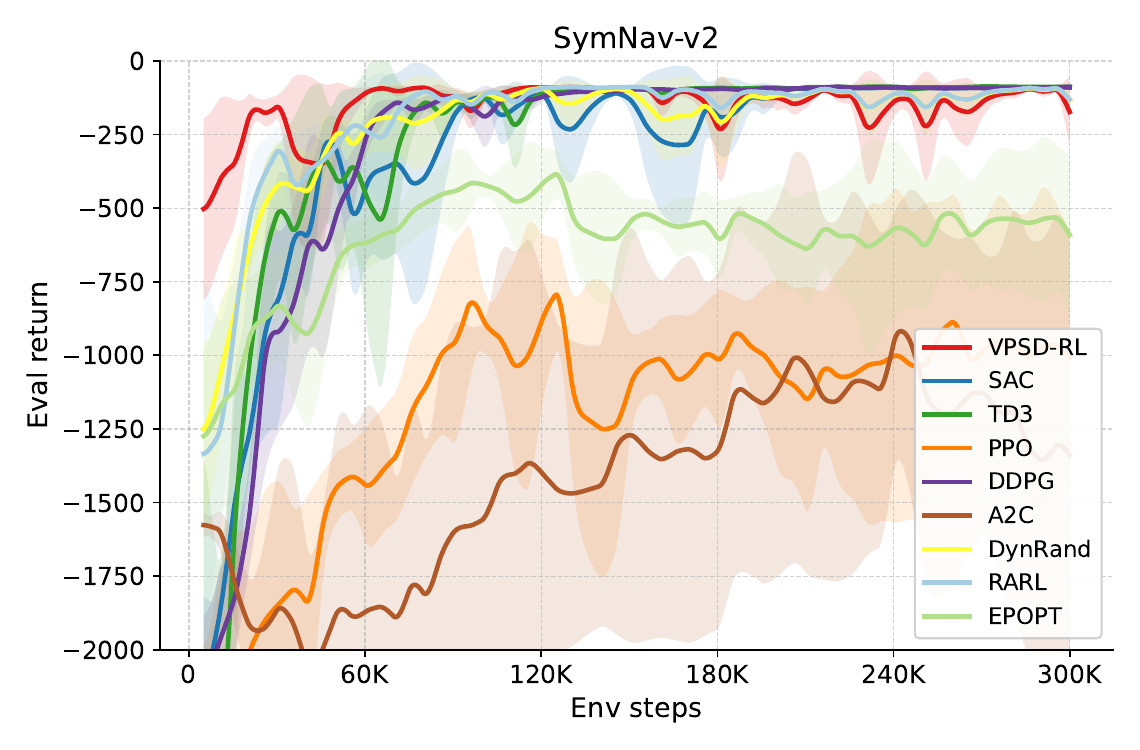}
    \caption{\texttt{SymNav-V2}}
    \label{fig:symnav2}
  \end{subfigure}\hfill
  \begin{subfigure}[t]{0.32\textwidth}
    \centering
    \includegraphics[width=\linewidth]{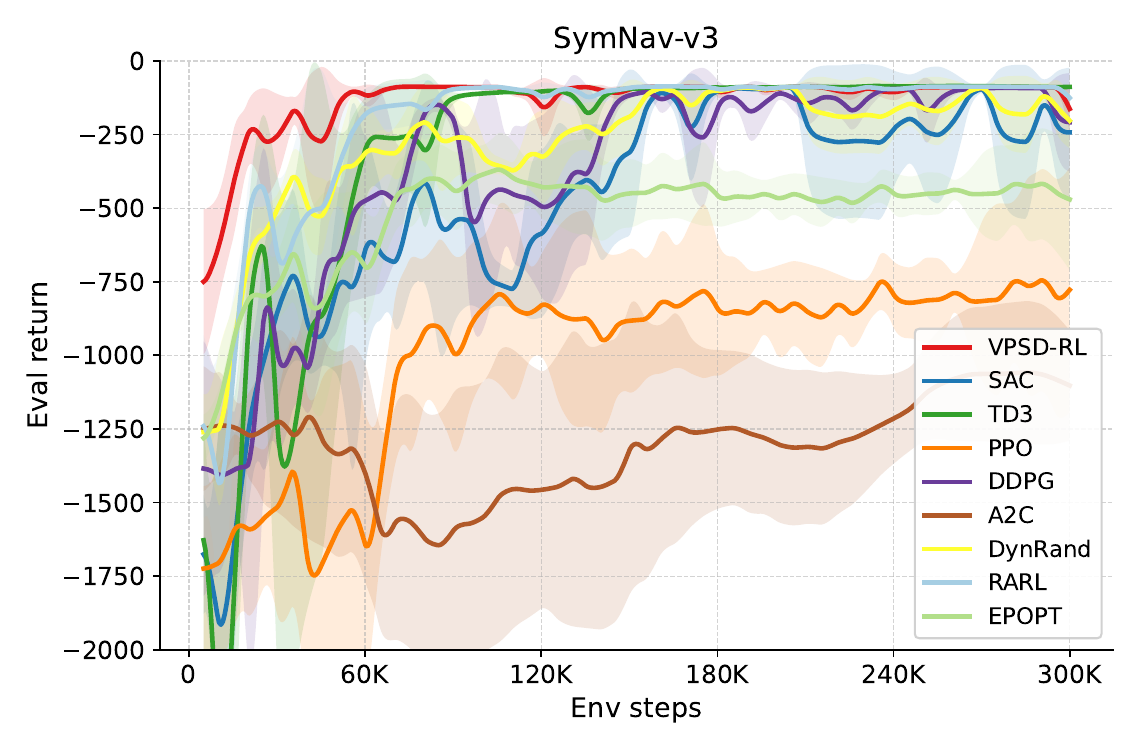}
    \caption{\texttt{SymNav-V3}}
    \label{fig:symnav3}
  \end{subfigure}

  \begin{subfigure}[t]{0.32\textwidth}
    \centering
    \includegraphics[width=\linewidth]{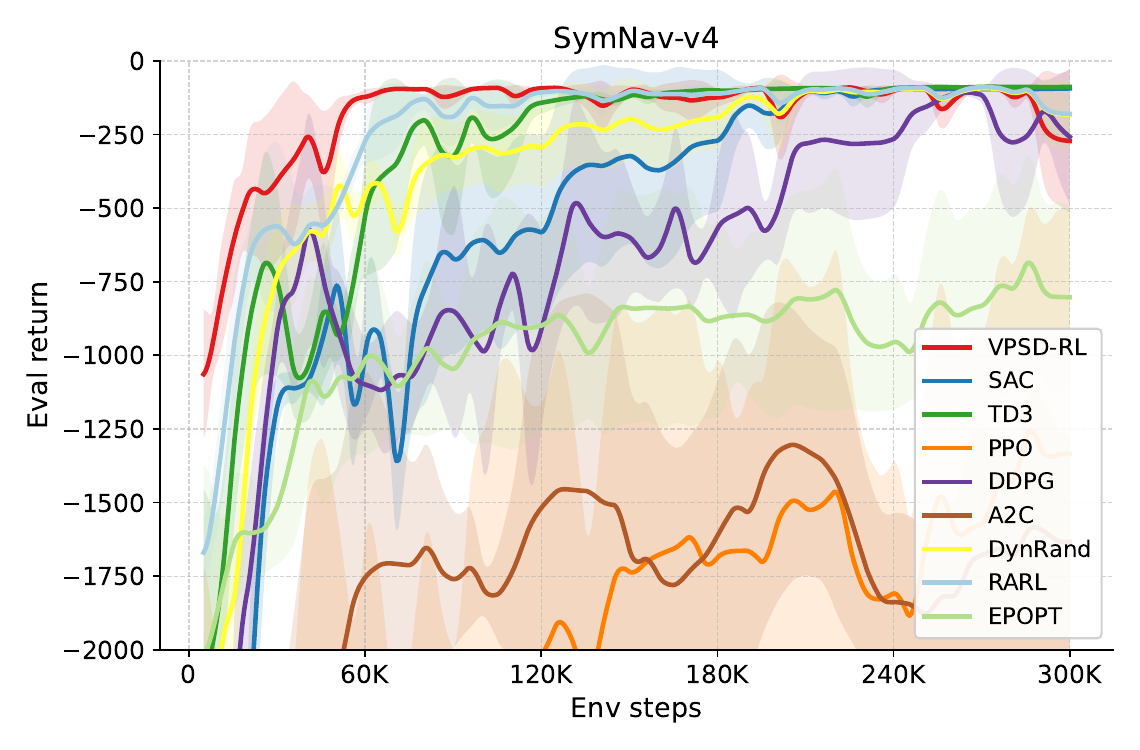}
    \caption{\texttt{SymNav-V4}}
    \label{fig:symnav4}
  \end{subfigure}\hfill
  \begin{subfigure}[t]{0.32\textwidth}
    \centering
    \includegraphics[width=\linewidth]{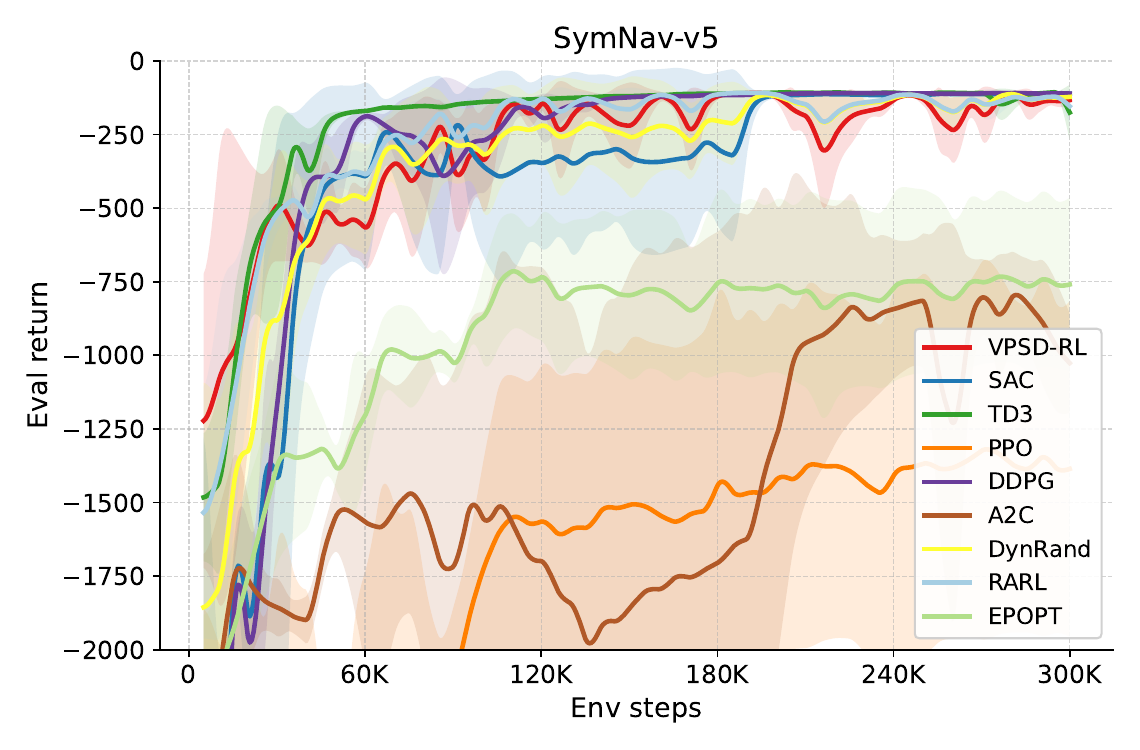}
    \caption{\texttt{SymNav-V5}}
    \label{fig:symnav5}
  \end{subfigure}\hfill
  \begin{subfigure}[t]{0.32\textwidth}
    \centering
    \includegraphics[width=\linewidth]{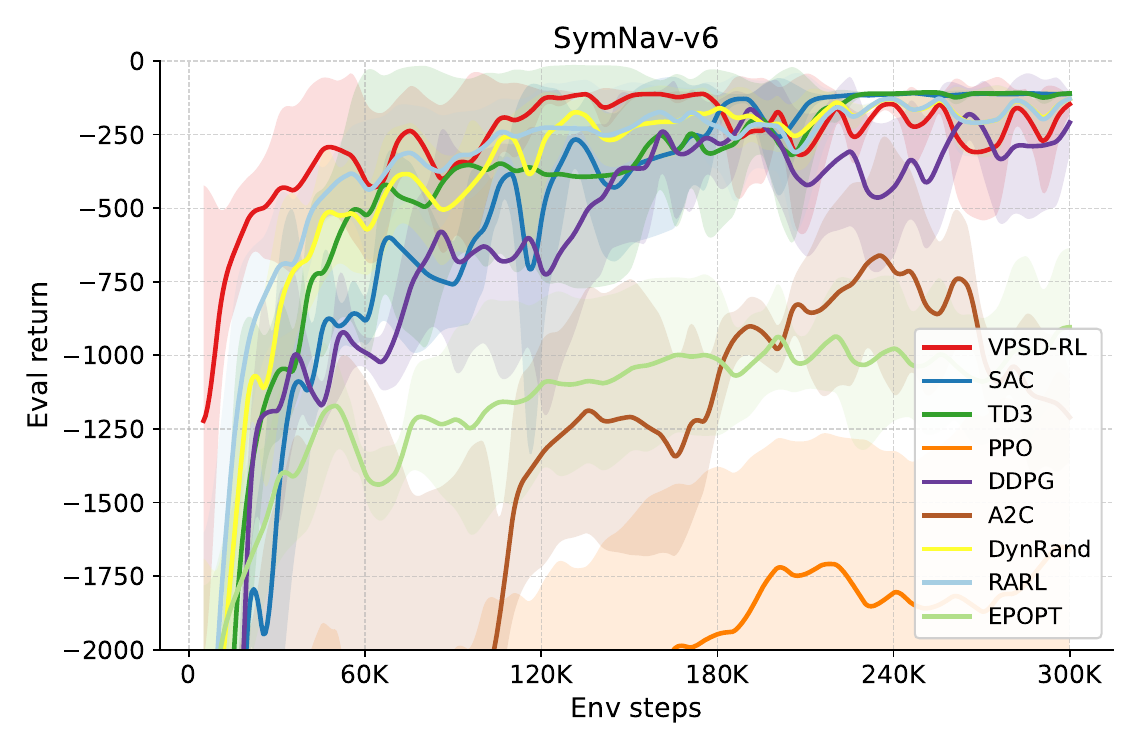}
    \caption{\texttt{SymNav-V6}}
    \label{fig:symnav6}
  \end{subfigure}

  \begin{subfigure}[t]{0.32\textwidth}
    \centering
    \includegraphics[width=\linewidth]{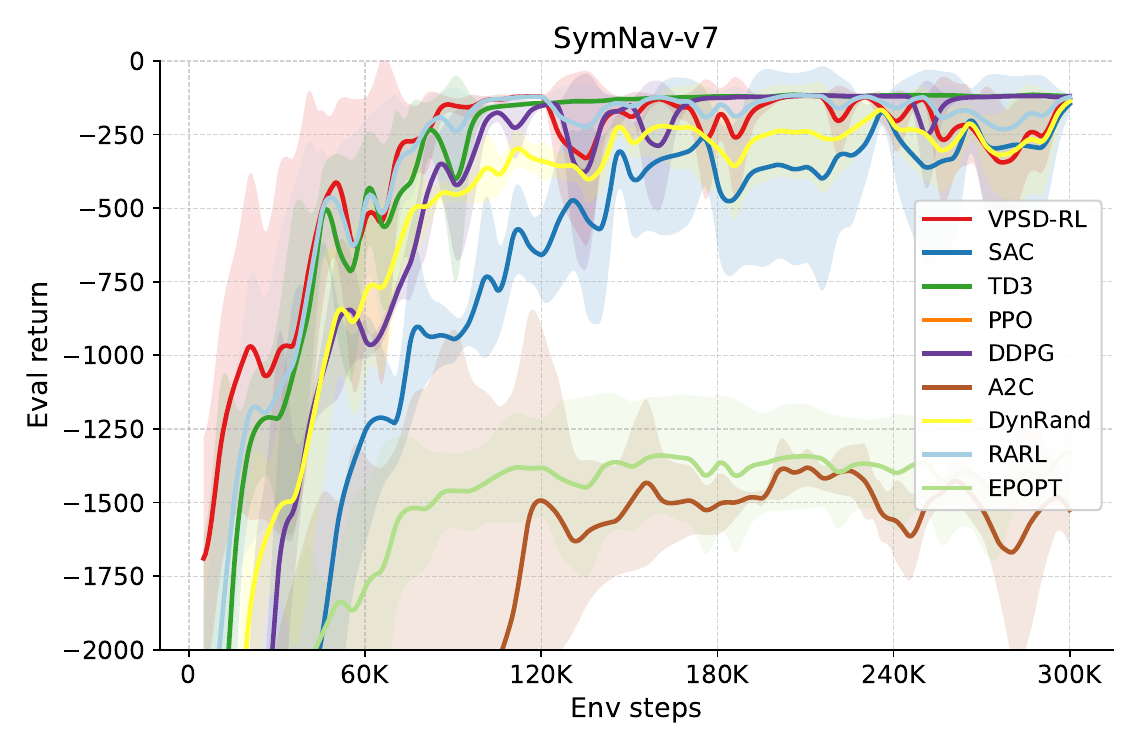}
    \caption{\texttt{SymNav-V7}}
    \label{fig:symnav7}
  \end{subfigure}\hfill
  \begin{subfigure}[t]{0.32\textwidth}
    \centering
    \includegraphics[width=\linewidth]{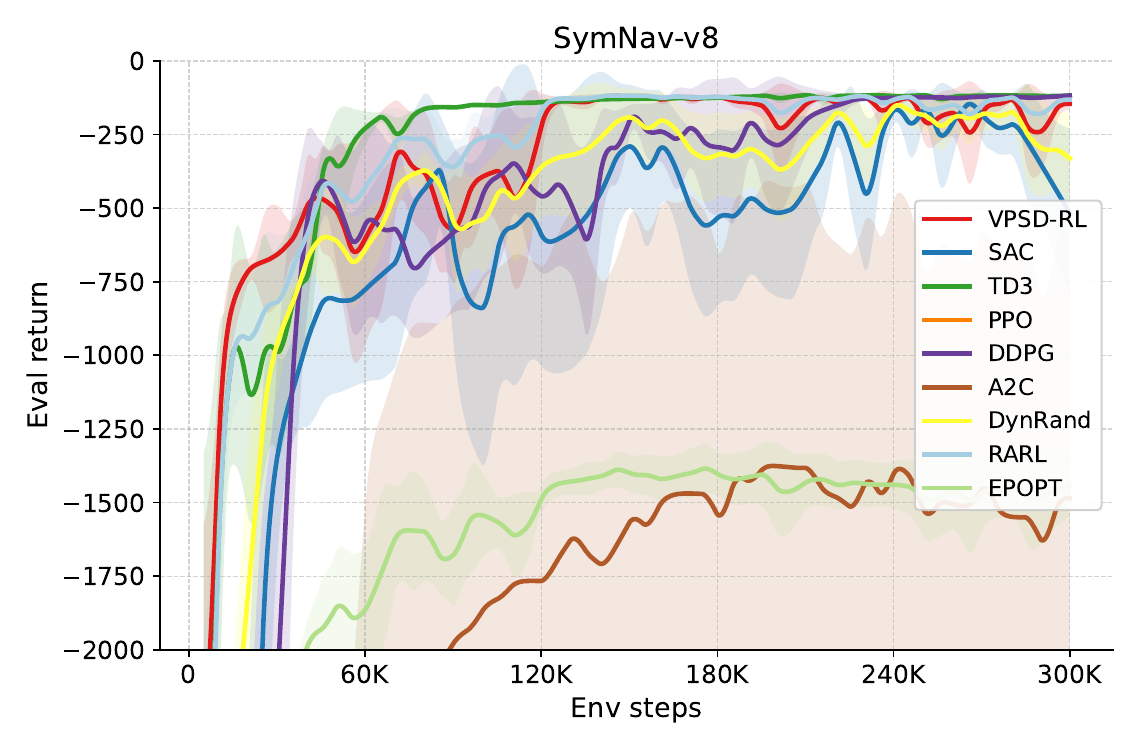}
    \caption{\texttt{SymNav-V8}}
    \label{fig:symnav8}
  \end{subfigure}\hfill
  \begin{subfigure}[t]{0.32\textwidth}
    \centering
    \includegraphics[width=\linewidth]{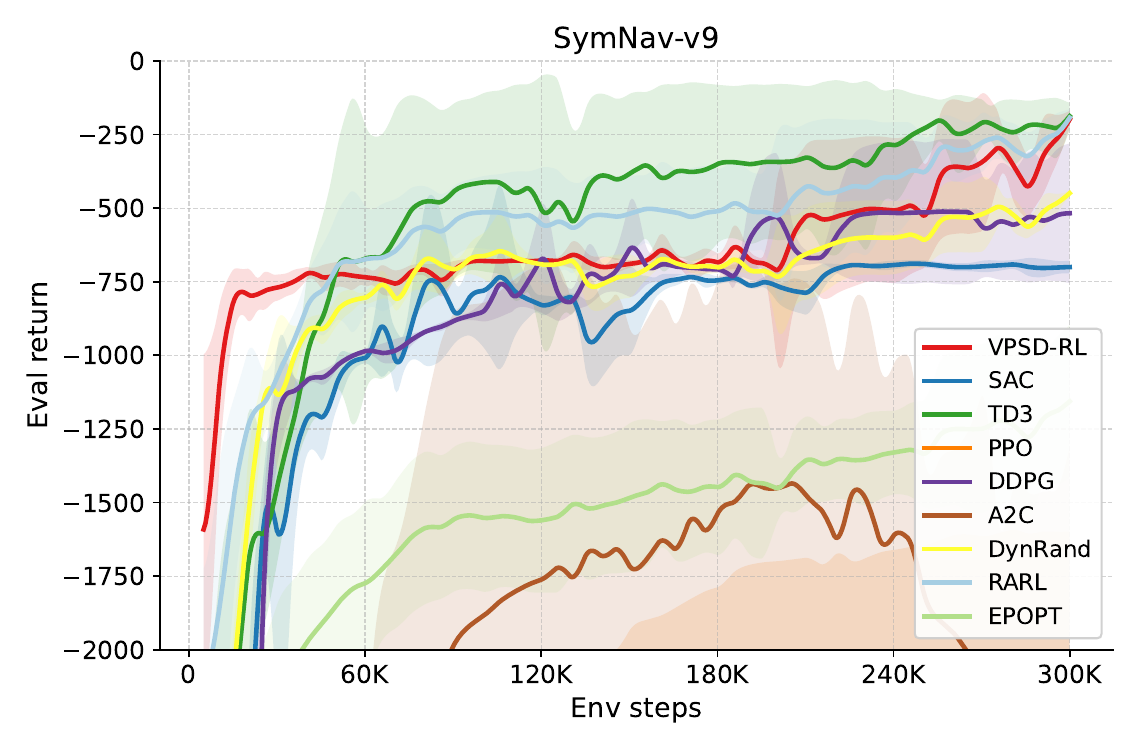}
    \caption{\texttt{SymNav-V9}}
    \label{fig:symnav9}
  \end{subfigure}

  \begin{subfigure}[t]{0.32\textwidth}
    \centering
    \includegraphics[width=\linewidth]{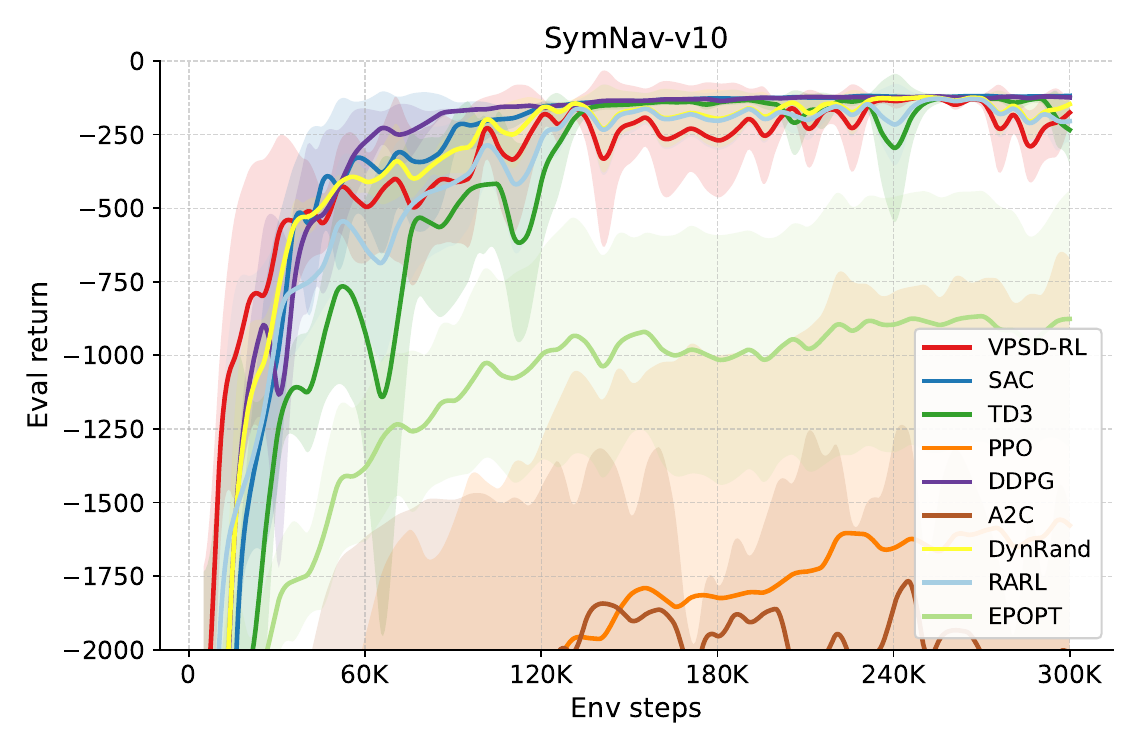}
    \caption{\texttt{SymNav-V10}}
    \label{fig:symnav10}
  \end{subfigure}\hfill
  \begin{subfigure}[t]{0.32\textwidth}
    \centering
    \includegraphics[width=\linewidth]{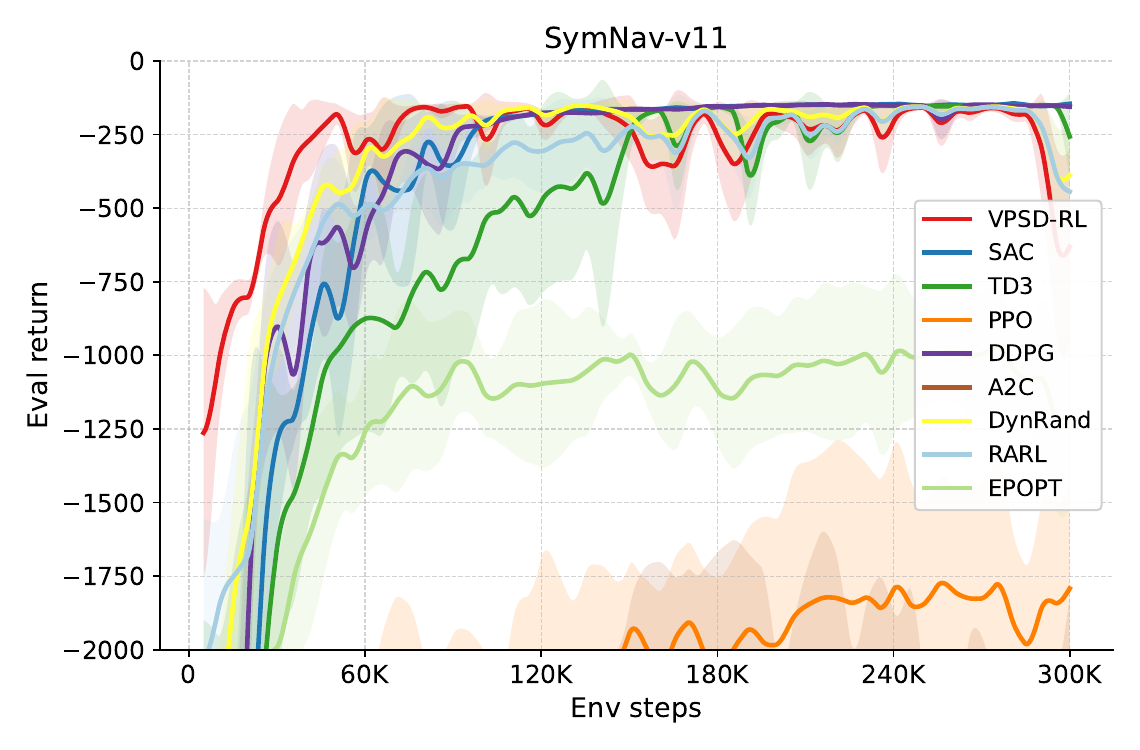}
    \caption{\texttt{SymNav-V11}}
    \label{fig:symnav11}
  \end{subfigure}\hfill
  \begin{subfigure}[t]{0.32\textwidth}
    \centering
    \includegraphics[width=\linewidth]{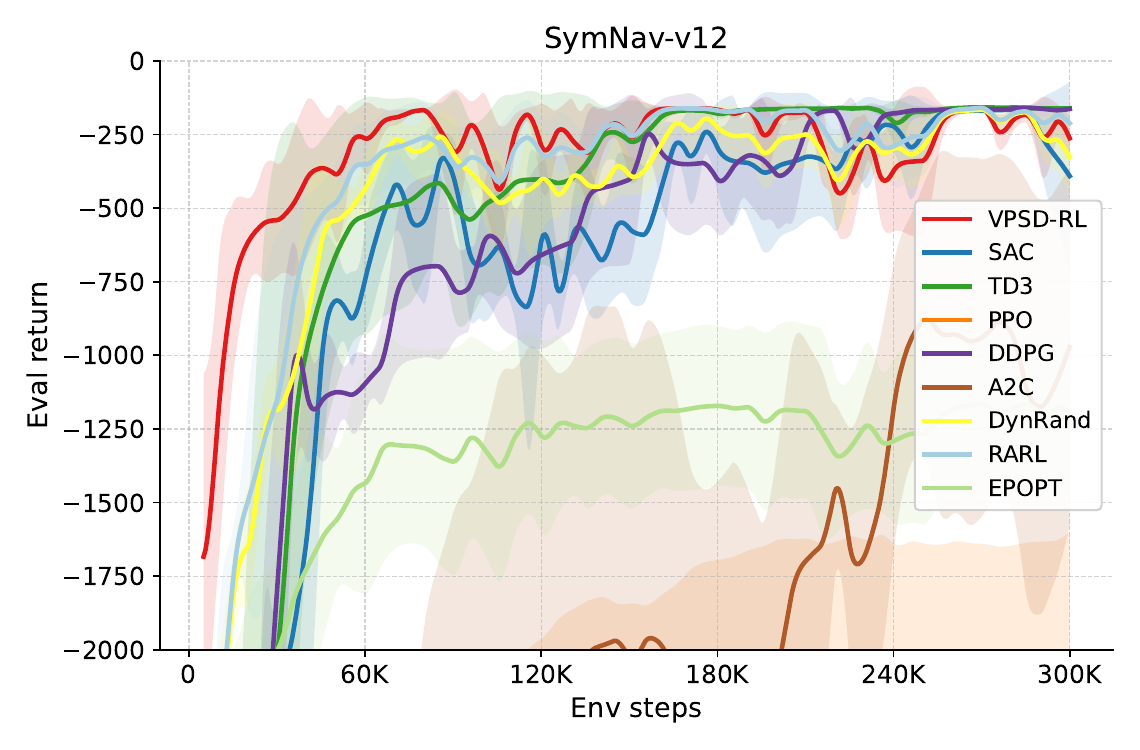}
    \caption{\texttt{SymNav-V12}}
    \label{fig:symnav12}
  \end{subfigure}

  \begin{subfigure}[t]{0.32\textwidth}
    \centering
    \includegraphics[width=\linewidth]{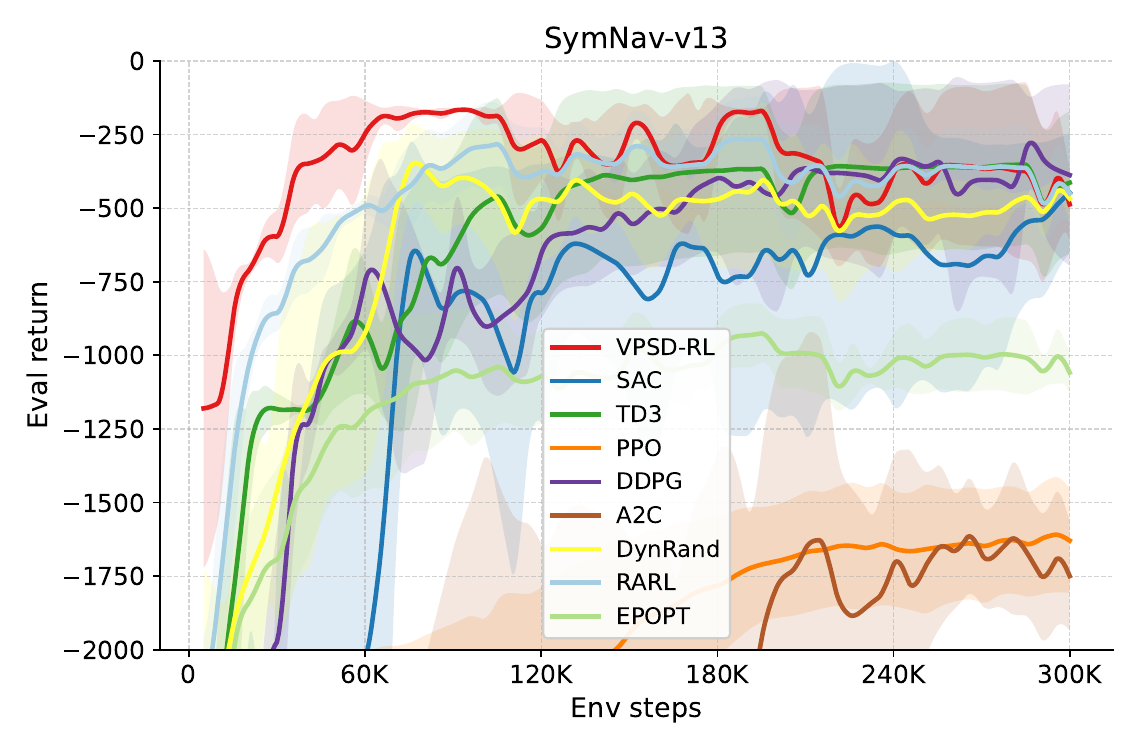}
    \caption{\texttt{SymNav-V13}}
    \label{fig:symnav13}
  \end{subfigure}\hfill
  \begin{subfigure}[t]{0.32\textwidth}
    \centering
    \includegraphics[width=\linewidth]{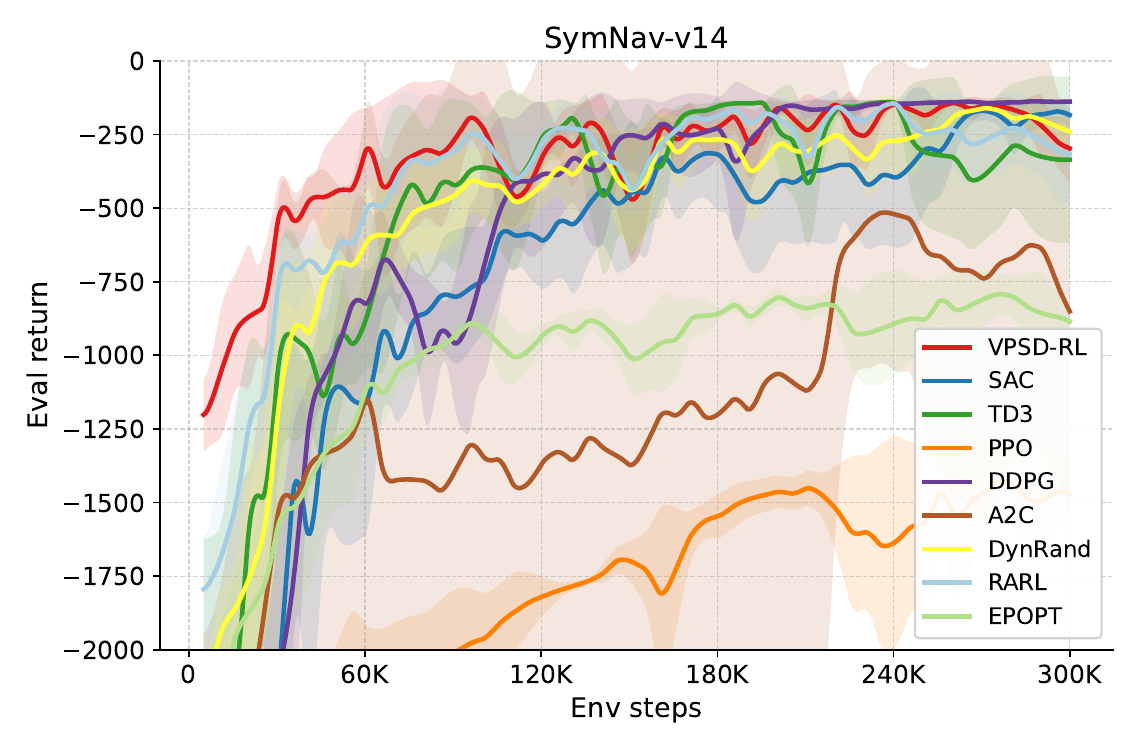}
    \caption{\texttt{SymNav-V14}}
    \label{fig:symnav14}
  \end{subfigure}\hfill
  \begin{subfigure}[t]{0.32\textwidth}
    \centering
    \includegraphics[width=\linewidth]{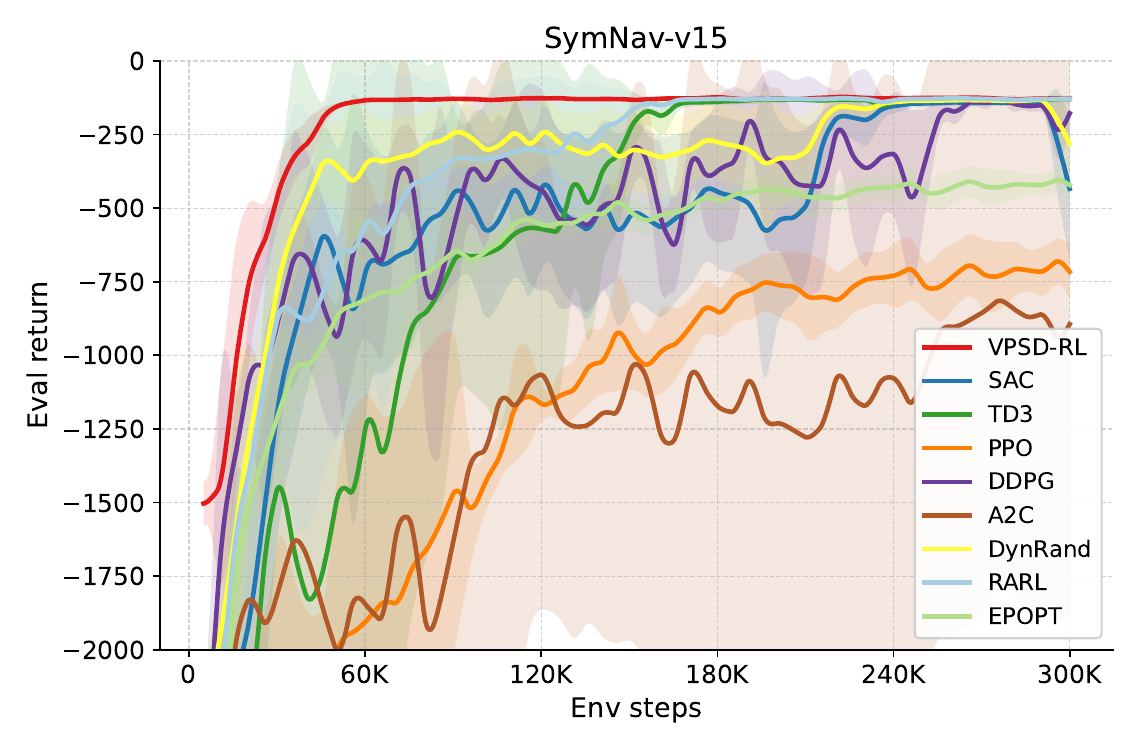}
    \caption{\texttt{SymNav-V15}}
    \label{fig:symnav15}
  \end{subfigure}

  \caption{
  Full learning curves on all SymNav-15 variants for all compared methods.
  }
  \label{fig:complex_all_15}
\end{figure*}

% \begin{figure}[t]
%   \centering
%   \includegraphics[width=\linewidth]{figures/complex_learning_curves_all_15.pdf}
%   \caption{Full learning curves on all SymNav-15 variants for all compared methods.}
%   \label{fig:complex_all_15}
% \end{figure}

\begin{table}[t]
\centering
\caption{
Aggregate results on \emph{SymNav-V1}.
Numbers are mean $\pm$ std over seeds, averaged across all variants.
AUC is computed up to a fixed step budget.
Wall-clock reports seconds per $10^5$ environment steps.
% (hardware/config in Appendix~\ref{app:compute}).
}
\label{tab:main_results_symnav}
\resizebox{0.85\linewidth}{!}{
\begin{tabular}{lccc}
\toprule
Method & Avg. normalized return $\uparrow$ & AUC $\uparrow$ & Wall-clock (sec / $10^5$ steps) $\downarrow$ \\
\midrule
A2C                 &\texttt{-1876.58}$\pm$\texttt{321.57} & \texttt{-1952.71}$\pm$\texttt{217.57} & \texttt{175}$\pm$\texttt{1} \\
DDPG                & \texttt{-106.97}$\pm$\texttt{2.36}   & \texttt{-206.47}$\pm$\texttt{39.70}	 & \texttt{540}$\pm$\texttt{7}
 \\
PPO                 & \texttt{-270.43}$\pm$\texttt{136.65}	 & \texttt{-424.68}$\pm$\texttt{140.79}	 & \texttt{217}$\pm$\texttt{35}
 \\
SAC                 & \texttt{-106.60}$\pm$\texttt{1.97}	 & \texttt{-233.54}$\pm$\texttt{43.70}	 & \texttt{410}$\pm$\texttt{12}
 \\
TD3                 & \texttt{-104.58}$\pm$\texttt{0.41}  & \texttt{-172.08}$\pm$\texttt{25.35}	 & \texttt{754}$\pm$\texttt{10}
 \\
\midrule
DynRand             & \texttt{-115.54}$\pm$\texttt{7.25}	 & \texttt{-181.49}$\pm$\texttt{18.78}	 & \texttt{581}$\pm$\texttt{12}
 \\
RARL                & \texttt{-114.53}$\pm$\texttt{6.48}	 & \texttt{-155.27}$\pm$\texttt{17.23}	 & \texttt{753}$\pm$\texttt{12}
 \\
EPOpt               & \texttt{-197.45}$\pm$\texttt{67.57}	 & \texttt{-271.25}$\pm$\texttt{78.60}	 & \texttt{485}$\pm$\texttt{27}
 \\
\midrule
VPSD-RL (Aug+Reg)   & \texttt{-124.48}$\pm$\texttt{12.65}	 & \texttt{-145.78}$\pm$\texttt{24.65}	 & \texttt{752}$\pm$\texttt{20} \\
\bottomrule
\end{tabular}
}
\end{table}

\paragraph{Training details.}
All methods use the same observation preprocessing, episode truncation, and evaluation protocol.
Per-method hyperparameters (optimizer, learning rate, batch size, replay settings for off-policy methods, etc.)
and the exact seed list are provided in Appendix~\ref{app:impl_details} and the accompanying configuration files.

% ------------------------------------------------------------
\subsection{Implementation details: model fitting and flow integration}
\label{app:impl_details}
\label{app:env_mujoco}

\paragraph{Data collection for discovery.}
Discovery operates on transition tuples $(s,a,r,s')$ collected online during RL training.
For on-policy methods, we use recent rollouts; for off-policy methods, we sample from the replay buffer.
To stabilize the determining-residual optimization, we (i) balance minibatches across the current state distribution,
and (ii) normalize state components (position/velocity scales) using running statistics.

\paragraph{Dynamics and reward modeling.}
For synthetic environments, ground-truth drift/diffusion/reward are known analytically via
Eqs.~\eqref{eq:rot2d_drift}--\eqref{eq:post_reward}, \eqref{eq:dw_drift}--\eqref{eq:dw_reward},
and are used to compute ground-truth generators and diagnostic alignment.
For complex environments (SymNav and MuJoCo), the discovery module fits local surrogate models
for drift $b(s,a)$ and reward $r(s,a)$ by regression on one-step transitions:
\[
\widehat{b}(s,a)\approx \frac{s'-s}{\Delta t},\qquad \widehat{r}(s,a)\approx r.
\]
If a diffusion model is used, we parameterize $Q(s,a)$ through a PSD form (e.g., diagonal or Cholesky factor)
to ensure positive semidefiniteness. All exact architectural choices and optimizer settings are provided in the config files.

\paragraph{Flow integration and constraint handling.}
Given a learned generator $X(\cdot)$ (and $Y(\cdot)$ when needed), we produce finite transformations by integrating the ODE
\[
\frac{\mathrm{d}}{\mathrm{d}\alpha} z(\alpha) = X(z(\alpha)),\qquad z(0)=z_0,
\]
(or its coupled version in the method section), using a fixed-step explicit solver (RK4 by default).
We integrate over a symmetric range $\alpha\in[-\alpha_{\max},\alpha_{\max}]$ with step size $h$.
For constrained environments (PostConstraint-Rot2D), we apply the same projection/retraction policy as the environment:
position is projected to the feasible annulus and the radial velocity component is removed when projection occurs
(Eqs.~\eqref{eq:annulus_projection}--\eqref{eq:annulus_radial_vel_remove}).
We reject transformations that cause numerical instability (NaNs/Infs) and fall back to identity augmentation in that case.

\section{Limitations and broader impact.}
VPSD-RL relies on differentiable surrogate drift, diffusion, and reward estimates, as well as locally valid Lie-group flows. Its performance may degrade when the learned generator is poorly identified, when the dynamics are strongly discontinuous, or when the value-preserving relation holds only on a narrow data support. The current evaluation is limited to controlled simulation benchmarks, so deployment in safety-critical robotic or physical systems requires additional validation, constraint checking, and monitoring under distribution shift. The expected positive impact is improved data efficiency and robustness in continuous-control RL; a possible negative impact is over-trusting learned transformations in high-stakes autonomous systems when the approximate-invariance assumptions fail.

\section{Proof}
\subsection{Proof of Theorem~\ref{thm:hjb_unique}}
\begin{proof}
Fix $\beta>0$ and consider the controlled diffusion
$ds_t=b(s_t,a_t)\,dt+\Sigma(s_t,a_t)\,dW_t$ with $a_t\in\mathcal A$ admissible.
For any Markov policy $\pi$, the discounted value is
$V^\pi(s)=\E^\pi\!\left[\int_0^\infty e^{-\beta t}r(s_t,a_t)\,dt\right]$ and
$V^\star(s)=\sup_\pi V^\pi(s)$.
Since $|r|\le R_{\max}$, for every $\pi$ and every $s$,
\[
|V^\pi(s)|
\le \E^\pi\!\left[\int_0^\infty e^{-\beta t}|r(s_t,a_t)|\,dt\right]
\le \int_0^\infty e^{-\beta t}R_{\max}\,dt
=\frac{R_{\max}}{\beta},
\]
hence $V^\star$ is well-defined and bounded.

Under Assumption~\ref{ass:hjb}(i)--(iii), the SDE is well-posed for admissible controls and the
strong Markov property together with control concatenation yields the discounted dynamic programming principle:
for any bounded stopping time $\tau$,
\begin{equation}
\label{eq:dpp_stop}
V^\star(s)=\sup_{\pi}\E^\pi\!\Bigl[\int_0^\tau e^{-\beta t}r(s_t,a_t)\,dt
+e^{-\beta \tau}V^\star(s_\tau)\,\Big|\,s_0=s\Bigr].
\end{equation}

We now verify that $V^\star$ is a bounded viscosity solution of the HJB equation
$\beta V=(\mathcal HV)$, where for $\varphi\in C^2(\mathcal S)$,
\[
(\mathcal L_a\varphi)(s):=b(s,a)\cdot\nabla\varphi(s)+\frac12\tr\!\bigl(Q(s,a)\nabla^2\varphi(s)\bigr),
\qquad
(\mathcal H\varphi)(s):=\sup_{a\in\mathcal A}\{r(s,a)+(\mathcal L_a\varphi)(s)\}.
\]

Let $\varphi\in C^2(\mathcal S)$ and suppose $V^\star-\varphi$ attains a local maximum at $s_0$ with
$V^\star(s_0)=\varphi(s_0)$. Then there exists $\rho>0$ such that
$V^\star(s)\le \varphi(s)$ for all $s\in\overline{B_\rho(s_0)}$.
Let $\tau_\rho:=\inf\{t\ge 0:\ s_t\notin B_\rho(s_0)\}$ and fix $t>0$; set $\tau:=t\wedge\tau_\rho$.
By \eqref{eq:dpp_stop} and the definition of supremum, for any $\varepsilon>0$ there exists an admissible policy
$\pi^\varepsilon$ such that
\[
V^\star(s_0)\le \E^{\pi^\varepsilon}\!\left[\int_0^\tau e^{-\beta u}r(s_u,a_u)\,du
+e^{-\beta \tau}V^\star(s_\tau)\right]+\varepsilon t.
\]
Since $s_\tau\in\overline{B_\rho(s_0)}$, we have $V^\star(s_\tau)\le \varphi(s_\tau)$, hence using
$V^\star(s_0)=\varphi(s_0)$,
\begin{equation}
\label{eq:sub_0}
0\le \E^{\pi^\varepsilon}\!\left[\int_0^\tau e^{-\beta u}r(s_u,a_u)\,du
+e^{-\beta \tau}\varphi(s_\tau)-\varphi(s_0)\right]+\varepsilon t.
\end{equation}
Apply It\^{o}'s formula to $e^{-\beta u}\varphi(s_u)$ under $\pi^\varepsilon$.
Using $ds_u=b(s_u,a_u)\,du+\Sigma(s_u,a_u)\,dW_u$ and $ds_u ds_u^\top=Q(s_u,a_u)\,du$, we obtain
\[
d\varphi(s_u)=\nabla\varphi(s_u)\cdot b(s_u,a_u)\,du
+\nabla\varphi(s_u)\cdot \Sigma(s_u,a_u)\,dW_u
+\frac12\tr\!\bigl(Q(s_u,a_u)\nabla^2\varphi(s_u)\bigr)\,du,
\]
and therefore
\[
d\bigl(e^{-\beta u}\varphi(s_u)\bigr)
=e^{-\beta u}\bigl((\mathcal L_{a_u}\varphi)(s_u)-\beta\varphi(s_u)\bigr)\,du
+e^{-\beta u}\nabla\varphi(s_u)\cdot\Sigma(s_u,a_u)\,dW_u.
\]
Integrating from $0$ to $\tau$ gives the identity
\[
e^{-\beta\tau}\varphi(s_\tau)-\varphi(s_0)
=\int_0^\tau e^{-\beta u}\bigl((\mathcal L_{a_u}\varphi)(s_u)-\beta\varphi(s_u)\bigr)\,du
+\int_0^\tau e^{-\beta u}\nabla\varphi(s_u)\cdot\Sigma(s_u,a_u)\,dW_u.
\]
The stochastic integral is a martingale with zero expectation (bounded coefficients and $\tau\le t<\infty$), hence
\[
\E^{\pi^\varepsilon}\!\left[e^{-\beta\tau}\varphi(s_\tau)-\varphi(s_0)\right]
=\E^{\pi^\varepsilon}\!\left[\int_0^\tau e^{-\beta u}\bigl((\mathcal L_{a_u}\varphi)(s_u)-\beta\varphi(s_u)\bigr)\,du\right].
\]
Substituting into \eqref{eq:sub_0} yields
\[
0\le \E^{\pi^\varepsilon}\!\left[\int_0^\tau e^{-\beta u}\Bigl(r(s_u,a_u)+(\mathcal L_{a_u}\varphi)(s_u)-\beta\varphi(s_u)\Bigr)\,du\right]+\varepsilon t.
\]
Define $\Psi(s):=\sup_{a\in\mathcal A}\{r(s,a)+(\mathcal L_a\varphi)(s)-\beta\varphi(s)\}
=(\mathcal H\varphi)(s)-\beta\varphi(s)$.
Since $r(s_u,a_u)+(\mathcal L_{a_u}\varphi)(s_u)-\beta\varphi(s_u)\le \Psi(s_u)$,
\[
0\le \E^{\pi^\varepsilon}\!\left[\int_0^\tau e^{-\beta u}\Psi(s_u)\,du\right]+\varepsilon t.
\]
Dividing by $t>0$ gives
\[
0\le \frac1t\,\E^{\pi^\varepsilon}\!\left[\int_0^\tau e^{-\beta u}\Psi(s_u)\,du\right]+\varepsilon.
\]
We claim $\frac1t\,\E^{\pi^\varepsilon}\!\left[\int_0^\tau e^{-\beta u}\Psi(s_u)\,du\right]\to \Psi(s_0)$ as $t\downarrow 0$.
Indeed, since $\tau\le t$ and $\Psi$ is bounded on $\overline{B_\rho(s_0)}$, using $|e^{-\beta u}-1|\le \beta u$ we have
\[
\left|\frac1t\,\E\!\left[\int_0^\tau (e^{-\beta u}-1)\Psi(s_u)\,du\right]\right|
\le \frac1t\,\E\!\left[\int_0^\tau \beta u\,\|\Psi\|_\infty\,du\right]
\le \frac{\beta\|\Psi\|_\infty}{2}\,t \to 0.
\]
Moreover, $\Psi$ is uniformly continuous on $\overline{B_\rho(s_0)}$, so for its modulus of continuity $\omega(\cdot)$,
the term $\frac1t\,\E\!\left[\int_0^\tau |\Psi(s_u)-\Psi(s_0)|\,du\right]$ vanishes as $t\downarrow 0$ because
$\sup_{0\le u\le t}\|s_u-s_0\|\to 0$ in probability under bounded coefficients (drift is $O(t)$ and the martingale part is $O(\sqrt t)$ by BDG),
and $\tau=t\wedge\tau_\rho$ satisfies $\E[\tau]/t\to 1$.
Consequently,
\[
\lim_{t\downarrow 0}\frac1t\,\E^{\pi^\varepsilon}\!\left[\int_0^\tau e^{-\beta u}\Psi(s_u)\,du\right]=\Psi(s_0).
\]
Letting $t\downarrow 0$ yields $0\le \Psi(s_0)+\varepsilon$, and then $\varepsilon\downarrow 0$ gives $\Psi(s_0)\ge 0$, i.e.
$\beta\varphi(s_0)\le (\mathcal H\varphi)(s_0)$. This proves that $V^\star$ is a viscosity subsolution.

Now suppose $V^\star-\varphi$ attains a local minimum at $s_0$ with $V^\star(s_0)=\varphi(s_0)$.
Then for some $\rho>0$, $V^\star\ge \varphi$ on $\overline{B_\rho(s_0)}$, and with $\tau=t\wedge\tau_\rho$ we have,
for any fixed $a\in\mathcal A$, by \eqref{eq:dpp_stop} (since $\sup_\pi$ dominates the constant control $a_u\equiv a$),
\[
V^\star(s_0)\ge \E\!\left[\int_0^\tau e^{-\beta u}r(s_u,a)\,du+e^{-\beta \tau}V^\star(s_\tau)\right]
\ge \E\!\left[\int_0^\tau e^{-\beta u}r(s_u,a)\,du+e^{-\beta \tau}\varphi(s_\tau)\right].
\]
Using $V^\star(s_0)=\varphi(s_0)$ and the same It\^{o} identity as above (with generator $\mathcal L_a$) gives
\[
0\ge \E\!\left[\int_0^\tau e^{-\beta u}\Bigl(r(s_u,a)+(\mathcal L_a\varphi)(s_u)-\beta\varphi(s_u)\Bigr)\,du\right].
\]
Divide by $t$ and let $t\downarrow 0$ to obtain
$0\ge r(s_0,a)+(\mathcal L_a\varphi)(s_0)-\beta\varphi(s_0)$ for every $a\in\mathcal A$.
Taking the supremum over $a$ yields $\beta\varphi(s_0)\ge (\mathcal H\varphi)(s_0)$, hence $V^\star$ is a viscosity supersolution.
Therefore $V^\star$ is a bounded viscosity solution of $\beta V=\mathcal HV$, establishing existence of a bounded viscosity solution.

Uniqueness among bounded viscosity solutions follows directly from the comparison principle in Assumption~\ref{ass:hjb}(iv):
if $u$ is a bounded viscosity subsolution and $v$ a bounded viscosity supersolution, then $u\le v$.
In particular, if $V$ is any bounded viscosity solution, then $V$ is both a subsolution and a supersolution, so
$V\le V^\star$ and $V^\star\le V$, implying $V\equiv V^\star$.
\end{proof}

\subsection{Proof of Lemma~\ref{lem:semigroup_equiv}}
\begin{proof}
We fix $a\in\mathcal A$ and write $\tilde a:=h_\vartheta(a)$, $T_t:=P_t^a$, $\tilde T_t:=P_t^{\tilde a}$,
$A:=\mathcal L^a$, $\tilde A:=\mathcal L^{\tilde a}$, and $U:=U_{g_\vartheta}$.

We will use two standard $C_0$-semigroup facts, stated and proved here for completeness.

\begin{lemma}
\label{lem:c0_uniq}
Let $(S_t)_{t\ge 0}$ be a strongly continuous semigroup on a Banach space $X$ with generator $G$.
For every $x\in\Dom(G)$, the map $t\mapsto S_t x$ is norm-differentiable and satisfies
$ \frac{d}{dt}S_t x = G S_t x = S_t G x$ for all $t\ge 0$.
Moreover, if $w:[0,\infty)\to X$ is norm-differentiable, $w(t)\in\Dom(G)$ for all $t$, and
$w'(t)=G w(t)$ with $w(0)=x\in\Dom(G)$, then necessarily $w(t)=S_t x$ for all $t\ge 0$.
\end{lemma}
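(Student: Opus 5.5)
The statement to prove is the auxiliary Lemma~\ref{lem:c0_uniq}: two standard facts about a $C_0$-semigroup $(S_t)$ with generator $G$.

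\emph{Differentiability and commutation.} Fix $x\in\Dom(G)$ and $t\ge 0$. For $h>0$ we use the semigroup property to write
\[
\frac{S_{t+h}x-S_tx}{h}=S_t\frac{S_hx-x}{h}=\frac{S_h-I}{h}S_tx .
\]
Since $\|S_t\|$ is bounded and $(S_hx-x)/h\to Gx$, the middle expression tends to $S_tGx$. Reading the right-hand form, the same limit shows $S_tx\in\Dom(G)$ with $GS_tx=S_tGx$. This gives the right derivative.

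For the left derivative, take $0<h\le t$ and write
\[
\frac{S_tx-S_{t-h}x}{h}-S_tGx=S_{t-h}\Bigl(\frac{S_hx-x}{h}-Gx\Bigr)+(S_{t-h}Gx-S_tGx).
\]
The first term vanishes because $\sup_{u\le t}\|S_u\|\le Me^{\omega t}$, the exponential bound coming from uniform boundedness. The second term vanishes by strong continuity. Hence $t\mapsto S_tx$ is differentiable with $\frac{d}{dt}S_tx=GS_tx=S_tGx$.

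\emph{Uniqueness.} Let $w$ be as in the statement and fix $t>0$. Set $v(u):=S_{t-u}w(u)$ for $u\in[0,t]$; we will show $v'(u)=0$. The difference quotient splits as
\[
\frac{v(u+h)-v(u)}{h}=S_{t-u-h}\frac{w(u+h)-w(u)}{h}+\frac{S_{t-u-h}-S_{t-u}}{h}\,w(u).
\]
The first term tends to $S_{t-u}w'(u)=S_{t-u}Gw(u)$. This uses local uniform boundedness of $S$, strong continuity, and the fact that $(w(u+h)-w(u))/h-w'(u)\to 0$ in norm. The second term tends to $-S_{t-u}Gw(u)$ by the first part, since $w(u)\in\Dom(G)$. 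So $v'\equiv 0$ on $[0,t]$, with one-sided derivatives at the endpoints.

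A Banach-valued function with zero derivative is constant. To see this, apply any $\ell\in X^*$ and use the scalar mean value theorem, then conclude by Hahn--Banach. Therefore $w(t)=v(t)=v(0)=S_tx$.

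The main obstacle is the product-rule step in the uniqueness argument. It requires the uniform bound $\sup_{u\le t}\|S_u\|<\infty$ together with a careful split into terms, so that the operator family converges only strongly while the vectors converge in norm. I would state the uniform-boundedness bound explicitly first and reuse it in both parts.
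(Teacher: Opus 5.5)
Your proof is correct and follows the paper's argument essentially step for step. It uses the same semigroup-property difference quotients for the right and left derivatives and for $GS_tx=S_tGx$, and the same auxiliary function $S_{t-u}w(u)$ with vanishing derivative for uniqueness. You are in fact more careful than the paper in two places: it justifies the left derivative only by strong continuity and cites an unproved ``product rule in $X$'', whereas you supply the local bound $\sup_{u\le t}\|S_u\|<\infty$, the explicit splitting of the difference quotient, and the functional/Hahn--Banach argument that a Banach-valued function with zero derivative is constant.
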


\begin{proof}
Fix $x\in\Dom(G)$. By definition of $G$, the limit $Gx=\lim_{h\downarrow 0}\frac{S_hx-x}{h}$ exists in $X$.
For $t\ge 0$ and $h>0$, use the semigroup property to compute
\[
\frac{S_{t+h}x-S_tx}{h}
=\frac{S_t(S_hx-x)}{h}
=S_t\!\left(\frac{S_hx-x}{h}\right).
\]
Since $S_t$ is bounded for fixed $t$, letting $h\downarrow 0$ yields the right-derivative
$\lim_{h\downarrow 0}\frac{S_{t+h}x-S_tx}{h}=S_tGx$.
For $0<h<t$, similarly
\[
\frac{S_tx-S_{t-h}x}{h}
=\frac{S_{t-h}(S_hx-x)}{h}
=S_{t-h}\!\left(\frac{S_hx-x}{h}\right),
\]
and letting $h\downarrow 0$ gives the left-derivative equal to $S_tGx$ by strong continuity of $S_{t-h}$ at $t$.
Thus $t\mapsto S_tx$ is norm-differentiable and $\frac{d}{dt}S_tx=S_tGx$.
To identify $G S_t x$, note that for $h>0$,
\[
\frac{S_h(S_t x)-S_t x}{h}
=\frac{S_t(S_hx-x)}{h}
=S_t\!\left(\frac{S_hx-x}{h}\right)\to S_tGx \quad (h\downarrow 0),
\]
so $S_t x\in\Dom(G)$ and $G S_t x = S_t G x$, hence also $\frac{d}{dt}S_t x = G S_t x$.

For uniqueness, let $w$ satisfy the stated assumptions and fix $t>0$. Define $\phi(s):=S_{t-s}w(s)$ for $s\in[0,t]$.
Using the already proved identity $\frac{d}{ds}S_{t-s}y=-G S_{t-s}y$ for $y\in\Dom(G)$ and the product rule in $X$,
we compute
\[
\phi'(s)=-G S_{t-s}w(s)+S_{t-s}w'(s)
=-G S_{t-s}w(s)+S_{t-s}G w(s)=0,
\]
so $\phi$ is constant and $\phi(t)=\phi(0)$ gives
$w(t)=S_t w(0)=S_t x$. Since $t$ was arbitrary, $w(t)=S_t x$ for all $t\ge 0$.
\end{proof}

\begin{lemma}
\label{lem:dom_dense}
If $(S_t)_{t\ge 0}$ is a strongly continuous semigroup on a Banach space $X$ with generator $G$, then $\Dom(G)$ is dense in $X$.
\end{lemma}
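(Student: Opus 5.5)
The density claim in Lemma~\ref{lem:dom_dense} will follow from the standard averaging construction. Given $x\in X$ and $t>0$, define the Bochner integral
\[
x_t:=\frac1t\int_0^t S_u x\,du .
\]
This is well defined because $u\mapsto S_u x$ is continuous on $[0,t]$. Strong continuity at $0$ means that for $\varepsilon>0$ there is $\delta>0$ with $\|S_u x-x\|<\varepsilon$ for $u<\delta$. Hence, for $t<\delta$,
\[
\|x_t-x\|\le \frac1t\int_0^t\|S_ux-x\|\,du<\varepsilon,
\]
so $x_t\to x$ as $t\downarrow0$. It therefore suffices to show that $x_t\in\Dom(G)$ for every $t>0$.

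To verify this, fix $h>0$. Since $S_h$ is bounded, it commutes with the Bochner integral, and a change of variables gives
\[
\frac{S_h x_t - x_t}{h}
=\frac1{th}\Big(\int_h^{t+h}S_ux\,du-\int_0^tS_ux\,du\Big)
=\frac1t\Big(\frac1h\int_t^{t+h}S_ux\,du-\frac1h\int_0^hS_ux\,du\Big).
\]
By continuity of $u\mapsto S_ux$, the two averaged integrals converge to $S_tx$ and $x$ as $h\downarrow0$. The same estimate as above applies, using continuity at the point $t$ for the first integral. Consequently $x_t\in\Dom(G)$ with $Gx_t=\tfrac1t(S_tx-x)$, and density follows.

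The only delicate points are technical. First, commuting $S_h$ with the integral requires that Bochner integrals commute with bounded operators, which is standard. Second, continuity of $u\mapsto S_ux$ on all of $[0,\infty)$, and not just at $0$, is needed for the limit at $t$. Right continuity follows from $S_{u+h}x-S_ux=S_u(S_hx-x)$. Left continuity uses local boundedness $\sup_{u\le t}\|S_u\|<\infty$, obtained from the uniform boundedness principle applied near $0$ together with the semigroup property. I expect this local-boundedness step to be the main thing to state carefully; everything else is routine.
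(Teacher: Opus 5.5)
Your proof is correct, but it takes a different route from the paper. The paper uses the Laplace-transform (resolvent) regularization $x_\lambda=\lambda\int_0^\infty e^{-\lambda t}S_tx\,dt$. It asserts that $x_\lambda\to x$ as $\lambda\to\infty$ by an Abel-type argument, and that $x_\lambda\in\Dom(G)$ with $Gx_\lambda=\lambda(x_\lambda-x)$, i.e.\ $x_\lambda=\lambda(\lambda I-G)^{-1}x$. Both facts are left to the reader (``one checks directly'').

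You instead use the Ces\`aro average $x_t=\frac1t\int_0^tS_ux\,du$ and actually carry out both verifications, obtaining the explicit formula $Gx_t=\frac1t(S_tx-x)$. The advantage of your route is that every integral is over a compact interval. Local boundedness of $\|S_u\|$, which you correctly derive from the uniform boundedness principle and the semigroup property, is therefore all you need.

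The paper's improper integral over $[0,\infty)$ needs more. It requires the exponential bound $\|S_t\|\le Me^{\omega t}$ (a further consequence of local boundedness plus the semigroup property) and the restriction $\lambda>\omega$. The paper's claim that local boundedness suffices for every $\lambda>0$ is accurate only for, e.g., contraction semigroups such as the Markov semigroups used in the main text. What the paper's route buys in exchange is a direct link to the resolvent $(\lambda I-G)^{-1}$, which is the entry point to Hille--Yosida theory. For the bare density claim, your averaging argument is the more elementary and self-contained choice.

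One small refinement: for the limit $\frac1h\int_t^{t+h}S_ux\,du\to S_tx$ you only need right continuity at $t$. This follows from $\|S_ux-S_tx\|\le\|S_t\|\,\|S_{u-t}x-x\|$ for $u\ge t$, so left continuity is not actually needed there. Local boundedness is still required for the Bochner integral to exist.
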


\begin{proof}
Let $\lambda>0$ and define the bounded operator (the Yosida approximation)
\[
G_\lambda x := \lambda \int_0^\infty e^{-\lambda t} S_t x \,dt - x,
\qquad x\in X,
\]
where the Bochner integral exists because $t\mapsto S_t x$ is continuous and $\|S_t\|$ is locally bounded.
Define $x_\lambda := \lambda \int_0^\infty e^{-\lambda t} S_t x \,dt$.
Then $x_\lambda\to x$ in $X$ as $\lambda\to\infty$ by strong continuity at $0$ (an Abel-type limit).
Moreover, one checks directly from the generator definition that $x_\lambda\in\Dom(G)$ and
$G x_\lambda = \lambda(x_\lambda-x)$ (equivalently $x_\lambda\in\Ran(\lambda I-G)\subset\Dom(G)$).
Hence every $x\in X$ is a norm-limit of elements in $\Dom(G)$, proving density.
\end{proof}

We now prove the two directions of Lemma~\ref{lem:semigroup_equiv}.

Assume first the generator intertwining holds on the common generator core:
\[
A U f = U\tilde A f,\qquad \forall f\in \Dom(\tilde A).
\]
Fix \(f\in\Dom(\tilde A)\) and define \(u(t):=U\tilde T_t f\) for \(t\ge 0\).
By Lemma~\ref{lem:c0_uniq} applied to \((\tilde T_t)\) and \(\tilde A\), we have
\(\tilde T_t f\in\Dom(\tilde A)\) and
\[
\frac{d}{dt}\tilde T_t f=\tilde A\tilde T_t f.
\]
Since \(U\) is bounded linear,
\[
u'(t)=U\tilde A\tilde T_t f.
\]
Because \(A U=U\tilde A\) on \(\Dom(\tilde A)\), we have
\[
u'(t)=A U\tilde T_t f=A u(t),\qquad u(0)=Uf.
\]
Define \(v(t):=T_tUf\). By Lemma~\ref{lem:c0_uniq} applied to \((T_t)\) and \(A\),
\(v'(t)=Av(t)\) and \(v(0)=Uf\). Hence \(u(t)=v(t)\), namely
\[
U\tilde T_t f=T_tUf,\qquad \forall t\ge0,\ \forall f\in\Dom(\tilde A).
\]
By density of \(\Dom(\tilde A)\) and boundedness of \(U,T_t,\tilde T_t\), this extends to all
\(f\in\mathcal F\):
\[
T_tU=U\tilde T_t.
\]
Equivalently,
\[
P_t^aU_{g_\vartheta}=U_{g_\vartheta}P_t^{h_\vartheta(a)}.
\]

Conversely, assume the semigroup intertwining holds:
\[
T_tU=U\tilde T_t,\qquad t\ge0.
\]
For every \(f\in\Dom(\tilde A)\),
\[
A U f
=
\lim_{t\downarrow0}\frac{T_tUf-Uf}{t}
=
\lim_{t\downarrow0}\frac{U\tilde T_t f-Uf}{t}
=
U\tilde A f.
\]
Therefore \(AU=U\tilde A\), equivalently
\[
\mathcal L^aU_{g_\vartheta}=U_{g_\vartheta}\mathcal L^{h_\vartheta(a)},
\]
which is Eq.~\eqref{eq:gen_equiv} on the generator domain.
\end{proof}

\subsection{Proof of Theorem~\ref{thm:determining_necessity}}
\begin{proof}
Fix $(s,a)\in\mathcal{S}\times\mathcal{A}$ and abbreviate $g_\epsilon:=g_{\vartheta_\epsilon}$, $h_\epsilon:=h_{\vartheta_\epsilon}$.
Reward invariance in Definition~\ref{def:exact_sym} reads
$r(g_\epsilon(s),h_\epsilon(a))=r(s,a)$ for all $\epsilon$.
Differentiating at $\epsilon=0$ and using the chain rule gives
\[
0=\left.\frac{d}{d\epsilon}r(g_\epsilon(s),h_\epsilon(a))\right|_{\epsilon=0}
=\nabla_s r(s,a)\cdot \left.\frac{d}{d\epsilon}g_\epsilon(s)\right|_{\epsilon=0}
+\nabla_a r(s,a)\cdot \left.\frac{d}{d\epsilon}h_\epsilon(a)\right|_{\epsilon=0},
\]
which is exactly~\eqref{eq:det_reward}.

% We now turn to generator equivariance. For each fixed $\epsilon$, Definition~\ref{def:exact_sym} gives that for all $f\in C^2(\mathcal{S})$,
For each fixed \(\epsilon\), Definition~\ref{def:exact_sym} gives that for all \(f\in C^2(\mathcal{S})\),
\begin{equation}
\label{eq:gen_eq_eps}
\bigl(\mathcal{L}^{a}(f\circ g_\epsilon)\bigr)(s)
=
\bigl((\mathcal{L}^{h_\epsilon(a)} f)\circ g_\epsilon\bigr)(s)
=
\bigl(\mathcal{L}^{h_\epsilon(a)} f\bigr)(g_\epsilon(s)).
\end{equation}
Write the controlled generator in coordinates as
\[
(\mathcal{L}^a f)(x)= b(x,a)\cdot\nabla f(x) + \frac12\,\mathrm{tr}\!\bigl(Q(x,a)\,\nabla^2 f(x)\bigr),
\qquad Q(x,a)=\Sigma(x,a)\Sigma(x,a)^\top.
\]
Set $x_\epsilon:=g_\epsilon(s)$ and denote the Jacobian matrix and second derivatives of $g_\epsilon$ by
\[
J_\epsilon(s):=\nabla_s g_\epsilon(s)\in\mathbb{R}^{d\times d},
\qquad
\bigl(\nabla_s^2 g_\epsilon^k(s)\bigr)_{ij}:=\partial_{ij}\bigl(g_\epsilon^k(s)\bigr),\ \ k=1,\dots,d.
\]
For later use we expand the first and second spatial derivatives of $f\circ g_\epsilon$ at $s$.
By the chain rule, for each $i=1,\dots,d$,
\[
\partial_i(f\circ g_\epsilon)(s)
=\sum_{k=1}^d (\partial_k f)(x_\epsilon)\,\partial_i(g_\epsilon^k)(s),
\quad\text{i.e.}\quad
\nabla(f\circ g_\epsilon)(s)=J_\epsilon(s)^\top \nabla f(x_\epsilon).
\]
Differentiating once more and using product/chain rules yields, for each $i,j$,
\[
\partial_{ij}(f\circ g_\epsilon)(s)
=\sum_{k,\ell=1}^d (\partial_{k\ell}f)(x_\epsilon)\,\partial_i(g_\epsilon^k)(s)\,\partial_j(g_\epsilon^\ell)(s)
+\sum_{k=1}^d (\partial_k f)(x_\epsilon)\,\partial_{ij}(g_\epsilon^k)(s).
\]
Equivalently in matrix form,
\[
\nabla^2(f\circ g_\epsilon)(s)=J_\epsilon(s)^\top \nabla^2 f(x_\epsilon) J_\epsilon(s)
+\sum_{k=1}^d \bigl(\partial_k f(x_\epsilon)\bigr)\,\nabla_s^2 g_\epsilon^k(s).
\]
Plugging these into the left-hand side of~\eqref{eq:gen_eq_eps} gives
\begin{align*}
\bigl(\mathcal{L}^{a}(f\circ g_\epsilon)\bigr)(s)
&= b(s,a)\cdot \nabla(f\circ g_\epsilon)(s)
+\frac12\,\mathrm{tr}\!\Bigl(Q(s,a)\,\nabla^2(f\circ g_\epsilon)(s)\Bigr)\\
&= b(s,a)\cdot J_\epsilon(s)^\top \nabla f(x_\epsilon)
+\frac12\,\mathrm{tr}\!\Bigl(Q(s,a)\,J_\epsilon(s)^\top \nabla^2 f(x_\epsilon)J_\epsilon(s)\Bigr)\\
&\qquad
+\frac12\sum_{k=1}^d \bigl(\partial_k f(x_\epsilon)\bigr)\,
\mathrm{tr}\!\Bigl(Q(s,a)\,\nabla_s^2 g_\epsilon^k(s)\Bigr).
\end{align*}
Using cyclicity of trace, define
\[
\widetilde Q_\epsilon(s,a):=J_\epsilon(s)\,Q(s,a)\,J_\epsilon(s)^\top,
\]
and define the correction vector \(c_\epsilon(s,a)\in\mathbb{R}^d\) by
\[
\bigl(c_\epsilon(s,a)\bigr)_k
:=
\mathrm{tr}\!\Bigl(Q(s,a)\,\nabla_s^2 g_\epsilon^k(s)\Bigr),
\qquad k=1,\dots,d.
\]
Then
\begin{equation}
\label{eq:LHS_collected}
\bigl(\mathcal{L}^{a}(f\circ g_\epsilon)\bigr)(s)
=
\Bigl(J_\epsilon(s)b(s,a)+\frac12 c_\epsilon(s,a)\Bigr)\cdot\nabla f(x_\epsilon)
+
\frac12\,\mathrm{tr}\!\Bigl(\widetilde Q_\epsilon(s,a)\,\nabla^2 f(x_\epsilon)\Bigr).
\end{equation}
On the other hand, the right-hand side of~\eqref{eq:gen_eq_eps} is
\begin{equation}
\label{eq:RHS_expanded}
\bigl(\mathcal{L}^{h_\epsilon(a)}f\bigr)(x_\epsilon)
=
b(x_\epsilon,h_\epsilon(a))\cdot\nabla f(x_\epsilon)
+
\frac12\,\mathrm{tr}\!\Bigl(Q(x_\epsilon,h_\epsilon(a))\,\nabla^2 f(x_\epsilon)\Bigr).
\end{equation}
Since~\eqref{eq:gen_eq_eps} holds for all \(f\in C^2(\mathcal{S})\), comparing~\eqref{eq:LHS_collected} and~\eqref{eq:RHS_expanded} yields
\[
u_\epsilon\cdot\nabla f(x_\epsilon)
+
\frac12\,\mathrm{tr}(M_\epsilon\nabla^2f(x_\epsilon))
=0
\quad\text{for all } f\in C^2(\mathcal S),
\]
where
\[
u_\epsilon
:=
J_\epsilon(s)b(s,a)+\frac12c_\epsilon(s,a)-b(x_\epsilon,h_\epsilon(a)),
\]
and
\[
M_\epsilon
:=
J_\epsilon(s)Q(s,a)J_\epsilon(s)^\top
-
Q(x_\epsilon,h_\epsilon(a)).
\]
Since the identity holds for all \(f\in C^2(\mathcal S)\), take local quadratic test functions
\(f(x)=p\cdot(x-x_\epsilon)+\frac12(x-x_\epsilon)^\top H(x-x_\epsilon)\), multiplied by a smooth cutoff if needed. Thus \(\nabla f(x_\epsilon)=p\) and \(\nabla^2 f(x_\epsilon)=H\) can be chosen arbitrarily. Choosing \(H=0\) and arbitrary \(p\) gives \(u_\epsilon=0\); choosing \(p=0\) and arbitrary symmetric \(H\) gives \(M_\epsilon=0\).
\(u_\epsilon=0\) and \(M_\epsilon=0\). Therefore, for each \(\epsilon\),
\begin{equation}
\label{eq:finite_Q}
Q(g_\epsilon(s),h_\epsilon(a))
=
J_\epsilon(s)Q(s,a)J_\epsilon(s)^\top,
\end{equation}
and
\begin{equation}
\label{eq:finite_b}
b(g_\epsilon(s),h_\epsilon(a))
=
J_\epsilon(s)b(s,a)
+
\frac12c_\epsilon(s,a),
\qquad
(c_\epsilon)_k
=
\mathrm{tr}\!\Bigl(Q(s,a)\nabla_s^2g_\epsilon^k(s)\Bigr).
\end{equation}

We now differentiate~\eqref{eq:finite_Q} and~\eqref{eq:finite_b} at \(\epsilon=0\).
Smoothness of the flows gives
\[
g_\epsilon(s)=s+\epsilon X(s)+o(\epsilon),
\qquad
h_\epsilon(a)=a+\epsilon Y(a)+o(\epsilon),
\]
\[
J_\epsilon(s)=I+\epsilon\nabla_sX(s)+o(\epsilon),
\qquad
\nabla_s^2g_\epsilon^k(s)=\epsilon\nabla_s^2X_k(s)+o(\epsilon).
\]
For the diffusion coefficient,
\[
Q(g_\epsilon(s),h_\epsilon(a))
=
Q(s,a)
+
\epsilon\bigl(\nabla_sQ(s,a)[X(s)]+\nabla_aQ(s,a)[Y(a)]\bigr)
+
o(\epsilon),
\]
whereas
\[
J_\epsilon(s)Q(s,a)J_\epsilon(s)^\top
=
Q(s,a)
+
\epsilon\bigl((\nabla_sX(s))Q(s,a)+Q(s,a)(\nabla_sX(s))^\top\bigr)
+
o(\epsilon).
\]
Equating the \(O(\epsilon)\) terms gives
\[
\nabla_sQ(s,a)[X(s)]
-
(\nabla_sX(s))Q(s,a)
-
Q(s,a)(\nabla_sX(s))^\top
+
\nabla_aQ(s,a)[Y(a)]
=0,
\]
which is~\eqref{eq:det_diffQ_ito}.

For the drift coefficient,
\[
b(g_\epsilon(s),h_\epsilon(a))
=
b(s,a)
+
\epsilon\bigl(\nabla_sb(s,a)X(s)+\nabla_ab(s,a)Y(a)\bigr)
+
o(\epsilon),
\]
whereas
\[
J_\epsilon(s)b(s,a)+\frac12c_\epsilon(s,a)
=
b(s,a)
+
\epsilon\bigl((\nabla_sX(s))b(s,a)+\frac12\Delta_{Q(s,a)}X(s)\bigr)
+
o(\epsilon).
\]
Equating the \(O(\epsilon)\) terms gives
\[
\nabla_sb(s,a)X(s)
-
(\nabla_sX(s))b(s,a)
+
\nabla_ab(s,a)Y(a)
-
\frac12\Delta_{Q(s,a)}X(s)
=0,
\]
which is~\eqref{eq:det_drift_ito}. This completes the proof.
\end{proof}

\subsection{Proof of Theorem~\ref{thm:local_sufficiency}}

\begin{proof}
Fix $(s,a)\in\Omega$ and fix $|\epsilon|<\epsilon_0$ small enough so that all points that appear below stay in $\Omega$
(which is possible by openness of $\Omega$ and continuity of $(\epsilon,s,a)\mapsto(g_\epsilon(s),h_\epsilon(a))$).
We first prove reward invariance, then generator equivariance.

Reward invariance follows by differentiating along the flow.
Define $\rho(\epsilon):=r(g_\epsilon(s),h_\epsilon(a))$.
Using $\frac{d}{d\epsilon}g_\epsilon(s)=X(g_\epsilon(s))$ and $\frac{d}{d\epsilon}h_\epsilon(a)=Y(h_\epsilon(a))$,
the chain rule gives
\[
\rho'(\epsilon)
=\nabla_s r(g_\epsilon(s),h_\epsilon(a))\cdot X(g_\epsilon(s))
+\nabla_a r(g_\epsilon(s),h_\epsilon(a))\cdot Y(h_\epsilon(a)).
\]
Since \eqref{eq:det_reward} holds on $\Omega$, the right-hand side is $0$ whenever $(g_\epsilon(s),h_\epsilon(a))\in\Omega$.
Hence $\rho'(\epsilon)=0$ for all such $\epsilon$, so $\rho(\epsilon)\equiv \rho(0)=r(s,a)$ and thus
$r(g_\epsilon(s),h_\epsilon(a))=r(s,a)$, i.e.\ \eqref{eq:reward_inv} holds on $\Omega$.

For generator equivariance, define
\[
s_\epsilon:=g_\epsilon(s),\qquad a_\epsilon:=h_\epsilon(a),
\qquad J_\epsilon(s):=\nabla_s g_\epsilon(s).
\]
We first derive finite coefficient transport identities. Since \eqref{eq:det_diffQ_ito} holds along
\((s_\epsilon,a_\epsilon)\), the matrix
\[
\bar Q_\epsilon:=Q(g_\epsilon(s),h_\epsilon(a))
\]
satisfies
\[
\bar Q_\epsilon'
=
(\nabla_sX(s_\epsilon))\bar Q_\epsilon
+
\bar Q_\epsilon(\nabla_sX(s_\epsilon))^\top,
\qquad
\bar Q_0=Q(s,a).
\]
The matrix
\[
\hat Q_\epsilon:=J_\epsilon(s)Q(s,a)J_\epsilon(s)^\top
\]
satisfies the same ODE and the same initial condition. Hence
\[
Q(g_\epsilon(s),h_\epsilon(a))
=
J_\epsilon(s)Q(s,a)J_\epsilon(s)^\top.
\]

Similarly, using \eqref{eq:det_drift_ito} and the Hessian evolution of \(g_\epsilon\), one obtains
\[
b(g_\epsilon(s),h_\epsilon(a))
=
J_\epsilon(s)b(s,a)
+
\frac12 c_\epsilon(s,a),
\qquad
(c_\epsilon)_k=
\tr\!\bigl(Q(s,a)\nabla_s^2 g_\epsilon^k(s)\bigr).
\]

Now let \(f\in C^2(\mathcal S)\). By the chain rule,
\[
\nabla(f\circ g_\epsilon)(s)=J_\epsilon(s)^\top\nabla f(g_\epsilon(s)),
\]
and
\[
\nabla^2(f\circ g_\epsilon)(s)
=
J_\epsilon(s)^\top\nabla^2 f(g_\epsilon(s))J_\epsilon(s)
+
\sum_{k=1}^d \partial_k f(g_\epsilon(s))\,\nabla_s^2 g_\epsilon^k(s).
\]
Therefore,
\[
\begin{aligned}
(\mathcal L^a(f\circ g_\epsilon))(s)
&=
\bigl(J_\epsilon b(s,a)+\tfrac12 c_\epsilon(s,a)\bigr)\cdot\nabla f(g_\epsilon(s))\\
&\quad+
\frac12
\tr\!\Big(J_\epsilon Q(s,a)J_\epsilon^\top
\nabla^2 f(g_\epsilon(s))\Big)\\
&=
b(g_\epsilon(s),h_\epsilon(a))\cdot\nabla f(g_\epsilon(s))
+
\frac12
\tr\!\Big(Q(g_\epsilon(s),h_\epsilon(a))\nabla^2 f(g_\epsilon(s))\Big)\\
&=
(\mathcal L^{h_\epsilon(a)}f)(g_\epsilon(s)).
\end{aligned}
\]
Thus
\[
\mathcal L^aU_{g_\epsilon}=U_{g_\epsilon}\mathcal L^{h_\epsilon(a)}.
\]
Together with reward invariance already proved, \((g_\epsilon,h_\epsilon)\) is an exact value-preserving structure on \(\Omega\).
\end{proof}

\subsection{Proof of Theorem~\ref{thm:frobenius}}

\begin{proof}
Fix a point $s_0$ in the neighborhood where the hypotheses hold. Because $\mathcal D$ is a smooth distribution of constant rank $K$,
there exists a (possibly smaller) neighborhood $U$ of $s_0$ and smooth vector fields $X_1,\dots,X_K$ on $U$ such that
$\mathcal D(s)=\mathrm{span}\{X_1(s),\dots,X_K(s)\}$ for all $s\in U$ and $\{X_1(s),\dots,X_K(s)\}$ is linearly independent for every $s\in U$.
Indeed, in any local chart, $\mathcal D$ is represented by a rank-$K$ smooth subbundle of $TU$, and constant rank allows choosing a smooth local frame.

Since the $X_i$ span $\mathcal D$ and $\mathcal D$ is involutive, for each $i,j$ there exist smooth coefficient functions
$c_{ij}^1,\dots,c_{ij}^K$ on $U$ such that
\begin{equation}
\label{eq:bracket_closure}
[X_i,X_j]=\sum_{\ell=1}^K c_{ij}^\ell\,X_\ell\qquad\text{on }U.
\end{equation}
Because $X_1(s_0)\neq 0$, after shrinking $U$ if necessary we may assume $X_1(s)\neq 0$ for all $s\in U$.
We will straighten $X_1$ and then reduce the rank by one on a transversal, using involutivity to guarantee that the reduction is well-defined.

We invoke the flow-box (straightening) theorem for a nonvanishing smooth vector field.
There exist local coordinates $\phi:U\to V\subset\mathbb R^d$, $\phi(s)=(y^1,\ldots,y^K,z^1,\ldots,z^{d-K})$,
with $\phi(s_0)=0$, such that in these coordinates the pushforward of $X_1$ is the constant coordinate field:
\begin{equation}
\label{eq:straighten}
\phi_*X_1=\frac{\partial}{\partial y^1}\quad\text{on }V.
\end{equation}
Equivalently, writing $s=\phi^{-1}(y,z)$ and abusing notation by identifying vector fields with their coordinate expressions,
\eqref{eq:straighten} means $X_1=\partial_{y^1}$ in $(y,z)$-coordinates.

Write the remaining generators $X_2,\dots,X_K$ in these coordinates as
\[
X_j=\sum_{\alpha=1}^d a_j^\alpha(y,z)\,\frac{\partial}{\partial u^\alpha},
\qquad u=(u^1,\dots,u^d)=(y^1,\dots,y^K,z^1,\dots,z^{d-K}),
\]
with smooth coefficients $a_j^\alpha$. Since $X_1=\partial_{y^1}$, the Lie bracket with $X_j$ becomes
\[
[X_1,X_j]=\left[\frac{\partial}{\partial y^1}, \sum_{\alpha=1}^d a_j^\alpha(y,z)\,\frac{\partial}{\partial u^\alpha}\right]
=\sum_{\alpha=1}^d \frac{\partial a_j^\alpha}{\partial y^1}(y,z)\,\frac{\partial}{\partial u^\alpha},
\]
because the coordinate vector fields commute and $\partial_{y^1}$ differentiates the coefficient functions.
On the other hand, by involutivity and \eqref{eq:bracket_closure} we have
\[
[X_1,X_j]=\sum_{\ell=1}^K c_{1j}^\ell(y,z)\,X_\ell
=c_{1j}^1(y,z)\,\frac{\partial}{\partial y^1}+\sum_{\ell=2}^K c_{1j}^\ell(y,z)\,X_\ell.
\]
Comparing these two expressions is useful when restricting to the transversal hypersurface
\[
\Sigma:=\{(y,z)\in V:\ y^1=0\}.
\]
At points of $\Sigma$, the component of $[X_1,X_j]$ tangent to $\Sigma$ is determined by the components of $X_2,\dots,X_K$ tangent to $\Sigma$.
This is the mechanism that makes the reduced distribution on $\Sigma$ involutive.

Define the projected (tangential) vector fields on $\Sigma$ by restricting each $X_j$ ($j\ge 2$) to $\Sigma$ and discarding the $\partial_{y^1}$ component:
\[
\bar X_j := X_j\big|_{\Sigma} - \bigl(X_j y^1\bigr)\big|_{\Sigma}\,\frac{\partial}{\partial y^1}\Big|_{\Sigma},
\qquad j=2,\dots,K.
\]
Then each $\bar X_j$ is a smooth vector field on $\Sigma$ taking values in $T\Sigma$.
Let $\bar{\mathcal D}$ be the distribution on $\Sigma$ spanned by $\bar X_2,\dots,\bar X_K$; it has rank $K-1$ near $0$ because
$X_1,\dots,X_K$ are linearly independent and $X_1$ is transverse to $\Sigma$.

We claim that $\bar{\mathcal D}$ is involutive on $\Sigma$.
To see this, take $i,j\in\{2,\dots,K\}$. Because $X_i$ and $X_j$ are tangent to $\mathcal D$ and $\mathcal D$ is involutive,
$[X_i,X_j]$ is a linear combination of $X_1,\dots,X_K$.
Restricting to $\Sigma$, the tangential component of $[X_i,X_j]$ along $\Sigma$ is therefore a linear combination of the tangential components of
$X_2,\dots,X_K$ along $\Sigma$, i.e.\ a linear combination of $\bar X_2,\dots,\bar X_K$.
More explicitly, write on $U$
\[
[X_i,X_j]=\sum_{\ell=1}^K c_{ij}^\ell\,X_\ell
\]
and restrict to $\Sigma$; since $X_1=\partial_{y^1}$ is normal to $\Sigma$ and $\bar X_\ell$ is the tangential part of $X_\ell$ for $\ell\ge 2$,
the tangential projection of $[X_i,X_j]\big|_{\Sigma}$ equals $\sum_{\ell=2}^K (c_{ij}^\ell|_{\Sigma})\,\bar X_\ell$.
But the tangential projection of $[X_i,X_j]\big|_{\Sigma}$ is exactly $[\bar X_i,\bar X_j]$ because brackets of tangent vector fields on $\Sigma$
computed as restrictions agree with intrinsic brackets on $\Sigma$. Hence $[\bar X_i,\bar X_j]\in\bar{\mathcal D}$ on $\Sigma$.
Therefore $\bar{\mathcal D}$ is involutive and has constant rank $K-1$ near $0\in\Sigma$.

At this point we apply induction on the rank.
For $K=1$, the straightening theorem already produces coordinates $(y^1,z^1,\dots,z^{d-1})$ with $X_1=\partial_{y^1}$, and then each $z^\alpha$
is constant along the integral curves of $X_1$ because $\partial_{y^1} z^\alpha=0$.
Assume the claim holds for rank $K-1$ distributions in dimension $d-1$.
Applying the induction hypothesis to the involutive constant-rank distribution $\bar{\mathcal D}$ on the $(d-1)$-dimensional manifold $\Sigma$,
we obtain $(d-1)-(K-1)=d-K$ smooth functions $\bar I_1,\dots,\bar I_{d-K}$ on a neighborhood $\Sigma_0\subset\Sigma$ of $0$
such that their differentials are linearly independent on $\Sigma_0$ and
\begin{equation}
\label{eq:bar_invariants}
\bar X_j\,\bar I_\alpha = 0\qquad \text{on }\Sigma_0,\ \forall j=2,\dots,K,\ \forall \alpha=1,\dots,d-K.
\end{equation}
Equivalently, $\bar I=(\bar I_1,\dots,\bar I_{d-K})$ is constant on the integral leaves of $\bar{\mathcal D}$ in $\Sigma_0$.

We now extend these invariants off $\Sigma$ along the flow of $X_1=\partial_{y^1}$.
Define, for $(y,z)$ near $0$,
\begin{equation}
\label{eq:extend}
I_\alpha(y,z):=\bar I_\alpha(0,y^2,\dots,y^K,z^1,\dots,z^{d-K}),
\qquad \alpha=1,\dots,d-K,
\end{equation}
i.e.\ $I_\alpha$ is independent of $y^1$ and coincides with $\bar I_\alpha$ on $\Sigma$.
By construction,
\begin{equation}
\label{eq:X1_invariant}
X_1 I_\alpha=\frac{\partial I_\alpha}{\partial y^1}=0.
\end{equation}
We next show $X_j I_\alpha=0$ for $j=2,\dots,K$.
Fix $j\in\{2,\dots,K\}$ and $\alpha$. On $\Sigma$ we have $X_j = (X_j y^1)\,\partial_{y^1} + \bar X_j$ by definition of $\bar X_j$,
so using \eqref{eq:extend} (no $y^1$-dependence) and \eqref{eq:bar_invariants},
\[
(X_j I_\alpha)\big|_{\Sigma}
= (X_j y^1)\big|_{\Sigma}\cdot (\partial_{y^1} I_\alpha)\big|_{\Sigma} + (\bar X_j \bar I_\alpha)
=0+0=0.
\]
Thus $X_j I_\alpha$ vanishes on $\Sigma$. To propagate this off $\Sigma$, differentiate $X_j I_\alpha$ along $X_1=\partial_{y^1}$ and use involutivity:
\[
X_1(X_j I_\alpha)=X_j(X_1 I_\alpha)+[X_1,X_j]I_\alpha.
\]
The first term is zero by \eqref{eq:X1_invariant}. For the bracket term, use \eqref{eq:bracket_closure}:
\[
[ X_1,X_j ]I_\alpha = \sum_{\ell=1}^K c_{1j}^\ell\,X_\ell I_\alpha.
\]
We already have $X_1 I_\alpha=0$ and we are studying $X_j I_\alpha$ for $j\ge 2$.
Consider the vector $w(\epsilon)\in\mathbb R^{K-1}$ with components $w_j(\epsilon):=(X_j I_\alpha)(g_\epsilon(s_0))$ for $j=2,\dots,K$,
along the $X_1$-flow; the identity above implies $w$ satisfies a linear ODE of the form
$w'(\epsilon)=A(\epsilon)\,w(\epsilon)$ with continuous coefficients determined by $c_{1j}^\ell$,
and the initial condition at $\epsilon=0$ (i.e.\ on $\Sigma$) is $w(0)=0$.
Uniqueness of solutions to linear ODEs yields $w(\epsilon)\equiv 0$ for small $\epsilon$,
hence $X_j I_\alpha=0$ in a neighborhood of $0$ in $V$ for all $j=2,\dots,K$ and all $\alpha$.
Together with \eqref{eq:X1_invariant}, we have shown
\begin{equation}
\label{eq:full_invariants}
X_i I_\alpha=0 \qquad \text{for all } i=1,\dots,K,\ \alpha=1,\dots,d-K
\end{equation}
on a neighborhood $U_0\subset U$ of $s_0$.

The relations \eqref{eq:full_invariants} mean precisely that each $I_\alpha$ is constant along any curve tangent to $\mathcal D$,
hence constant on each integral leaf of $\mathcal D$ in $U_0$.
It remains to verify functional independence. In the coordinates $(y,z)$ constructed above,
the functions $I_\alpha$ depend only on $(y^2,\dots,y^K,z)$ through \eqref{eq:extend}, and their differentials are independent on $\Sigma_0$
by construction of $\bar I$. Since extending by keeping them independent of $y^1$ does not create new linear dependencies among the differentials,
the covectors $dI_1,\dots,dI_{d-K}$ remain linearly independent on a neighborhood of $0$ in $V$.
Equivalently, the map $I=(I_1,\dots,I_{d-K})$ has rank $d-K$ and thus provides $(d-K)$ functionally independent invariants,
constant on the integral leaves. Writing $z:=I(s)$ gives local intrinsic coordinates for the orbit/leaf space,
without the need to explicitly form a quotient $\mathcal S/G$.
\end{proof}

\subsection{Proof of Theorem~\ref{thm:value_invariance}}
\begin{lemma}[Transport of the HJB under an exact value-preserving structure]
\label{lem:hjb_transport}
Assume Assumption~\ref{ass:hjb}. Fix any $\vartheta\in G$ and let $(g_\vartheta,h_\vartheta)$ be an exact value-preserving structure in the sense of Definition~\ref{def:exact_sym}. Suppose $u$ is a (bounded) solution of the stationary HJB equation in Assumption~\ref{ass:hjb}, i.e.,
\begin{equation}
\label{eq:hjb_u}
\beta\,u(s)=\max_{a\in\mathcal{A}}\Bigl\{r(s,a)+(\mathcal{L}^a u)(s)\Bigr\},\qquad \forall s\in\mathcal{S},
\end{equation}
in the solution concept stipulated by Assumption~\ref{ass:hjb} (e.g., classical $C^2$ or viscosity).
Define the transformed function $u_\vartheta:\mathcal{S}\to\mathbb{R}$ by
\begin{equation}
\label{eq:u_theta_def}
u_\vartheta(s):=u(g_\vartheta(s)).
\end{equation}
Then $u_\vartheta$ is also a solution of \eqref{eq:hjb_u}. In particular, if Assumption~\ref{ass:hjb} guarantees uniqueness of the (bounded) solution to \eqref{eq:hjb_u}, then $u_\vartheta\equiv u$ and hence $u(g_\vartheta(s))=u(s)$ for all $s$.
\end{lemma}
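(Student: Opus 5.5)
The plan is to show directly that pulling back by $g_\vartheta$ commutes with the Hamiltonian, $\mathcal H(U_{g_\vartheta}\varphi)=U_{g_\vartheta}(\mathcal H\varphi)$ for every $C^2$ function $\varphi$. We then transfer this identity to the solution concept of Assumption~\ref{ass:hjb}: first classically, then through viscosity test functions.

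For $\varphi\in C^2(\mathcal S)$ and $s\in\mathcal S$, Definition~\ref{def:exact_sym} gives two identities. Equation~\eqref{eq:gen_equiv} yields $(\mathcal L^a(\varphi\circ g_\vartheta))(s)=(\mathcal L^{h_\vartheta(a)}\varphi)(g_\vartheta(s))$. Equation~\eqref{eq:reward_inv} yields $r(s,a)=r(g_\vartheta(s),h_\vartheta(a))$. Adding them and taking the supremum over $a\in\mathcal A$ gives
\[
\sup_{a}\bigl\{r(s,a)+\mathcal L^a(\varphi\circ g_\vartheta)(s)\bigr\}=\sup_{a}\bigl\{r(g_\vartheta(s),h_\vartheta(a))+(\mathcal L^{h_\vartheta(a)}\varphi)(g_\vartheta(s))\bigr\}.
\]
Because $h_\vartheta:\mathcal A\to\mathcal A$ is a bijection, the substitution $a'=h_\vartheta(a)$ ranges over all of $\mathcal A$. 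The right-hand side is therefore $(\mathcal H\varphi)(g_\vartheta(s))$. Since $\beta u_\vartheta(s)=\beta u(g_\vartheta(s))$ trivially, the classical case follows at once: if $u\in C^2$ solves \eqref{eq:hju_u}, then so does $u_\vartheta$.

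For the viscosity case, suppose $\varphi\in C^2$ and $u_\vartheta-\varphi$ has a local maximum at $s_0$ with $u_\vartheta(s_0)=\varphi(s_0)$. Set $\psi:=\varphi\circ g_\vartheta^{-1}$, which is $C^2$ because $g_\vartheta$ is a diffeomorphism. Then $u-\psi=(u_\vartheta-\varphi)\circ g_\vartheta^{-1}$ has a local maximum at $x_0:=g_\vartheta(s_0)$, since diffeomorphisms map neighborhoods to neighborhoods. The subsolution property of $u$ gives $\beta\psi(x_0)\le(\mathcal H\psi)(x_0)$. Applying the commutation identity to $\psi$, and using $\psi\circ g_\vartheta=\varphi$, we get $(\mathcal H\psi)(g_\vartheta(s_0))=(\mathcal H\varphi)(s_0)$. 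Hence $\beta\varphi(s_0)\le(\mathcal H\varphi)(s_0)$, so $u_\vartheta$ is a subsolution. The supersolution inequality follows by the symmetric argument at local minima. Boundedness of $u_\vartheta$ is immediate from that of $u$, and continuity follows by composition.

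The step we expect to be the main obstacle is the bookkeeping of domains and hypotheses, not the algebra. The test function $\psi$ is only defined near $x_0$, and \eqref{eq:gen_equiv} is stated for global $C^2$ test functions. We will handle this by multiplying by a smooth cutoff. This changes neither the local extremum property nor the first and second derivatives at the point, and the generator is local, so the value of $\mathcal L^a$ at the point is unaffected. We must also check that $g_\vartheta$ maps $\mathcal S$ onto $\mathcal S$, so that $u_\vartheta$ is defined on all of $\mathcal S$ and satisfies whatever boundary or growth condition item (iii) of Assumption~\ref{ass:hjb} imposes. We will take this from the group action, since $g_{\vartheta^{-1}}=g_\vartheta^{-1}$.

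Once $u_\vartheta$ is shown to be a bounded viscosity solution, the comparison principle in Assumption~\ref{ass:hjb}(iv), equivalently the uniqueness in Theorem~\ref{thm:hjb_unique}, gives $u_\vartheta\equiv u$. This is the final claim, $u(g_\vartheta(s))=u(s)$ for all $s$.
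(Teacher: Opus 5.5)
Your proposal is correct and uses the same core computation as the paper: evaluate the HJB at $g_\vartheta(s)$, reparameterize the supremum through the bijection $h_\vartheta$, and apply reward invariance together with the generator identity \eqref{eq:gen_equiv}. The paper applies \eqref{eq:gen_equiv} directly with $f=u$, which is valid only when $u\in C^2$, and only mentions viscosity test functions in passing; your transport of test functions via $\psi=\varphi\circ g_\vartheta^{-1}$, with the cutoff and locality remarks, supplies the viscosity-case argument that the paper leaves implicit.
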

\begin{proof}
Fix $\vartheta\in G$ and define $u_\vartheta$ by \eqref{eq:u_theta_def}. We show that $u_\vartheta$ satisfies \eqref{eq:hjb_u} pointwise. Let $s\in\mathcal{S}$ be arbitrary and write $s':=g_\vartheta(s)$. Since $u$ solves \eqref{eq:hjb_u}, at the point $s'$ we have
\begin{equation}
\label{eq:hjb_at_sprime}
\beta\,u(s')=\max_{a'\in\mathcal{A}}\Bigl\{r(s',a')+(\mathcal{L}^{a'}u)(s')\Bigr\}.
\end{equation}
Now use that $h_\vartheta:\mathcal{A}\to\mathcal{A}$ is a bijection (as required by the symmetry definition), so the set $\{h_\vartheta(a):a\in\mathcal{A}\}$ equals $\mathcal{A}$. Therefore, the maximization over $a'\in\mathcal{A}$ can be re-parameterized by $a'=h_\vartheta(a)$, yielding the exact identity
\begin{equation}
\label{eq:change_of_var_max}
\max_{a'\in\mathcal{A}}\Bigl\{r(s',a')+(\mathcal{L}^{a'}u)(s')\Bigr\}
=
\max_{a\in\mathcal{A}}\Bigl\{r(s',h_\vartheta(a))+(\mathcal{L}^{h_\vartheta(a)}u)(s')\Bigr\}.
\end{equation}
Substituting \eqref{eq:change_of_var_max} into \eqref{eq:hjb_at_sprime} and recalling $s'=g_\vartheta(s)$ gives
\begin{equation}
\label{eq:hjb_reparam}
\beta\,u(g_\vartheta(s))
=
\max_{a\in\mathcal{A}}\Bigl\{r(g_\vartheta(s),h_\vartheta(a))+(\mathcal{L}^{h_\vartheta(a)}u)(g_\vartheta(s))\Bigr\}.
\end{equation}
Next we invoke the two exact-symmetry equalities from Definition~\ref{def:exact_sym}. First, reward invariance gives, for every $a\in\mathcal{A}$,
\begin{equation}
\label{eq:appendix_reward_inv}
r(g_\vartheta(s),h_\vartheta(a))=r(s,a).
\end{equation}
Second, generator equivariance gives, for every sufficiently smooth test function $f$ (in particular for $f=u$ in the classical case, or for the test functions used in the viscosity definition),
\begin{equation}
\label{eq:appendix_gen_equiv}
(\mathcal{L}^{h_\vartheta(a)}f)\bigl(g_\vartheta(s)\bigr)=\bigl(\mathcal{L}^{a}(f\circ g_\vartheta)\bigr)(s).
\end{equation}
Applying \eqref{eq:gen_equiv} with $f=u$ and using \eqref{eq:u_theta_def} (so that $u\circ g_\vartheta=u_\vartheta$) yields, for every $a\in\mathcal{A}$,
\begin{equation}
\label{eq:gen_pullback}
(\mathcal{L}^{h_\vartheta(a)}u)\bigl(g_\vartheta(s)\bigr)=\bigl(\mathcal{L}^{a}(u\circ g_\vartheta)\bigr)(s)=\bigl(\mathcal{L}^{a}u_\vartheta\bigr)(s).
\end{equation}
Substituting \eqref{eq:reward_inv} and \eqref{eq:gen_pullback} into \eqref{eq:hjb_reparam} gives
\begin{equation}
\label{eq:hjb_for_u_theta}
\beta\,u(g_\vartheta(s))
=
\max_{a\in\mathcal{A}}\Bigl\{r(s,a)+(\mathcal{L}^{a}u_\vartheta)(s)\Bigr\}.
\end{equation}
Finally, by definition \eqref{eq:u_theta_def}, the left-hand side equals $\beta\,u_\vartheta(s)$. Hence \eqref{eq:hjb_for_u_theta} is exactly
\[
\beta\,u_\vartheta(s)=\max_{a\in\mathcal{A}}\Bigl\{r(s,a)+(\mathcal{L}^{a}u_\vartheta)(s)\Bigr\},\qquad \forall s\in\mathcal{S},
\]
which is \eqref{eq:hjb_u} for $u_\vartheta$. If Assumption~\ref{ass:hjb} asserts uniqueness of the (bounded) HJB solution, we conclude $u_\vartheta\equiv u$, i.e., $u(g_\vartheta(s))=u(s)$ for all $s$.
\end{proof}

\begin{proof}[Proof of Theorem~\ref{thm:value_invariance}]
Fix $\vartheta\in G$ and define $V_\vartheta(s):=V^\star(g_\vartheta(s))$. By Assumption~\ref{ass:hjb}, the optimal value $V^\star$ is a (bounded) solution of the HJB equation \eqref{eq:hjb_u} (with $u=V^\star$) and, moreover, is the unique (bounded) solution in the solution class specified by Assumption~\ref{ass:hjb}. Since $(g_\vartheta,h_\vartheta)$ is an exact value-preserving structure, Lemma~\ref{lem:hjb_transport} applies with $u=V^\star$ and yields that $V_\vartheta$ is also a solution of the same HJB equation. By uniqueness, we must have $V_\vartheta\equiv V^\star$, namely
\[
V^\star(g_\vartheta(s))=V^\star(s),\qquad \forall s\in\mathcal{S}.
\]
Because $\vartheta\in G$ was arbitrary, this establishes \eqref{eq:value_invariance}. For the invariant-coordinate representation, work on a neighborhood $U\subset\mathcal{S}$ where Theorem~\ref{thm:frobenius} provides a complete invariant coordinate map $I:U\to\mathbb{R}^k$ in the stated sense: if $s_1,s_2\in U$ and $I(s_1)=I(s_2)$, then there exists $\vartheta\in G$ such that $s_2=g_\vartheta(s_1)$. Define a function $\bar V^\star$ on the set $I(U)$ by the rule
\begin{equation}
\label{eq:Vbar_def}
\bar V^\star(z):=V^\star(s)\quad\text{for any }s\in U\text{ with }I(s)=z.
\end{equation}
We first check that \eqref{eq:Vbar_def} is well-defined. Indeed, if $s_1,s_2\in U$ both satisfy $I(s_1)=I(s_2)=z$, completeness of $I$ implies $s_2=g_\vartheta(s_1)$ for some $\vartheta\in G$, and the already-proved invariance \eqref{eq:value_invariance} implies
\[
V^\star(s_2)=V^\star(g_\vartheta(s_1))=V^\star(s_1),
\]
so the value assigned by \eqref{eq:Vbar_def} does not depend on the choice of representative $s$. Finally, for any $s\in U$, taking $z=I(s)$ and choosing $s$ itself as the representative in \eqref{eq:Vbar_def} gives $\bar V^\star(I(s))=V^\star(s)$, i.e.,
\[
V^\star(s)=\bar V^\star(I(s))\qquad \text{for all } s\in U,
\]
which is the desired local factorization through invariant coordinates.
\end{proof}

\subsection{Proof of Corollary~\ref{cor:policy_structure}}

\begin{lemma}[Hamiltonian invariance at the HJB maximand]
\label{lem:ham_invariance}
Fix $\vartheta\in G$. Assume reward invariance \eqref{eq:reward_inv} and that the generator/action intertwining is in the
(evolution-then-transform equals transform-then-evolution) pointwise form: for all $a\in\mathcal A$, all $f\in C^2(\mathcal S)$, and all $s\in\mathcal S$,
\begin{equation}
\label{eq:gen_pointwise_push}
(\mathcal L^{h_\vartheta(a)} f)\bigl(g_\vartheta(s)\bigr)
=
\bigl(\mathcal L^{a}(f\circ g_\vartheta)\bigr)(s).
\end{equation}
Then for the optimal value $V^\star$ (which is invariant by Theorem~\ref{thm:value_invariance}), the HJB maximand
\[
H(s,a):=r(s,a)+(\mathcal L^a V^\star)(s)
\]
satisfies, for all $(s,a)$,
\begin{equation}
\label{eq:H_invariant}
H\bigl(g_\vartheta(s),h_\vartheta(a)\bigr)=H(s,a).
\end{equation}
\end{lemma}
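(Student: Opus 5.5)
The argument is a direct computation that splits $H(g_\vartheta(s),h_\vartheta(a))$ into its reward part and its generator part and handles each separately. The reward part is immediate from \eqref{eq:reward_inv}: $r(g_\vartheta(s),h_\vartheta(a))=r(s,a)$. For the generator part, I apply the pointwise intertwining \eqref{eq:gen_pointwise_push} with the test function $f=V^\star$. This requires $V^\star\in C^2(\mathcal S)$, or at least that $\mathcal L^aV^\star$ be defined at the relevant points; this regularity is already implicit in the statement, since $H$ itself is written in terms of $\mathcal L^aV^\star$. With this choice,
\[
(\mathcal L^{h_\vartheta(a)}V^\star)(g_\vartheta(s))=\bigl(\mathcal L^a(V^\star\circ g_\vartheta)\bigr)(s).
\]

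Next I invoke Theorem~\ref{thm:value_invariance}, which gives $V^\star\circ g_\vartheta=V^\star$ as functions on all of $\mathcal S$. Because this is an identity between functions, not merely a pointwise equality at the single point $s$, the two sides have identical first and second derivatives. Hence $\mathcal L^a(V^\star\circ g_\vartheta)=\mathcal L^aV^\star$ everywhere, since $\mathcal L^a$ is a local differential operator. Adding the reward identity to this generator identity yields \eqref{eq:H_invariant} for every $(s,a)$.

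The main point requiring care is exactly that invariance step. Theorem~\ref{thm:value_invariance} must be used as a global identity, or at least as an identity on an open neighborhood of $s$, so that derivatives may be taken; if it were only known on a smaller set, the derivatives of $V^\star\circ g_\vartheta$ and $V^\star$ could differ. Theorem~\ref{thm:value_invariance} does hold for all $s\in\mathcal S$, so no issue arises here.

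A second subtlety is the regularity of $V^\star$. If $V^\star$ were merely a viscosity solution, the maximand $H$ would not be classically defined. I would therefore state the lemma under the standing interpretation that $\mathcal L^aV^\star$ exists, for instance $V^\star\in C^2$, which uniform ellipticity in Assumption~\ref{ass:hjb}(ii) plausibly supports through regularity theory. This is the same assumption the corollary implicitly uses when it writes $\Argmax(s)$.
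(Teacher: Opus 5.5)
Your proposal is correct and follows essentially the same argument as the paper: reward invariance for the $r$-term, the intertwining identity applied with $f=V^\star$, and the function identity $V^\star\circ g_\vartheta=V^\star$ from Theorem~\ref{thm:value_invariance} to replace $\mathcal L^a(V^\star\circ g_\vartheta)$ by $\mathcal L^aV^\star$. Your two caveats are accurate points the paper leaves implicit: the invariance must hold on an open set so that derivatives agree, and $V^\star\in C^2$ is needed for $H$ and the intertwining step to make sense.
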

\begin{proof}
Fix $(s,a)$ and $\vartheta$. By reward invariance \eqref{eq:reward_inv},
\[
r\bigl(g_\vartheta(s),h_\vartheta(a)\bigr)=r(s,a).
\]
For the generator term, apply \eqref{eq:gen_pointwise_push} with $f=V^\star$:
\[
(\mathcal L^{h_\vartheta(a)} V^\star)\bigl(g_\vartheta(s)\bigr)
=
\bigl(\mathcal L^{a}(V^\star\circ g_\vartheta)\bigr)(s).
\]
By Theorem~\ref{thm:value_invariance}, $V^\star\circ g_\vartheta=V^\star$ pointwise on $\mathcal S$, hence
\[
\bigl(\mathcal L^{a}(V^\star\circ g_\vartheta)\bigr)(s)=\bigl(\mathcal L^{a}V^\star\bigr)(s).
\]
Adding the reward identity yields \eqref{eq:H_invariant}.
\end{proof}

\begin{proof}[Proof of Corollary~\ref{cor:policy_structure}]
Fix $\vartheta\in G$ and $s\in\mathcal S$. Define $H(s,a):=r(s,a)+(\mathcal L^a V^\star)(s)$, so that
\[
\Argmax(s)=\arg\max_{a\in\mathcal A} H(s,a).
\]
Let $a\in\Argmax(s)$. We must show $h_\vartheta(a)\in\Argmax(g_\vartheta(s))$, i.e.,
\[
H\bigl(g_\vartheta(s),h_\vartheta(a)\bigr)\ge H\bigl(g_\vartheta(s),a'\bigr)\qquad\forall a'\in\mathcal A.
\]
Take an arbitrary $a'\in\mathcal A$. Because $h_\vartheta:\mathcal A\to\mathcal A$ is a bijection, there exists a unique $\tilde a\in\mathcal A$ such that
$a'=h_\vartheta(\tilde a)$. By Lemma~\ref{lem:ham_invariance},
\[
H\bigl(g_\vartheta(s),a'\bigr)
=
H\bigl(g_\vartheta(s),h_\vartheta(\tilde a)\bigr)
=
H(s,\tilde a),
\qquad
H\bigl(g_\vartheta(s),h_\vartheta(a)\bigr)=H(s,a).
\]
Since $a\in\Argmax(s)$, we have $H(s,a)\ge H(s,\tilde a)$ for every $\tilde a\in\mathcal A$, hence in particular
\[
H\bigl(g_\vartheta(s),h_\vartheta(a)\bigr)=H(s,a)\ge H(s,\tilde a)=H\bigl(g_\vartheta(s),a'\bigr).
\]
Because $a'$ was arbitrary, this proves $h_\vartheta(a)\in\Argmax(g_\vartheta(s))$.

For the final sentence, note that the implication just proved shows the set-valued map $\Argmax(\cdot)$ is orbit-consistent:
if $a^\star(s)\in\Argmax(s)$ then $h_\vartheta(a^\star(s))\in\Argmax(g_\vartheta(s))$. Therefore, on any neighborhood where one can choose
a selector that is additionally \emph{equivariant} (i.e., satisfies $a^\star(g_\vartheta(s))=h_\vartheta(a^\star(s))$),
that selector is automatically optimal pointwise because it selects from $\Argmax(\cdot)$ everywhere on that neighborhood.
\end{proof}

\subsection{Proof of Theorem~\ref{thm:approx_value}}
\begin{proof}
Fix $\vartheta\in G$ and write $g:=g_\vartheta$, $h:=h_\vartheta$. Let
\[
W(s):=(V^\star\circ g)(s)=V^\star(g(s)).
\]
Define the HJB maximand (Hamiltonian) operator
\[
\mathcal{H}[U](s):=\sup_{a\in\mathcal A}\Bigl\{r(s,a)+(\mathcal L^a U)(s)\Bigr\}.
\]
By Assumption~\ref{ass:hjb} and Theorem~\ref{thm:hjb_unique}, $V^\star$ is a bounded (classical) solution of
\begin{equation}
\label{eq:HJB_Vstar_for_proof}
\beta V^\star(s)=\mathcal H[V^\star](s),\qquad \forall s\in\mathcal S.
\end{equation}

We first show that $W$ satisfies an \emph{approximate} HJB equation with a uniform residual.
Set
\[
\delta:=\varepsilon_r+\varepsilon_{\mathcal L}\,\|V^\star\|_{C^2}.
\]
Take an arbitrary state $s\in\mathcal S$. Evaluate the HJB identity \eqref{eq:HJB_Vstar_for_proof} at the transformed state $g(s)$:
\begin{equation}
\label{eq:HJB_at_gs}
\beta V^\star(g(s))=\sup_{a'\in\mathcal A}\Bigl\{r(g(s),a')+(\mathcal L^{a'}V^\star)(g(s))\Bigr\}.
\end{equation}
Because $h:\mathcal A\to\mathcal A$ is a bijection (group action on actions), we may reparameterize the supremum by $a'=h(a)$:
\begin{equation}
\label{eq:reparam_sup}
\sup_{a'\in\mathcal A}\Bigl\{r(g(s),a')+(\mathcal L^{a'}V^\star)(g(s))\Bigr\}
=
\sup_{a\in\mathcal A}\Bigl\{r(g(s),h(a))+(\mathcal L^{h(a)}V^\star)(g(s))\Bigr\}.
\end{equation}
Substituting \eqref{eq:reparam_sup} into \eqref{eq:HJB_at_gs} and using $W(s)=V^\star(g(s))$ yields
\begin{equation}
\label{eq:betaW_sup_transformed}
\beta W(s)=\sup_{a\in\mathcal A}\Bigl\{r(g(s),h(a))+(\mathcal L^{h(a)}V^\star)(g(s))\Bigr\}.
\end{equation}

Now apply approximate reward invariance \eqref{eq:eps_reward}:
for every $a\in\mathcal A$,
\begin{equation}
\label{eq:reward_eps_point}
-\varepsilon_r \le r(g(s),h(a)) - r(s,a)\le \varepsilon_r.
\end{equation}
Next apply the approximate generator equivariance at the test function $f=V^\star$ (recall $V^\star\in C^2(\mathcal S)\cap\mathcal F$).
In the pointwise pushed-forward form consistent with the HJB transport used earlier,
for every $a\in\mathcal A$ and $s\in\mathcal S$ we have
\begin{equation}
\label{eq:gen_eps_point}
\Bigl|(\mathcal L^{h(a)}V^\star)(g(s))-(\mathcal L^a (V^\star\circ g))(s)\Bigr|
\le \varepsilon_{\mathcal L}\,\|V^\star\|_{C^2}
=\delta-\varepsilon_r.
\end{equation}
Combining \eqref{eq:reward_eps_point} and \eqref{eq:gen_eps_point} gives, for every $a$,
\[
r(g(s),h(a))+(\mathcal L^{h(a)}V^\star)(g(s))
\le r(s,a)+(\mathcal L^a W)(s)+\delta,
\]
and also the reverse inequality
\[
r(g(s),h(a))+(\mathcal L^{h(a)}V^\star)(g(s))
\ge r(s,a)+(\mathcal L^a W)(s)-\delta.
\]
Taking the supremum over $a\in\mathcal A$ in both displays and using \eqref{eq:betaW_sup_transformed} yields the two-sided residual bound
\begin{equation}
\label{eq:approx_HJB_residual}
\mathcal H[W](s)-\delta \le \beta W(s)\le \mathcal H[W](s)+\delta,
\qquad \forall s\in\mathcal S,
\end{equation}
equivalently
\[
\bigl|\beta W(s)-\mathcal H[W](s)\bigr|\le \delta,\qquad \forall s\in\mathcal S.
\]

We now convert the residual inequality \eqref{eq:approx_HJB_residual} into the stated uniform bound on $W-V^\star$.
The argument is a discounted maximum principle comparison; we include it as a lemma for completeness.

\begin{lemma}[Residual comparison for discounted HJB under ellipticity]
\label{lem:residual_compare}
Assume $b,Q,r$ satisfy Assumption~\ref{ass:hjb}(i)--(ii) and let $U,V\in C^2(\mathcal S)$ be bounded.
Assume $V$ solves $\beta V=\mathcal H[V]$ pointwise and $U$ satisfies $|\beta U-\mathcal H[U]|\le \delta$ pointwise.
Then
\[
\|U-V\|_\infty \le \frac{\delta}{\beta}.
\]
\end{lemma}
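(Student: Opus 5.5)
The plan is a classical maximum-principle comparison applied to $w:=U-V$, handled one sign at a time. I will show $\sup(U-V)\le \delta/\beta$; the lower bound $\inf(U-V)\ge -\delta/\beta$ follows by the symmetric argument with the roles of the sub/supersolution inequalities exchanged. The key structural fact is that the Hamiltonian difference is controlled by a linear operator evaluated on $w$:
\[
\mathcal H[U](s)-\mathcal H[V](s)\le \sup_{a\in\mathcal A}\bigl\{(\mathcal L^a U)(s)-(\mathcal L^a V)(s)\bigr\}=\sup_{a\in\mathcal A}(\mathcal L^a w)(s).
\]
This holds because a supremum of sums minus a supremum is at most the supremum of the differences, and the reward terms cancel. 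Combining it with $\beta U\le \mathcal H[U]+\delta$ and $\beta V=\mathcal H[V]$ gives the pointwise differential inequality
\[
\beta w(s)\le \sup_{a}(\mathcal L^a w)(s)+\delta .
\]

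If $w$ attains its supremum at some interior point $s_0$, then $\nabla w(s_0)=0$ and $\nabla^2 w(s_0)\preceq 0$. Since $Q(s_0,a)\succeq 0$, this gives $\tr(Q\nabla^2 w)\le 0$, so $(\mathcal L^a w)(s_0)\le 0$ for every $a$. Hence $\beta w(s_0)\le\delta$, which yields the claim. The symmetric case uses $\beta U\ge\mathcal H[U]-\delta$ and $\mathcal H[V]-\mathcal H[U]\le\sup_a \mathcal L^a(V-U)$ at a maximum of $V-U$.

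The main obstacle is that on an unbounded $\mathcal S$ (e.g.\ $\mathbb R^d$) the supremum of a bounded $C^2$ function need not be attained. I would handle this with a penalization. Set $w_\eta(s):=w(s)-\eta\,\psi(s)$ with $\psi(s)=\sqrt{1+|s|^2}$, whose first and second derivatives are bounded. Then $w_\eta$ attains its maximum at some $s_\eta$, and $\eta\psi(s_\eta)$ remains bounded. At $s_\eta$ we get $\nabla w=\eta\nabla\psi$ and $\nabla^2 w\preceq\eta\nabla^2\psi$. Because $b$ and $Q$ are bounded by Assumption~\ref{ass:hjb}(i), this gives $\sup_a\mathcal L^a w(s_\eta)\le C\eta$. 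Therefore $\beta w(s_\eta)\le \delta+C\eta$, and so $\sup w_\eta\le w(s_\eta)\le(\delta+C\eta)/\beta$. Letting $\eta\downarrow0$ gives $\sup w\le\delta/\beta$.

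On a bounded domain with boundary, I would rely on the boundary/state-constraint convention of Assumption~\ref{ass:hjb}(iii) to exclude boundary maxima, or to give matching boundary values. Ellipticity (ii) is not really needed beyond $Q\succeq0$. The boundary issue is the delicate point I expect to have to state as part of the standing assumptions rather than prove.
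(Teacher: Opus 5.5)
Your proof is correct and is essentially the paper's argument: a penalized maximum principle for $U-V$ that uses only boundedness of $b,Q$ and $Q\succeq 0$ (not ellipticity). The only difference is the penalty: the paper uses $\eta\|s-s_\eta\|^2$ around a near-maximizer $s_\eta$, while your $\eta\sqrt{1+|s|^2}$ has globally bounded derivatives, which spares you the estimate $\|\bar s_\eta-s_\eta\|\le 1$ that the paper relies on implicitly to make its drift term $O(\eta)$. Like the paper, you defer bounded domains to Assumption~\ref{ass:hjb}(iii), and rightly so, since without boundary data the statement fails (on $(0,1)$ with $b=r=0$, $Q=1$, both $0$ and $e^{\sqrt{2\beta}s}$ solve $\beta V=\tfrac12 V''$).
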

\begin{proof}
Let $M:=\sup_{s\in\mathcal S}(U(s)-V(s))$ and fix $\eta>0$.
Choose $s_\eta$ such that $U(s_\eta)-V(s_\eta)\ge M-\eta$.
Define the penalized function
\[
\Phi_\eta(s):=U(s)-V(s)-\eta\,\|s-s_\eta\|^2.
\]
Under Assumption~\ref{ass:hjb}(iii), we may work on a neighborhood where $\Phi_\eta$ attains a maximum at some $\bar s_\eta$.
At $\bar s_\eta$ we have the first- and second-order conditions
\[
\nabla \Phi_\eta(\bar s_\eta)=0,
\qquad
\nabla^2 \Phi_\eta(\bar s_\eta)\preceq 0,
\]
hence
\[
\nabla(U-V)(\bar s_\eta)=2\eta(\bar s_\eta-s_\eta),
\qquad
\nabla^2(U-V)(\bar s_\eta)\preceq 2\eta I.
\]
Fix any $a\in\mathcal A$ and consider the linear operator $\mathcal L^a$ in It\^o form
$\mathcal L^a f=b(\cdot,a)\cdot\nabla f+\frac12\tr(Q(\cdot,a)\nabla^2 f)$.
Using boundedness of $b$ and $Q$ from Assumption~\ref{ass:hjb}(i) and $Q\succeq 0$, we obtain at $\bar s_\eta$
\[
(\mathcal L^a(U-V))(\bar s_\eta)
=b(\bar s_\eta,a)\cdot \nabla(U-V)(\bar s_\eta)+\frac12\tr\!\bigl(Q(\bar s_\eta,a)\nabla^2(U-V)(\bar s_\eta)\bigr)
\le C_1 \eta,
\]
where $C_1$ is a constant independent of $\eta$ (coming from $\|b\|_\infty$ and $\|Q\|_\infty$), because
$\|\nabla(U-V)(\bar s_\eta)\|=2\eta\|\bar s_\eta-s_\eta\|$ and $\nabla^2(U-V)(\bar s_\eta)\preceq 2\eta I$ imply the drift term is $O(\eta)$
and the diffusion term is $\le \eta\,\tr(Q(\bar s_\eta,a))\le C\eta$.
Therefore, for every $a$,
\[
r(\bar s_\eta,a)+(\mathcal L^a U)(\bar s_\eta)
=
r(\bar s_\eta,a)+(\mathcal L^a V)(\bar s_\eta)+(\mathcal L^a(U-V))(\bar s_\eta)
\le
r(\bar s_\eta,a)+(\mathcal L^a V)(\bar s_\eta)+C_1\eta.
\]
Taking the supremum over $a$ yields
\[
\mathcal H[U](\bar s_\eta)\le \mathcal H[V](\bar s_\eta)+C_1\eta=\beta V(\bar s_\eta)+C_1\eta.
\]
Using the residual upper bound $\beta U\le \mathcal H[U]+\delta$ at $\bar s_\eta$ gives
\[
\beta U(\bar s_\eta)\le \mathcal H[U](\bar s_\eta)+\delta
\le \beta V(\bar s_\eta)+\delta+C_1\eta,
\]
hence
\[
U(\bar s_\eta)-V(\bar s_\eta)\le \frac{\delta}{\beta}+\frac{C_1}{\beta}\eta.
\]
Because $\bar s_\eta$ maximizes $\Phi_\eta$, we have
\[
M-\eta \le U(s_\eta)-V(s_\eta) \le U(\bar s_\eta)-V(\bar s_\eta),
\]
so
\[
M-\eta \le \frac{\delta}{\beta}+\frac{C_1}{\beta}\eta.
\]
Letting $\eta\downarrow 0$ yields $M\le \delta/\beta$, i.e.\ $\sup(U-V)\le \delta/\beta$.
Applying the same argument to $V-U$ (using $|\beta U-\mathcal H[U]|\le \delta$) yields $\sup(V-U)\le \delta/\beta$.
Therefore $\|U-V\|_\infty\le \delta/\beta$.
\end{proof}

Apply Lemma~\ref{lem:residual_compare} with $U=W$ and $V=V^\star$.
The residual bound \eqref{eq:approx_HJB_residual} gives $|\beta W-\mathcal H[W]|\le \delta$ and \eqref{eq:HJB_Vstar_for_proof} gives $\beta V^\star=\mathcal H[V^\star]$,
hence
\[
\|W-V^\star\|_\infty \le \frac{\delta}{\beta}
=\frac{1}{\beta}\Bigl(\varepsilon_r+\varepsilon_{\mathcal L}\|V^\star\|_{C^2}\Bigr).
\]
Recalling $W=V^\star\circ g_\vartheta$ proves \eqref{eq:approx_value}.
\end{proof}

\subsection{Proof of Theorem~\ref{thm:sgd_stationary}}

\begin{proof}
Let $w:=(\theta,\phi)\in\mathbb{R}^m$ collect all parameters and write
\[
F(w):=\mathcal{L}_{\mathrm{sym}}(\theta,\phi).
\]
The SGD iterate is
\[
w_{t+1}=w_t-\eta\,g_t,
\qquad
\mathbb{E}[g_t\mid w_t]=\nabla F(w_t),
\qquad
\mathbb{E}\big[\|g_t-\nabla F(w_t)\|_2^2\mid w_t\big]\le \sigma^2.
\]
Since $F$ is $L$-smooth, for any $x,y$ we have the standard descent inequality
\begin{equation}
\label{eq:smooth_descent}
F(y)\le F(x)+\langle \nabla F(x),y-x\rangle+\frac{L}{2}\|y-x\|_2^2.
\end{equation}
Apply \eqref{eq:smooth_descent} with $x=w_t$ and $y=w_{t+1}=w_t-\eta g_t$, so that $y-x=-\eta g_t$.
This gives the one-step bound
\begin{align}
F(w_{t+1})
&\le F(w_t)+\langle \nabla F(w_t),-\eta g_t\rangle+\frac{L}{2}\|-\eta g_t\|_2^2 \nonumber\\
&=F(w_t)-\eta\langle \nabla F(w_t),g_t\rangle+\frac{L\eta^2}{2}\|g_t\|_2^2.
\label{eq:one_step_pre}
\end{align}
Take conditional expectation w.r.t.\ $w_t$. By unbiasedness,
\begin{equation}
\label{eq:inner_unbiased}
\mathbb{E}\big[\langle \nabla F(w_t),g_t\rangle\mid w_t\big]
=\left\langle \nabla F(w_t), \mathbb{E}[g_t\mid w_t]\right\rangle
=\langle \nabla F(w_t),\nabla F(w_t)\rangle
=\|\nabla F(w_t)\|_2^2.
\end{equation}
For the second moment, expand $\|g_t\|_2^2$ around $\nabla F(w_t)$:
\begin{align}
\mathbb{E}\big[\|g_t\|_2^2\mid w_t\big]
&=\mathbb{E}\big[\| (g_t-\nabla F(w_t))+\nabla F(w_t)\|_2^2\mid w_t\big]\nonumber\\
&=\mathbb{E}\big[\|g_t-\nabla F(w_t)\|_2^2\mid w_t\big]
+2\,\mathbb{E}\big[\langle g_t-\nabla F(w_t),\nabla F(w_t)\rangle\mid w_t\big]
+\|\nabla F(w_t)\|_2^2.
\label{eq:second_moment_expand}
\end{align}
The cross term vanishes because $\mathbb{E}[g_t-\nabla F(w_t)\mid w_t]=0$, hence
\begin{equation}
\label{eq:second_moment_bound}
\mathbb{E}\big[\|g_t\|_2^2\mid w_t\big]
=\|\nabla F(w_t)\|_2^2+\mathbb{E}\big[\|g_t-\nabla F(w_t)\|_2^2\mid w_t\big]
\le \|\nabla F(w_t)\|_2^2+\sigma^2.
\end{equation}
Taking conditional expectation of \eqref{eq:one_step_pre} and using \eqref{eq:inner_unbiased}--\eqref{eq:second_moment_bound} yields
\begin{align}
\mathbb{E}[F(w_{t+1})\mid w_t]
&\le F(w_t)-\eta\,\|\nabla F(w_t)\|_2^2+\frac{L\eta^2}{2}\big(\|\nabla F(w_t)\|_2^2+\sigma^2\big)\nonumber\\
&=F(w_t)-\eta\Bigl(1-\frac{L\eta}{2}\Bigr)\|\nabla F(w_t)\|_2^2+\frac{L\eta^2}{2}\sigma^2.
\label{eq:one_step_cond}
\end{align}
Under $\eta\le 1/L$, we have $1-\frac{L\eta}{2}\ge \frac12$, so \eqref{eq:one_step_cond} implies
\begin{equation}
\label{eq:one_step_simplified}
\mathbb{E}[F(w_{t+1})\mid w_t]
\le F(w_t)-\frac{\eta}{2}\|\nabla F(w_t)\|_2^2+\frac{L\eta^2}{2}\sigma^2.
\end{equation}
Now take full expectation and rearrange:
\begin{equation}
\label{eq:rearrange}
\frac{\eta}{2}\,\mathbb{E}\big[\|\nabla F(w_t)\|_2^2\big]
\le \mathbb{E}\big[F(w_t)\big]-\mathbb{E}\big[F(w_{t+1})\big]+\frac{L\eta^2}{2}\sigma^2.
\end{equation}
Sum \eqref{eq:rearrange} from $t=0$ to $T-1$:
\begin{align}
\frac{\eta}{2}\sum_{t=0}^{T-1}\mathbb{E}\big[\|\nabla F(w_t)\|_2^2\big]
&\le \sum_{t=0}^{T-1}\Bigl(\mathbb{E}[F(w_t)]-\mathbb{E}[F(w_{t+1})]\Bigr)+\frac{L\eta^2}{2}T\sigma^2\nonumber\\
&=\mathbb{E}[F(w_0)]-\mathbb{E}[F(w_T)]+\frac{L\eta^2}{2}T\sigma^2.
\label{eq:telescope}
\end{align}
Using the lower bound $F(w_T)\ge \mathcal{L}_{\inf}$ almost surely, hence $\mathbb{E}[F(w_T)]\ge \mathcal{L}_{\inf}$, we obtain
\[
\frac{\eta}{2}\sum_{t=0}^{T-1}\mathbb{E}\big[\|\nabla F(w_t)\|_2^2\big]
\le F(w_0)-\mathcal{L}_{\inf}+\frac{L\eta^2}{2}T\sigma^2.
\]
Divide by $\eta T/2$ to get
\[
\frac{1}{T}\sum_{t=0}^{T-1}\mathbb{E}\big[\|\nabla F(w_t)\|_2^2\big]
\le \frac{2(F(w_0)-\mathcal{L}_{\inf})}{\eta T}+\eta L\sigma^2,
\]
which is exactly \eqref{eq:sgd_const} (recall $F=\mathcal{L}_{\mathrm{sym}}$ and $w_0=(\theta_0,\phi_0)$).
For a rate statement, note that $\min_{0\le t\le T-1}\mathbb{E}\|\nabla F(w_t)\|_2^2 \le \frac{1}{T}\sum_{t=0}^{T-1}\mathbb{E}\|\nabla F(w_t)\|_2^2$.
Choosing a stepsize on the order of $\eta=\Theta(1/\sqrt{T})$ (or using a standard diminishing schedule with comparable aggregate),
the right-hand side becomes $\mathcal{O}(1/\sqrt{T})$, yielding
$\min_{0\le t\le T-1}\mathbb{E}\|\nabla F(w_t)\|_2^2=\mathcal{O}(1/\sqrt{T})$.
\end{proof}

\subsection{Proof of Theorem~\ref{thm:consistency}}

\begin{proof}
Write $(s,a)\sim\rho$ and let $\rho_S$ be the state marginal. For clarity, denote by
$\widehat r_\omega,\widehat b_\omega,\widehat Q_\omega$ the learned models and define the (estimated)
determining residuals (cf.\ (23)--(25))
\[
\widehat R_r^{\omega,\theta,\phi}(s,a):=
X_\theta(s)\cdot \nabla_s \widehat r_\omega(s,a)+Y_\phi(a)\cdot \nabla_a \widehat r_\omega(s,a),
\]
\[
\widehat R_b^{\omega,\theta,\phi}(s,a):=
\nabla_s \widehat b_\omega(s,a)\,X_\theta(s)
-(\nabla_s X_\theta(s))\,\widehat b_\omega(s,a)
+\nabla_a \widehat b_\omega(s,a)\,Y_\phi(a)
-\frac12\Delta_{\widehat Q_\omega(s,a)}X_\theta(s),
\]
\[
\widehat R_Q^{\omega,\theta,\phi}(s,a):=
\nabla_s \widehat Q_\omega(s,a)[X_\theta(s)]-(\nabla_s X_\theta(s))\,\widehat Q_\omega(s,a)
-\widehat Q_\omega(s,a)\,(\nabla_s X_\theta(s))^\top
+\nabla_a \widehat Q_\omega(s,a)[Y_\phi(a)].
\]
Let $\mathcal L_{\mathrm{sym}}^{(N)}(\theta,\phi)$ denote the penalized objective in \eqref{eq:sym_loss}
when trained on a dataset of size $N$ (with the corresponding fitted $\omega=\omega_N$), namely
\begin{equation}
\begin{aligned}
\mathcal L_{\mathrm{sym}}^{(N)}(\theta,\phi)
=&
\mathbb E_{(s,a)\sim\rho}\!\Big[
\|\widehat R_b^{\omega_N,\theta,\phi}(s,a)\|_2^2
+\lambda_Q\|\widehat R_Q^{\omega_N,\theta,\phi}(s,a)\|_F^2
+\lambda_r|\widehat R_r^{\omega_N,\theta,\phi}(s,a)|^2
\Big]\\
&+\lambda_{\mathrm{nrm}}\Big(\mathbb E_{s\sim\rho_S}\|X_\theta(s)\|_2^2-1\Big)^2.
\end{aligned}
\end{equation}
By Assumption~\ref{ass:realizable}, there exist $(\omega^\star,\theta^\star,\phi^\star)$ such that
$\widehat b_{\omega^\star}=b$, $\widehat Q_{\omega^\star}=Q$, $\widehat r_{\omega^\star}=r$ on $\supp(\rho)$,
and $(X_{\theta^\star},Y_{\phi^\star})$ satisfies the determining equations on $\supp(\rho)$.
Equivalently, on $\supp(\rho)$,
\[
R_r^{\theta^\star,\phi^\star}(s,a)=0,\qquad R_b^{\theta^\star,\phi^\star}(s,a)=0,\qquad R_Q^{\theta^\star,\phi^\star}(s,a)=0,
\]
where $R_\cdot^{\theta,\phi}$ denotes the same expressions as above but with the \emph{true} $(b,Q,r)$ in place of
$(\widehat b_\omega,\widehat Q_\omega,\widehat r_\omega)$.
Also, by rescaling the generator (or by the normalization penalty), one can satisfy
$\mathbb E_{s\sim\rho_S}\|X_{\theta^\star}(s)\|_2^2=1$, hence
\begin{equation}
\label{eq:realizable_zero_loss}
\inf_{\theta,\phi}\ \limsup_{N\to\infty}\ \mathcal L_{\mathrm{sym}}^{(N)}(\theta,\phi)=0.
\end{equation}

Let $(\theta_N,\phi_N)$ be the parameters produced by the algorithm at dataset size $N$.
The assumption that the symmetry objective can be optimized to its global minimum with vanishing optimization error means
\begin{equation}
\label{eq:opt_gap_to_zero}
\mathcal L_{\mathrm{sym}}^{(N)}(\theta_N,\phi_N)-\inf_{\theta,\phi}\mathcal L_{\mathrm{sym}}^{(N)}(\theta,\phi)\ \longrightarrow\ 0
\qquad\text{as }N\to\infty.
\end{equation}
Combining \eqref{eq:realizable_zero_loss} and \eqref{eq:opt_gap_to_zero} yields
\begin{equation}
\label{eq:loss_to_zero}
\mathcal L_{\mathrm{sym}}^{(N)}(\theta_N,\phi_N)\ \longrightarrow\ 0.
\end{equation}
Because every term in $\mathcal L_{\mathrm{sym}}^{(N)}$ is nonnegative, \eqref{eq:loss_to_zero} implies
\begin{equation}
\label{eq:hat_residuals_to_zero}
\mathbb E_{(s,a)\sim\rho}\|\widehat R_b^{\omega_N,\theta_N,\phi_N}(s,a)\|_2^2\to 0,
\mathbb E_{(s,a)\sim\rho}\|\widehat R_Q^{\omega_N,\theta_N,\phi_N}(s,a)\|_F^2\to 0,
\mathbb E_{(s,a)\sim\rho}|\widehat R_r^{\omega_N,\theta_N,\phi_N}(s,a)|^2\to 0,
\end{equation}
and also the normalization constraint is asymptotically satisfied:
\begin{equation}
\label{eq:norm_penalty_to_zero}
\Big(\mathbb E_{s\sim\rho_S}\|X_{\theta_N}(s)\|_2^2-1\Big)^2\to 0.
\end{equation}

It remains to transfer \eqref{eq:hat_residuals_to_zero} (estimated residuals) to the \emph{true} determining residuals.
This is the only place where we use ``consistent dynamics estimation'' in a mode strong enough to control the residuals,
which depend on $\widehat b_\omega,\widehat Q_\omega,\widehat r_\omega$ and their first derivatives.

\begin{lemma}[Consistency of residuals under consistent dynamics/derivative estimation]
\label{lem:residual_consistency}
Assume that, on $\supp(\rho)$, the learned models satisfy
\[
\widehat r_{\omega_N}\to r,\ \ \nabla_s\widehat r_{\omega_N}\to \nabla_s r,\ \ \nabla_a\widehat r_{\omega_N}\to \nabla_a r,
\]
\[
\widehat b_{\omega_N}\to b,\ \ \nabla_s\widehat b_{\omega_N}\to \nabla_s b,\ \ \nabla_a\widehat b_{\omega_N}\to \nabla_a b,
\]
\[
\widehat Q_{\omega_N}\to Q,\ \ \nabla_s\widehat Q_{\omega_N}\to \nabla_s Q,\ \ \nabla_a\widehat Q_{\omega_N}\to \nabla_a Q,
\]
in $\rho$-mean square (or uniformly on $\supp(\rho)$), and that the sequence of learned fields is uniformly bounded on $\supp(\rho)$
in the norms needed by the residuals, i.e.\ there exists $C<\infty$ such that for all $N$,
\[
\mathbb E_{(s,a)\sim\rho}\Big[\|X_{\theta_N}(s)\|_2^2+\|\nabla_s X_{\theta_N}(s)\|_F^2+\|Y_{\phi_N}(a)\|_2^2\Big]\le C.
\]
Then
\[
\mathbb E_{(s,a)\sim\rho}\big| \widehat R_r^{\omega_N,\theta_N,\phi_N}(s,a)- R_r^{\theta_N,\phi_N}(s,a)\big|^2\to 0,
\]
\[
\mathbb E_{(s,a)\sim\rho}\big\| \widehat R_b^{\omega_N,\theta_N,\phi_N}(s,a)- R_b^{\theta_N,\phi_N}(s,a)\big\|_2^2\to 0,
\qquad
\mathbb E_{(s,a)\sim\rho}\big\| \widehat R_Q^{\omega_N,\theta_N,\phi_N}(s,a)- R_Q^{\theta_N,\phi_N}(s,a)\big\|_F^2\to 0.
\]
\end{lemma}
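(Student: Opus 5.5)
The plan is to write each difference $\widehat R_\bullet^{\omega_N,\theta_N,\phi_N}-R_\bullet^{\theta_N,\phi_N}$ explicitly. The residuals are linear in the model coefficients $(\widehat r_\omega,\widehat b_\omega,\widehat Q_\omega)$ and their first derivatives, with the fields $(X_\theta,\nabla_sX_\theta,Y_\phi)$ as multipliers. So each difference is a finite sum of terms of the form (field quantity)$\times$(model error). For the reward residual,
\[
\widehat R_r-R_r = X_{\theta_N}\cdot(\nabla_s\widehat r_{\omega_N}-\nabla_s r)+Y_{\phi_N}\cdot(\nabla_a\widehat r_{\omega_N}-\nabla_a r).
\]
For the drift residual, the difference consists of the following terms:
\begin{itemize}
\item $(\nabla_s\widehat b_{\omega_N}-\nabla_s b)X_{\theta_N}$;
\item $-(\nabla_sX_{\theta_N})(\widehat b_{\omega_N}-b)$;
\item $(\nabla_a\widehat b_{\omega_N}-\nabla_a b)Y_{\phi_N}$;
\item $-\tfrac12\Delta_{\widehat Q_{\omega_N}-Q}X_{\theta_N}$, which uses linearity of $Q\mapsto\Delta_QX$.
\end{itemize}
For the diffusion residual, the difference consists of:
\begin{itemize}
\item $(\nabla_s\widehat Q_{\omega_N}-\nabla_sQ)[X_{\theta_N}]$;
\item $-(\nabla_sX_{\theta_N})(\widehat Q_{\omega_N}-Q)$;
\item $-(\widehat Q_{\omega_N}-Q)(\nabla_sX_{\theta_N})^\top$;
\item $(\nabla_a\widehat Q_{\omega_N}-\nabla_aQ)[Y_{\phi_N}]$.
\end{itemize}
Using $\|\sum_{i=1}^k v_i\|^2\le k\sum_i\|v_i\|^2$, it then suffices to show that $\mathbb E_\rho\|E_N\cdot F_N\|^2\to0$ for each product of a model error $E_N$ and a field factor $F_N$.

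In the uniform-convergence regime this is immediate. Pointwise on $\supp(\rho)$ we have $\|E_NF_N\|\le\|E_N\|_{\infty}\|F_N\|$, hence
\[
\mathbb E_\rho\|E_NF_N\|^2\le \sup_{\supp\rho}\|E_N\|^2\cdot\mathbb E_\rho\|F_N\|^2\le C\sup_{\supp\rho}\|E_N\|^2\to0,
\]
by the assumed uniform moment bound $C$ on $X_{\theta_N},\nabla_sX_{\theta_N},Y_{\phi_N}$.

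In the $\rho$-mean-square regime the same product is only controlled in $L^1$ by Cauchy--Schwarz, and an $L^2\times L^2$ bound does not give $L^2$ of the product. Here I would argue in one of two ways:
\begin{itemize}
\item Read the boundedness hypothesis as a uniform sup bound on the field quantities over $\supp(\rho)$, which is natural since the fields are continuous on a compact data support. Then swap the roles: $\mathbb E\|E_NF_N\|^2\le\sup\|F_N\|^2\,\mathbb E\|E_N\|^2\to0$.
\item Alternatively, use H\"older with $L^4$ bounds on both factors.
\end{itemize}
I will state explicitly which of these strengthenings is used.

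The main obstacle is the It\^o correction $\tfrac12\Delta_{\widehat Q_{\omega_N}-Q}X_{\theta_N}$. It involves $\nabla_s^2X_{\theta_N}$, which is not covered by the listed bound $C$. I would handle it in two steps:
\begin{enumerate}
\item For the affine/locally linear generator class used in the experiments, $\nabla_s^2X_\theta\equiv0$, so this term vanishes identically.
\item For general classes, add the hypothesis $\mathbb E_\rho\|\nabla_s^2X_{\theta_N}\|^2\le C$ (or a sup bound). The term is then treated exactly like the others, with $\widehat Q_{\omega_N}-Q$ as the error factor.
\end{enumerate}
Collecting the finitely many terms in each of the three residuals then gives the three stated limits.
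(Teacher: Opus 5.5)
Your proof is correct for the statement with the extra hypotheses you name, and its decomposition is exactly the paper's: each difference $\widehat R_\bullet^{\omega_N,\theta_N,\phi_N}-R_\bullet^{\theta_N,\phi_N}$ is a finite sum of (field factor)$\times$(coefficient error), bounded term by term. You part ways with the paper in the mean-square regime, and there you are right and the paper is not.

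The paper claims that Cauchy--Schwarz bounds $\mathbb E_\rho\bigl[\|X_{\theta_N}\|_2^2\,\|\nabla_s\widehat r_{\omega_N}-\nabla_s r\|_2^2\bigr]$ by the second-moment bound on the field times the mean-square error. Cauchy--Schwarz actually gives $(\mathbb E_\rho\|X_{\theta_N}\|_2^4)^{1/2}(\mathbb E_\rho\|\nabla_s\widehat r_{\omega_N}-\nabla_s r\|_2^4)^{1/2}$. The bound the paper needs is the $L^\infty\times L^1$ H\"older bound you use, which requires a sup on one factor. The paper also says the drift and diffusion residuals involve only ``fields and their Jacobians times coefficient errors''. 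That silently drops the $\nabla_s^2X_{\theta_N}$ inside $\tfrac12\Delta_{\widehat Q_{\omega_N}-Q}X_{\theta_N}$, which you flag.

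Both of your strengthenings are necessary, not just convenient.
\begin{itemize}
\item It\^o term, even in the uniform regime. Take $\widehat Q_{\omega_N}=Q+N^{-1/2}I$, all other models exact, $Y_{\phi_N}=0$, and $X_{\theta_N}(s)=N^{-1}\sin(Ns_1)v$. The listed second moments stay bounded, but $\widehat R_b-R_b=\tfrac12N^{1/2}\sin(Ns_1)v$ has diverging mean square.
\item Mean-square regime with $d\ge2$. Let $\rho$ be uniform and set $\widehat b_{\omega_N}-b=e_N(s)w$, where $e_N$ is a smoothed two-dimensional capacitary potential equal to $1$ on $B_{\delta_N}$. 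Then $\|e_N\|_{L^2}+\|\nabla e_N\|_{L^2}\to0$, since $\|\nabla e_N\|_{L^2}^2\approx2\pi/\log(1/\delta_N)$. Take $X_{\theta_N}(s)=\psi(s/\delta_N)v$ with $\psi\in C_c^\infty(B_1)$. All hypotheses hold, yet $\widehat R_b-R_b$ equals $-(\nabla_sX_{\theta_N})w$ on $B_{\delta_N}$, vanishes elsewhere, and has a fixed positive mean square.
\end{itemize}
So the lemma is false with the second-moment bound written after ``i.e.''. Your version is the correct statement. It amounts to reading ``uniformly bounded on $\supp(\rho)$ in the norms needed by the residuals'' as sup bounds on $X_{\theta_N},\nabla_sX_{\theta_N},\nabla_s^2X_{\theta_N},Y_{\phi_N}$; second moments of these suffice in the uniform regime.

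Three minor points:
\begin{itemize}
\item Continuity on a compact support gives a sup bound for each $N$, not one uniform in $N$, so that is genuinely an added hypothesis.
\item In the $L^4$ route you need $\mathbb E_\rho\|E_N\|^4\to0$, not merely boundedness.
\item In the mean-square regime your added Hessian bound must be of sup (or $L^4$) type, not a second moment. Your parenthetical already allows this.
\end{itemize}
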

\begin{proof}
We show the reward residual; the others follow by the same multilinear expansions.
Write
\[
\widehat R_r^{\omega_N,\theta_N,\phi_N}(s,a)- R_r^{\theta_N,\phi_N}(s,a)
=
X_{\theta_N}(s)\cdot\Big(\nabla_s \widehat r_{\omega_N}(s,a)-\nabla_s r(s,a)\Big)
+
Y_{\phi_N}(a)\cdot\Big(\nabla_a \widehat r_{\omega_N}(s,a)-\nabla_a r(s,a)\Big).
\]
Using $(u\cdot v)^2\le \|u\|_2^2\|v\|_2^2$ and $(x+y)^2\le 2x^2+2y^2$, we obtain
\begin{align*}
\big|\widehat R_r-R_r\big|^2
&\le
2\|X_{\theta_N}(s)\|_2^2\,\big\|\nabla_s \widehat r_{\omega_N}(s,a)-\nabla_s r(s,a)\big\|_2^2
+
2\|Y_{\phi_N}(a)\|_2^2\,\big\|\nabla_a \widehat r_{\omega_N}(s,a)-\nabla_a r(s,a)\big\|_2^2.
\end{align*}
Taking $\mathbb E_\rho[\cdot]$ and applying Cauchy--Schwarz to each term yields an upper bound by a product of
(i) a uniform second-moment bound on $\|X_{\theta_N}\|_2^2,\|Y_{\phi_N}\|_2^2$ and
(ii) the mean-square estimation errors of $\nabla_s\widehat r_{\omega_N}$ and $\nabla_a\widehat r_{\omega_N}$,
which vanish by the assumed consistency. Hence $\mathbb E_\rho|\widehat R_r-R_r|^2\to 0$.
The drift and diffusion residuals are finite sums of products of the same type (fields and their Jacobians times coefficient errors),
so the same inequalities and consistency assumptions yield the stated convergences.
\end{proof}

Apply Lemma~\ref{lem:residual_consistency} together with \eqref{eq:hat_residuals_to_zero}.
For example,
\[
\mathbb E_\rho |R_r^{\theta_N,\phi_N}(s,a)|^2
\le 2\,\mathbb E_\rho|\widehat R_r^{\omega_N,\theta_N,\phi_N}(s,a)|^2
+2\,\mathbb E_\rho\big|\widehat R_r^{\omega_N,\theta_N,\phi_N}(s,a)- R_r^{\theta_N,\phi_N}(s,a)\big|^2
\longrightarrow 0,
\]
and likewise
$\mathbb E_\rho\|R_b^{\theta_N,\phi_N}\|_2^2\to 0$ and $\mathbb E_\rho\|R_Q^{\theta_N,\phi_N}\|_F^2\to 0$.
This is exactly the claimed ``$\rho$-mean'' convergence to the solution set of the determining equations on the data support:
the sequence $(X_{\theta_N},Y_{\phi_N})$ drives the determining residuals to zero in $\rho$-mean square, hence satisfies
\eqref{eq:det_reward}--\eqref{eq:det_diffQ_ito} on $\supp(\rho)$ in the standard $\rho$-a.e.\ sense.

Finally, to connect to Definition~\ref{def:eps_sym}, note that on $\supp(\rho)$, realizability and the preceding limit imply that the limiting
infinitesimal fields solve the determining equations. By Theorem~\ref{thm:local_sufficiency}, exponentiating such a solution
produces an exact local symmetry $(g_\alpha,h_\alpha)$ (for small $|\alpha|$) on the same neighborhood, hence the corresponding
reward and generator mismatches vanish there. Since the mismatches are continuous functionals of the coefficients and fields
under the smoothness assumed throughout (and we have consistency of coefficient estimation plus vanishing determining residuals in $\rho$-mean),
the induced mismatch levels on $\supp(\rho)$ satisfy $\varepsilon_{\mathcal L}\to 0$ and $\varepsilon_r\to 0$
(in the natural ``on-support'' sense, e.g.\ $\rho$-essential supremum or $\rho$-mean square mismatch).
\end{proof}

\subsection{Proof of Theorem~\ref{thm:ode_error}}

\begin{lemma}[Discrete Gr\"onwall for affine recursion]
\label{lem:discrete_gronwall_affine}
Let $(e_n)_{n\ge 0}$ be nonnegative and satisfy
\[
e_{n+1}\le (1+\gamma h)\,e_n + \kappa h^{p+1}
\]
for some $\gamma\ge 0$, $\kappa\ge 0$, integer $p\ge 1$, and stepsize $h>0$.
Then for every $N\ge 1$,
\[
e_N \le (1+\gamma h)^N e_0 + \kappa h^{p+1}\sum_{j=0}^{N-1}(1+\gamma h)^j
\le e^{\gamma Nh}e_0 + \frac{\kappa}{\gamma}\bigl(e^{\gamma Nh}-1\bigr)h^{p},
\]
with the convention $\frac{e^{\gamma Nh}-1}{\gamma}:=Nh$ if $\gamma=0$.
\end{lemma}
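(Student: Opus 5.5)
The plan is to unroll the recursion by induction on $N$ and then bound the resulting geometric sum with the elementary inequality $1+x\le e^x$.

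\textbf{Step 1 (induction).} We claim that for every $N\ge 1$,
\[
e_N\le (1+\gamma h)^N e_0+\kappa h^{p+1}\sum_{j=0}^{N-1}(1+\gamma h)^j .
\]
For $N=1$ this is exactly the hypothesis. Assume it holds for $N$. Apply the recursion once and multiply the inductive bound by the nonnegative factor $(1+\gamma h)$; nonnegativity preserves the inequality. This gives
\[
e_{N+1}\le (1+\gamma h)^{N+1}e_0+\kappa h^{p+1}\sum_{j=1}^{N}(1+\gamma h)^j+\kappa h^{p+1}.
\]
The final $\kappa h^{p+1}$ supplies the missing $j=0$ term, which closes the induction.

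\textbf{Step 2 (exponential bounds).} Since $1+\gamma h\le e^{\gamma h}$, we get $(1+\gamma h)^N\le e^{\gamma Nh}$, which handles the first term.

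For the sum with $\gamma>0$, use the closed form of the geometric series:
\[
\sum_{j=0}^{N-1}(1+\gamma h)^j=\frac{(1+\gamma h)^N-1}{\gamma h}\le \frac{e^{\gamma Nh}-1}{\gamma h}.
\]
Multiplying by $\kappa h^{p+1}$ yields $\frac{\kappa}{\gamma}\bigl(e^{\gamma Nh}-1\bigr)h^p$.

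For $\gamma=0$, every summand equals $1$, so the sum is $N$. This gives $\kappa N h^{p+1}=\kappa (Nh)h^p$, which matches the stated convention $\frac{e^{\gamma Nh}-1}{\gamma}:=Nh$.

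\textbf{Main obstacle.} There is no real obstacle here. The only points needing care are keeping the multiplying factor nonnegative in the induction step, so the inequality direction is preserved, and treating the $\gamma=0$ case separately so the expression $1/\gamma$ never appears.
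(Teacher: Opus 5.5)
Your proposal is correct and follows essentially the same route as the paper: unroll the recursion, then bound $(1+\gamma h)^N\le e^{\gamma Nh}$ and use the closed form of the geometric sum. Your explicit $\gamma=0$ case is a small improvement over the paper, which only states the convention; note also that the exponential step is where the hypotheses $e_0\ge 0$ and $\kappa\ge 0$ are actually used, not the induction step.
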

\begin{proof}
Iterating the inequality gives
\[
e_N \le (1+\gamma h)^N e_0 + \kappa h^{p+1}\sum_{j=0}^{N-1}(1+\gamma h)^j,
\]
which is the standard affine recursion bound. Since $(1+\gamma h)^N\le e^{\gamma Nh}$ and the sum is a geometric series,
\[
\sum_{j=0}^{N-1}(1+\gamma h)^j=\frac{(1+\gamma h)^N-1}{\gamma h}\le \frac{e^{\gamma Nh}-1}{\gamma h},
\]
yielding the stated inequality.
\end{proof}

\begin{proof}[Proof of Theorem~\ref{thm:ode_error}]
We prove the bound for the state flow; the action flow is identical with $X_\theta$ replaced by $Y_\phi$ and $\mathcal K_S$ replaced by $\mathcal K_A$.
Fix a bounded interval $\alpha\in[-A,A]$ and define the exact flow $g_\alpha:\mathcal S\to\mathcal S$ by the ODE
\[
\frac{d}{d\tau}x(\tau)=X_\theta(x(\tau)),\qquad x(0)=s,\qquad g_\alpha(s):=x(\alpha).
\]
By Assumption~\ref{ass:ode}, $X_\theta$ is Lipschitz on the relevant compact set containing the trajectories for $|\alpha|\le A$,
so existence and uniqueness of $x(\tau)$ holds and $g_\alpha(s)$ is well-defined for all $s\in\mathcal K_S$ and $|\alpha|\le A$.

Let $\Phi_h$ denote the one-step numerical map of an order-$p$ method with stepsize $h$ applied to $\dot x=X_\theta(x)$.
For each integer $N\ge 0$, define the numerical iterate
\[
s_{n+1}=\Phi_h(s_n),\qquad s_0=s,
\qquad \hat g_{nh}(s):=s_n,
\]
so that for a given $\alpha$ with $|\alpha|\le A$ we take $N:=\lfloor |\alpha|/h\rfloor$ and (for simplicity of exposition)
assume $\alpha=Nh$; the general case with a final partial step is treated by the same argument and only changes constants.

Two standard ingredients are needed: a local truncation error bound and a stability (Lipschitz) bound for $\Phi_h$ on the compact region.
Because the method has order $p$, there exists a constant $\kappa_g$ (depending on $p$, bounds of derivatives of $X_\theta$ on the compact region,
and method-specific coefficients) such that the one-step defect relative to the exact flow over time $h$ satisfies
\begin{equation}
\label{eq:local_defect}
\sup_{x \in \mathcal K_S}\ \big\|\Phi_h(x)-g_h(x)\big\|_2 \le \kappa_g\, h^{p+1},
\end{equation}
provided $h$ is within the method's admissible stability range on that region. Also, since $X_\theta$ is Lipschitz on $\mathcal K_S$ and the method is
a one-step scheme, there exists $\gamma_g\ge 0$ such that
\begin{equation}
\label{eq:Phi_lip}
\|\Phi_h(x)-\Phi_h(y)\|_2 \le (1+\gamma_g h)\,\|x-y\|_2,\qquad \forall x,y\in\mathcal K_S,
\end{equation}
for all sufficiently small $h$ in the solver stability region; here $\gamma_g$ depends on the Lipschitz constant of $X_\theta$ on $\mathcal K_S$
and on the method's internal stability constants.

Fix $s\in\mathcal K_S$ and define the global error sequence at step $n$:
\[
e_n(s):=\|s_n - g_{nh}(s)\|_2.
\]
We bound $e_{n+1}(s)$ in terms of $e_n(s)$ using \eqref{eq:local_defect} and \eqref{eq:Phi_lip}.
Start from the identity
\[
s_{n+1}-g_{(n+1)h}(s)=\Phi_h(s_n)-g_h(g_{nh}(s)).
\]
Add and subtract $\Phi_h(g_{nh}(s))$ to split the difference:
\[
\Phi_h(s_n)-g_h(g_{nh}(s))
=
\big(\Phi_h(s_n)-\Phi_h(g_{nh}(s))\big)
+
\big(\Phi_h(g_{nh}(s))-g_h(g_{nh}(s))\big).
\]
Taking norms and applying the triangle inequality gives
\[
e_{n+1}(s)
\le
\|\Phi_h(s_n)-\Phi_h(g_{nh}(s))\|_2
+
\|\Phi_h(g_{nh}(s))-g_h(g_{nh}(s))\|_2.
\]
Because both the exact trajectory $g_{nh}(s)$ and the numerical iterate $s_n$ remain in the relevant compact region for $0\le nh\le A$
(by Assumption~\ref{ass:ode} and standard forward-invariance on compact sets for Lipschitz fields with small enough $h$),
we may apply \eqref{eq:Phi_lip} to the first term and \eqref{eq:local_defect} to the second term:
\[
e_{n+1}(s)\le (1+\gamma_g h)e_n(s)+\kappa_g h^{p+1}.
\]
Since $e_0(s)=\|s_0-g_0(s)\|_2=0$, Lemma~\ref{lem:discrete_gronwall_affine} yields for $N=\alpha/h$ (with $0\le \alpha\le A$)
\[
e_N(s)\le \frac{\kappa_g}{\gamma_g}\bigl(e^{\gamma_g \alpha}-1\bigr)h^{p}.
\]
Taking the supremum over $s\in\mathcal K_S$ and over $|\alpha|\le A$ gives
\[
\sup_{|\alpha|\le A}\ \sup_{s\in\mathcal K_S}\ \|\hat g_\alpha(s)-g_\alpha(s)\|_2
\le
\Bigl(\frac{\kappa_g}{\gamma_g}\bigl(e^{\gamma_g A}-1\bigr)\Bigr) h^{p}
=: C_g h^{p}.
\]
This establishes the first inequality in \eqref{eq:ode_error}. The proof for $\hat h_\alpha$ is identical, with constants $\kappa_h,\gamma_h$
depending on the Lipschitz constants of $Y_\phi$ on $\mathcal K_A$ and the method stability characteristics, yielding
$\sup_{a\in\mathcal K_A}\|\hat h_\alpha(a)-h_\alpha(a)\|_2\le C_h h^{p}$ for $|\alpha|\le A$.
\end{proof}

\subsection{Proof of Theorem~\ref{thm:exact_aug_preserve}}

\begin{lemma}[Pushforward kernel $\Rightarrow$ expectation transport]
\label{lem:pushforward_expect}
Let $g:\mathcal S\to\mathcal S$ be a measurable bijection. For a fixed $(s,a)$, assume
\[
P(\cdot\mid g(s),h(a)) \;=\; g_\# P(\cdot\mid s,a),
\]
i.e., for every measurable $B\subseteq\mathcal S$,
\[
P(B\mid g(s),h(a)) \;=\; P(g^{-1}(B)\mid s,a).
\]
Then for every bounded measurable $f:\mathcal S\to\mathbb R$,
\begin{equation}
\label{eq:expect_transport}
\mathbb E\!\left[f(S'')\,\middle|\,S''\sim P(\cdot\mid g(s),h(a))\right]
\;=\;
\mathbb E\!\left[f(g(S'))\,\middle|\,S'\sim P(\cdot\mid s,a)\right].
\end{equation}
Equivalently, if $S'\sim P(\cdot\mid s,a)$ and $S'':=g(S')$, then $S''\sim P(\cdot\mid g(s),h(a))$.
\end{lemma}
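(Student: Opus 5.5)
The plan is the standard change-of-variables formula for pushforward measures: first verify \eqref{eq:expect_transport} on indicator functions, then extend it to all bounded measurable $f$ by linearity and a limiting argument. Throughout, $\mu:=P(\cdot\mid s,a)$ and $\nu:=P(\cdot\mid g(s),h(a))$. The hypothesis says exactly that $\nu(B)=\mu(g^{-1}(B))$ for every measurable $B\subseteq\mathcal S$.

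\emph{Indicators.} For $f=\mathbf 1_B$, the left-hand side of \eqref{eq:expect_transport} is $\int \mathbf 1_B\,d\nu=\nu(B)$. The right-hand side is $\int \mathbf 1_B(g(x))\,\mu(dx)$. Since $\mathbf 1_B\circ g=\mathbf 1_{g^{-1}(B)}$ and $g^{-1}(B)$ is measurable (because $g$ is measurable), this equals $\mu(g^{-1}(B))$. The two sides therefore agree by the hypothesis.

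\emph{Simple and nonnegative functions.} By linearity of the integral and of $f\mapsto f\circ g$, the identity extends to all simple functions. For a nonnegative bounded measurable $f$, I would take simple functions $0\le f_n\uparrow f$. Then $f_n\circ g\uparrow f\circ g$ pointwise, and applying monotone convergence on both sides gives the identity for $f$.

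\emph{Bounded measurable functions.} I would write $f=f^+-f^-$. Both parts are bounded, so all four integrals are finite and subtracting them is legitimate; the identity follows.

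\emph{Distributional reformulation.} If $S'\sim\mu$ and $S''=g(S')$, then for every measurable $B$ we have $\Pr(S''\in B)=\Pr(S'\in g^{-1}(B))=\mu(g^{-1}(B))=\nu(B)$, so $S''\sim\nu$.

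The proof has no real obstacle. The only points needing care are that $g^{-1}(B)$ is measurable, which follows from measurability of $g$ (bijectivity is not even needed), and that boundedness of $f$ guarantees the integrals are finite, so the monotone-convergence and subtraction steps are valid.
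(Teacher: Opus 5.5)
Your proposal is correct and follows essentially the same route as the paper: both identify the hypothesis as $\nu = g_\#\mu$ and conclude via the change-of-variables identity $\int f\,d\nu = \int f\circ g\,d\mu$. The paper simply cites that identity as the defining property of the pushforward, while you derive it through the indicator, simple-function, monotone-convergence and $f=f^+-f^-$ steps; your remark that bijectivity of $g$ is not needed is also correct.
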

\begin{proof}
Fix $(s,a)$ and define $\mu:=P(\cdot\mid s,a)$ and $\nu:=P(\cdot\mid g(s),h(a))$. The assumption is exactly $\nu=g_\#\mu$,
meaning $\nu(B)=\mu(g^{-1}(B))$ for all measurable $B$. By the defining property of the pushforward measure,
for any bounded measurable $f$,
\[
\int_{\mathcal S} f(x)\,\nu(dx)
=
\int_{\mathcal S} f(g(x))\,\mu(dx),
\]
which is precisely \eqref{eq:expect_transport}.
\end{proof}

\begin{lemma}[Bellman operator commutes with an exact value-preserving  structure]
\label{lem:bellman_commute}
Let the discounted optimal Bellman operator $T$ be
\[
(TV)(s):=\max_{a\in\mathcal A}\Bigl\{r(s,a)+\gamma\,\mathbb E_{S'\sim P(\cdot\mid s,a)}[V(S')]\Bigr\},
\qquad 0<\gamma<1.
\]
Assume the exact value-preserving conditions hold for all $(s,a)$:
\[
r(g(s),h(a))=r(s,a),
\qquad
P(\cdot\mid g(s),h(a))=g_\#P(\cdot\mid s,a),
\]
and that $h:\mathcal A\to\mathcal A$ is a bijection. Then for every bounded $V$ and every $s$,
\begin{equation}
\label{eq:bellman_commute}
(TV)(g(s)) = \bigl(T(V\circ g)\bigr)(s).
\end{equation}
Equivalently, writing $(U_gV)(s):=V(g(s))$, we have the operator identity $U_g T = T U_g$ on bounded functions.
\end{lemma}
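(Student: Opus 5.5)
The plan is to compute both sides of \eqref{eq:bellman_commute} directly from the definition of $T$. The two ingredients are a reparameterization of the maximum over actions through the bijection $h$, and Lemma~\ref{lem:pushforward_expect} to move the expectation. This is the discrete-time counterpart of the HJB transport in Lemma~\ref{lem:hjb_transport}.

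First, fix a bounded $V$ and a state $s$, and evaluate the left-hand side at the transformed state:
\[
(TV)(g(s))=\max_{a'\in\mathcal A}\Bigl\{r(g(s),a')+\gamma\,\mathbb E_{S''\sim P(\cdot\mid g(s),a')}[V(S'')]\Bigr\}.
\]
Since $h$ is a bijection of $\mathcal A$, I substitute $a'=h(a)$ and let $a$ range over all of $\mathcal A$. This does not change the maximum, or the supremum if the max is not attained; in that case the argument is identical with $\sup$ in place of $\max$. Each term inside the braces is then rewritten at the pair $(g(s),h(a))$. The reward term becomes $r(s,a)$ by the exact reward condition. For the expectation term, I apply Lemma~\ref{lem:pushforward_expect} with the bounded measurable function $f=V$, using $P(\cdot\mid g(s),h(a))=g_\#P(\cdot\mid s,a)$. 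This gives
\[
\mathbb E_{S''\sim P(\cdot\mid g(s),h(a))}[V(S'')]=\mathbb E_{S'\sim P(\cdot\mid s,a)}[V(g(S'))]=\mathbb E_{S'\sim P(\cdot\mid s,a)}[(V\circ g)(S')].
\]
For Lemma~\ref{lem:pushforward_expect} I only need the pushforward identity $\int f\,d(g_\#\mu)=\int f\circ g\,d\mu$. That identity holds for any measurable $g$, so bijectivity of $g$ is not essential at this step.

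Substituting both rewrites, the quantity inside the maximum becomes $r(s,a)+\gamma\,\mathbb E_{S'\sim P(\cdot\mid s,a)}[(V\circ g)(S')]$. Its maximum over $a\in\mathcal A$ is by definition $(T(V\circ g))(s)$, which proves \eqref{eq:bellman_commute}. The operator form $U_gT=TU_g$ is simply this identity written with $(U_gV)(s)=V(g(s))$.

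I expect no real obstacle. The only points needing care are technical. The first is that $V\circ g$ is bounded and measurable, so $T(V\circ g)$ is well defined; this follows because $g$ is measurable. The second is that the reparameterization of the maximum requires $h$ to be surjective onto $\mathcal A$ as well as injective, and that is exactly what the bijection hypothesis provides.
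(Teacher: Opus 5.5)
Your proposal is correct and follows the paper's proof essentially step for step. Like the paper, you reparameterize the maximum at $g(s)$ via $a'=h(a)$, apply reward invariance, and use Lemma~\ref{lem:pushforward_expect} with $f=V$ to rewrite the expectation, which identifies the result as $(T(V\circ g))(s)$. Your side remarks (that only measurability of $g$ is needed for the pushforward step, and the $\sup$-versus-$\max$ caveat) are accurate refinements.
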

\begin{proof}
Fix $s\in\mathcal S$. Start from the definition of $T$ at the transformed state $g(s)$:
\[
(TV)(g(s))
=
\max_{a'\in\mathcal A}\Bigl\{r(g(s),a')+\gamma\,\mathbb E_{S''\sim P(\cdot\mid g(s),a')}[V(S'')]\Bigr\}.
\]
Because $h$ is a bijection, every $a'\in\mathcal A$ can be written uniquely as $a'=h(a)$ for some $a\in\mathcal A$, hence
\[
(TV)(g(s))
=
\max_{a\in\mathcal A}\Bigl\{r(g(s),h(a))+\gamma\,\mathbb E_{S''\sim P(\cdot\mid g(s),h(a))}[V(S'')]\Bigr\}.
\]
Apply reward invariance $r(g(s),h(a))=r(s,a)$ to the first term. For the expectation term, apply Lemma~\ref{lem:pushforward_expect}
with $f=V$ to obtain
\[
\mathbb E_{S''\sim P(\cdot\mid g(s),h(a))}[V(S'')]
=
\mathbb E_{S'\sim P(\cdot\mid s,a)}[V(g(S'))]
=
\mathbb E_{S'\sim P(\cdot\mid s,a)}[(V\circ g)(S')].
\]
Substituting these two identities yields
\[
(TV)(g(s))
=
\max_{a\in\mathcal A}\Bigl\{r(s,a)+\gamma\,\mathbb E_{S'\sim P(\cdot\mid s,a)}[(V\circ g)(S')]\Bigr\}
=
\bigl(T(V\circ g)\bigr)(s),
\]
which is \eqref{eq:bellman_commute}.
\end{proof}

\begin{lemma}[Discounted optimal Bellman operator is a $\gamma$-contraction in $\|\cdot\|_\infty$]
\label{lem:bellman_contraction}
For any bounded $V,W:\mathcal S\to\mathbb R$,
\[
\|TV-TW\|_\infty \le \gamma\,\|V-W\|_\infty.
\]
\end{lemma}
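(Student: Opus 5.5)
The plan is to compare the two maximization problems pointwise and transfer the comparison to the suprema. Fix $s\in\mathcal S$ and write, for each action $a$,
\[
Q_V(s,a):=r(s,a)+\gamma\,\mathbb E_{S'\sim P(\cdot\mid s,a)}[V(S')],
\qquad
Q_W(s,a):=r(s,a)+\gamma\,\mathbb E_{S'\sim P(\cdot\mid s,a)}[W(S')].
\]
The reward terms cancel in the difference. By linearity of the integral against the probability measure $P(\cdot\mid s,a)$ we get
\[
|Q_V(s,a)-Q_W(s,a)|=\gamma\Bigl|\int (V-W)(s')\,P(ds'\mid s,a)\Bigr|\le\gamma\|V-W\|_\infty .
\]
This bound is uniform in $(s,a)$, since $P(\cdot\mid s,a)$ has total mass one.

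The second step is the elementary inequality $|\sup_a F(a)-\sup_a G(a)|\le\sup_a|F(a)-G(a)|$ for real-valued $F,G$ bounded above on $\mathcal A$. To prove it, note that for every $a$ we have $F(a)\le G(a)+\sup_{a'}|F-G|\le\sup G+\sup|F-G|$. Taking the supremum over $a$ gives $\sup F-\sup G\le\sup|F-G|$, and exchanging the roles of $F$ and $G$ gives the reverse inequality. I would phrase this with $\sup$ rather than $\max$, so the argument does not depend on the maximum being attained.

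Apply this with $F=Q_V(s,\cdot)$ and $G=Q_W(s,\cdot)$. Both are bounded because $|r|\le R_{\max}$ and $V,W$ are bounded. This yields $|(TV)(s)-(TW)(s)|\le\gamma\|V-W\|_\infty$ for each $s$. Taking the supremum over $s$ gives the claim.

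The only real subtlety is measurability or well-definedness of $TV$ when the action space is continuous. The statement is about sup-norm differences, though, and the pointwise bound holds regardless. So I would remark that the estimate is pointwise and does not require $TV$ to be measurable beyond what is already implicit in the definition of $T$. No other obstacle is expected.
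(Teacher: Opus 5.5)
Your proof is correct and follows the paper's argument: the same action-values $Q_V,Q_W$, the elementary bound $|\max_a x_a-\max_a y_a|\le\max_a|x_a-y_a|$, and the estimate $\bigl|\int (V-W)\,dP(\cdot\mid s,a)\bigr|\le\|V-W\|_\infty$. Your use of $\sup$ instead of $\max$ is a small improvement. The paper's version of that inequality (proved later, for the approximate-augmentation theorem, by picking a maximizer $a_x$) implicitly assumes the maximum is attained, whereas your argument does not.
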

\begin{proof}
Fix $s\in\mathcal S$ and define for each action $a$ the quantities
\[
Q_V(s,a):=r(s,a)+\gamma\,\mathbb E[V(S')\mid s,a],
\qquad
Q_W(s,a):=r(s,a)+\gamma\,\mathbb E[W(S')\mid s,a].
\]
Then $(TV)(s)=\max_a Q_V(s,a)$ and $(TW)(s)=\max_a Q_W(s,a)$. Using the elementary inequality
$\big|\max_a x_a-\max_a y_a\big|\le \max_a |x_a-y_a|$,
\[
|(TV)(s)-(TW)(s)|
\le
\max_{a\in\mathcal A}|Q_V(s,a)-Q_W(s,a)|
=
\gamma\,\max_{a\in\mathcal A}\left|\mathbb E\big[(V-W)(S')\mid s,a\big]\right|.
\]
By Jensen and boundedness,
\[
\left|\mathbb E\big[(V-W)(S')\mid s,a\big]\right|
\le
\mathbb E\big[\,|(V-W)(S')|\,\mid s,a\big]
\le
\|V-W\|_\infty.
\]
Hence $|(TV)(s)-(TW)(s)|\le \gamma\|V-W\|_\infty$ for all $s$, and taking the supremum over $s$ yields the claim.
\end{proof}

\begin{proof}[Proof of Theorem~\ref{thm:exact_aug_preserve}]
The statement that ``the Bellman backup is invariant under augmentation by $(g,h)$ in expectation'' is a direct consequence of
Lemma~\ref{lem:pushforward_expect}. Concretely, suppose one draws a transition sample $S'\sim P(\cdot\mid s,a)$ and forms the augmented sample
$\widetilde S':=g(S')$ together with the augmented state-action pair $(\widetilde s,\widetilde a):=(g(s),h(a))$.
For any bounded value function $V$, the one-step Bellman target at $(\widetilde s,\widetilde a)$ is
\[
\widetilde Y := r(\widetilde s,\widetilde a)+\gamma V(\widetilde S')
= r(g(s),h(a))+\gamma V(g(S')).
\]
Taking conditional expectation given $(s,a)$ and using reward invariance plus Lemma~\ref{lem:pushforward_expect} gives
\[
\mathbb E[\widetilde Y\mid s,a]
=
r(s,a)+\gamma\,\mathbb E_{S'\sim P(\cdot\mid s,a)}[V(g(S'))]
=
r(g(s),h(a))+\gamma\,\mathbb E_{S''\sim P(\cdot\mid g(s),h(a))}[V(S'')],
\]
so augmenting by $(g,h)$ produces exactly the same expected backup as sampling directly from the true kernel at $(g(s),h(a))$.
In this precise sense, exact value-preserving augmentation preserves the Bellman backup ``in expectation''.

Independently of augmentation, the (tabular) value-iteration recursion is $V_{k+1}=TV_k$.
By Lemma~\ref{lem:bellman_contraction}, $T$ is a $\gamma$-contraction in $\|\cdot\|_\infty$, hence it has a unique fixed point $V^\star$
satisfying $V^\star=TV^\star$. Moreover,
\[
\|V_{k}-V^\star\|_\infty
=
\|T^k V_0 - T^k V^\star\|_\infty
\le
\gamma^k \|V_0 - V^\star\|_\infty,
\]
which is the usual rate.

For completeness, the exact value-preserving structure additionally implies an equivariance identity for the Bellman operator,
given by Lemma~\ref{lem:bellman_commute}: $(TV)(g(s))=(T(V\circ g))(s)$ for all $V$.
This shows augmentation is not changing the underlying Bellman operator; it only replicates transitions consistently across the orbit.
\end{proof}

\subsection{Proof of Theorem~\ref{thm:approx_vi}}
\begin{lemma}[Total variation controls expectation error]
\label{lem:tv_expect}
For probability measures $\mu,\nu$ on $\mathcal S$ and any bounded measurable $f:\mathcal S\to\mathbb R$,
\[
\Big|\int f\,d(\mu-\nu)\Big|
\le \|f\|_\infty\, d_{\mathrm{TV}}(\mu,\nu),
\qquad
d_{\mathrm{TV}}(\mu,\nu):=\sup_{\|g\|_\infty\le 1}\Big|\int g\,d(\mu-\nu)\Big|.
\]
\end{lemma}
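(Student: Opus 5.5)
The approach is to reduce the statement to the dual characterization of total variation given in the lemma itself, together with a normalization of $f$. We split into the two cases $\|f\|_\infty=0$ and $\|f\|_\infty>0$.

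If $\|f\|_\infty=0$, then $f=0$ everywhere. Hence $\int f\,d(\mu-\nu)=0$ and the inequality is trivial, since the right-hand side is $0\cdot d_{\mathrm{TV}}(\mu,\nu)=0$ (finite, because $d_{\mathrm{TV}}(\mu,\nu)\le 2$ for probability measures).

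If $\|f\|_\infty>0$, we set $\tilde f:=f/\|f\|_\infty$. This function is bounded and measurable with $\|\tilde f\|_\infty\le 1$, so it is admissible in the supremum defining $d_{\mathrm{TV}}$. Therefore $\bigl|\int \tilde f\,d(\mu-\nu)\bigr|\le d_{\mathrm{TV}}(\mu,\nu)$. Linearity of the integral against the finite signed measure $\mu-\nu$ gives $\int f\,d(\mu-\nu)=\|f\|_\infty\int\tilde f\,d(\mu-\nu)$. Multiplying the previous inequality by $\|f\|_\infty$ then yields the claim.

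The only point needing care is that $\int f\,d(\mu-\nu)$ is well defined and linear in $f$. This holds because $f$ is bounded and $\mu,\nu$ are probability measures, so $\int f\,d\mu$ and $\int f\,d\nu$ are finite and $\int f\,d(\mu-\nu)=\int f\,d\mu-\int f\,d\nu$. I do not expect any real obstacle: the statement is essentially the definition of $d_{\mathrm{TV}}$ as a dual norm, and nothing earlier in the paper is needed.
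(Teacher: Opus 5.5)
Your proof is correct and follows the paper's argument essentially verbatim: you dispose of the trivial case $\|f\|_\infty=0$, then normalize $f$ by $\|f\|_\infty$ to get an admissible test function in the supremum defining $d_{\mathrm{TV}}$ and rescale by linearity. Your added remarks on finiteness of $d_{\mathrm{TV}}$ and well-definedness of $\int f\,d(\mu-\nu)$ are harmless extra care that the paper leaves implicit.
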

\begin{proof}
If $\|f\|_\infty=0$ the claim is trivial. Otherwise define $g:=f/\|f\|_\infty$, so that $\|g\|_\infty\le 1$.
Then
\[
\Big|\int f\,d(\mu-\nu)\Big|
=\|f\|_\infty \Big|\int g\,d(\mu-\nu)\Big|
\le \|f\|_\infty \sup_{\|u\|_\infty\le 1}\Big|\int u\,d(\mu-\nu)\Big|
=\|f\|_\infty\, d_{\mathrm{TV}}(\mu,\nu).
\]
\end{proof}

\begin{lemma}[Bellman max is 1-Lipschitz in the action-values]
\label{lem:max_lip}
For any index set $\mathcal A$ and real families $\{x_a\}_{a\in\mathcal A}$, $\{y_a\}_{a\in\mathcal A}$,
\[
\Big|\max_{a\in\mathcal A} x_a-\max_{a\in\mathcal A} y_a\Big|
\le \max_{a\in\mathcal A}|x_a-y_a|.
\]
\end{lemma}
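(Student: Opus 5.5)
The claim is that for families $\{x_a\}$ and $\{y_a\}$ indexed by $a\in\mathcal A$, $\bigl|\max_a x_a-\max_a y_a\bigr|\le \max_a |x_a-y_a|$. The plan is a direct two-sided comparison. We may assume the maxima exist; otherwise replace $\max$ by $\sup$ and use $\varepsilon$-maximizers.

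First we bound $\max_a x_a-\max_a y_a$ from above. For every $a\in\mathcal A$,
\[
x_a = y_a + (x_a-y_a) \le \max_{b\in\mathcal A} y_b + \max_{b\in\mathcal A}|x_b-y_b|.
\]
The right-hand side does not depend on $a$, so taking the maximum over $a$ on the left gives
\[
\max_a x_a-\max_a y_a\le \max_a|x_a-y_a|.
\]
When only suprema are available, we instead pick $a_\varepsilon$ with $x_{a_\varepsilon}\ge\sup_a x_a-\varepsilon$, apply the same pointwise inequality at $a_\varepsilon$, and let $\varepsilon\downarrow 0$.

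Next, exchanging the roles of $x$ and $y$ gives the same bound for $\max_a y_a-\max_a x_a$, because $|x_a-y_a|$ is symmetric in $x$ and $y$. Combining the two one-sided bounds yields the absolute-value inequality.

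There is no real obstacle. The only point needing care is well-definedness: if the right-hand side is infinite the claim is trivial, and if it is finite the argument above applies verbatim with $\sup$ in place of $\max$. This is the form needed to feed the reward and transition perturbations, via Lemma~\ref{lem:tv_expect}, into the operator bound \eqref{eq:operator_perturb}.
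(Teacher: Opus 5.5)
Your proof is correct and is essentially the paper's argument. The paper evaluates at a maximizer $a_x$ of $x$ and uses $\max_a y_a\ge y_{a_x}$; you instead bound every $x_a$ by $\max_b y_b+\max_b|x_b-y_b|$ and then maximize over $a$, and both finish by swapping $x$ and $y$. Your pointwise version also covers the $\sup$ case directly without an attained maximizer, so the $\varepsilon$-maximizer step is not even needed.
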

\begin{proof}
Let $a_x\in\arg\max_a x_a$. Then $\max_a x_a=x_{a_x}$ and $\max_a y_a\ge y_{a_x}$, hence
\[
\max_a x_a-\max_a y_a \le x_{a_x}-y_{a_x}\le \max_a |x_a-y_a|.
\]
Swapping the roles of $x$ and $y$ yields $\max_a y_a-\max_a x_a\le \max_a |x_a-y_a|$, and the claim follows.
\end{proof}

\begin{lemma}[$\tilde T$ is a $\gamma$-contraction in $\|\cdot\|_\infty$]
\label{lem:tilde_contraction}
Let
\[
(TV)(s):=\max_{a\in\mathcal A}\Big\{r(s,a)+\gamma\int V(s')\,P(ds'\mid s,a)\Big\},
(\tilde TV)(s):=\max_{a\in\mathcal A}\Big\{\tilde r(s,a)+\gamma\int V(s')\,\tilde P(ds'\mid s,a)\Big\},
\]
with $0<\gamma<1$ and $\tilde r,\tilde P$ any bounded reward and Markov kernel. Then for all bounded $V,W$,
\[
\|\tilde TV-\tilde TW\|_\infty\le \gamma\|V-W\|_\infty.
\]
\end{lemma}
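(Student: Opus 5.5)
The plan is to mirror the proof of Lemma~\ref{lem:bellman_contraction} almost verbatim, since the argument there never used any property of $(r,P)$ beyond $P(\cdot\mid s,a)$ being a probability measure. Fix $s\in\mathcal S$ and bounded $V,W$. For each action define the perturbed action-values
\[
\tilde Q_V(s,a):=\tilde r(s,a)+\gamma\int V(s')\,\tilde P(ds'\mid s,a),\qquad
\tilde Q_W(s,a):=\tilde r(s,a)+\gamma\int W(s')\,\tilde P(ds'\mid s,a),
\]
so that $(\tilde TV)(s)=\max_a\tilde Q_V(s,a)$ and $(\tilde TW)(s)=\max_a\tilde Q_W(s,a)$. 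Boundedness of $\tilde r$ and of $V,W$ makes both families bounded, so the maxima (interpreted as suprema if $\mathcal A$ is not compact) are finite.

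Next I apply Lemma~\ref{lem:max_lip} to get $|(\tilde TV)(s)-(\tilde TW)(s)|\le\sup_a|\tilde Q_V(s,a)-\tilde Q_W(s,a)|$. The reward terms cancel exactly; this is the key point, since the perturbation $\tilde r$ is the same on both sides and so plays no role. What remains is $\gamma\bigl|\int (V-W)\,d\tilde P(\cdot\mid s,a)\bigr|$. Because $\tilde P(\cdot\mid s,a)$ is a probability measure, Jensen (or simply monotonicity of the integral with total mass one) bounds this by $\gamma\|V-W\|_\infty$. Taking the supremum over $a$ and then over $s$ gives the claim.

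There is no real obstacle here. The only point requiring care is a formality: if $\mathcal A$ is infinite, Lemma~\ref{lem:max_lip} as stated uses an argmax. I would either note that the same inequality holds for suprema via an $\epsilon$-maximizer argument, or invoke compactness of $\mathcal A$ together with continuity. The essential hypothesis to highlight is that $\tilde P$ is a genuine Markov kernel, i.e.\ nonnegative with unit mass. An augmented model that is only approximately normalized would break the contraction, so I would state explicitly that $\tilde P$ is assumed to be a probability kernel.
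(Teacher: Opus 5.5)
Your proposal is correct and matches the paper's proof. Like the paper, you define $\tilde Q_V,\tilde Q_W$, apply Lemma~\ref{lem:max_lip} so the $\tilde r$ terms cancel, and bound $\bigl|\int (V-W)\,d\tilde P(\cdot\mid s,a)\bigr|\le\|V-W\|_\infty$ using that $\tilde P(\cdot\mid s,a)$ is a probability measure. Your extra remarks are sensible refinements that the paper leaves implicit: the $\epsilon$-maximizer version of Lemma~\ref{lem:max_lip} for non-attained suprema, and the explicit requirement that $\tilde P$ have unit mass.
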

\begin{proof}
Fix $s$. For each $a$, define $\tilde Q_V(s,a):=\tilde r(s,a)+\gamma\int V\,d\tilde P(\cdot\mid s,a)$ and similarly $\tilde Q_W$.
Then by Lemma~\ref{lem:max_lip},
\begin{equation}
\begin{aligned}
|(\tilde TV)(s)-(\tilde TW)(s)|
&=\Big|\max_a \tilde Q_V(s,a)-\max_a \tilde Q_W(s,a)\Big|\\
&\le \max_a |\tilde Q_V(s,a)-\tilde Q_W(s,a)|
\\&=\gamma \max_a \Big|\int (V-W)\,d\tilde P(\cdot\mid s,a)\Big|. 
\end{aligned}
\end{equation}
Since $|\int (V-W)\,d\tilde P|\le \int |V-W|\,d\tilde P\le \|V-W\|_\infty$, we get
$|(\tilde TV)(s)-(\tilde TW)(s)|\le \gamma\|V-W\|_\infty$. Taking the supremum over $s$ yields the result.
\end{proof}

\begin{proof}[Proof of Theorem~\ref{thm:approx_vi}]
Interpret the approximately augmented model as an approximate MDP $(\tilde r,\tilde P)$ on the same $(\mathcal S,\mathcal A)$ such that
\[
\sup_{s,a}|\tilde r(s,a)-r(s,a)|\le \varepsilon_r,
\qquad
\sup_{s,a} d_{\mathrm{TV}}\big(\tilde P(\cdot\mid s,a),P(\cdot\mid s,a)\big)\le \varepsilon_P,
\]
which is exactly what \eqref{eq:dt_eps} expresses when one sets, for each $(s_0,a_0)$,
\[
\tilde r\big(g(s_0),h(a_0)\big):=r(s_0,a_0),
\qquad
\tilde P\big(\cdot\mid g(s_0),h(a_0)\big):=g_\#P(\cdot\mid s_0,a_0).
\]
Indeed, $|\tilde r(g(s_0),h(a_0))-r(g(s_0),h(a_0))|=|r(s_0,a_0)-r(g(s_0),h(a_0))|\le \varepsilon_r$ and
$d_{\mathrm{TV}}(\tilde P(\cdot\mid g(s_0),h(a_0)),P(\cdot\mid g(s_0),h(a_0)))\le \varepsilon_P$ by \eqref{eq:dt_eps}.
Since $(g,h)$ is a bijection on state-action pairs, this implies the displayed sup-bounds over all $(s,a)$.

Fix any bounded $V$. For each $(s,a)$, define the (one-step) action-values
\[
Q_V(s,a):=r(s,a)+\gamma\int V(s')\,P(ds'\mid s,a),
\qquad
\tilde Q_V(s,a):=\tilde r(s,a)+\gamma\int V(s')\,\tilde P(ds'\mid s,a).
\]
Then
\[
(\tilde TV)(s)-(TV)(s)=\max_a \tilde Q_V(s,a)-\max_a Q_V(s,a).
\]
Lemma~\ref{lem:max_lip} gives
\[
|(\tilde TV)(s)-(TV)(s)|
\le \max_{a\in\mathcal A}|\tilde Q_V(s,a)-Q_V(s,a)|.
\]
For a fixed $(s,a)$, expand the difference:
\begin{align*}
|\tilde Q_V(s,a)-Q_V(s,a)|
&=\Big|\tilde r(s,a)-r(s,a)+\gamma\int V(s')\,\big(\tilde P-P\big)(ds'\mid s,a)\Big|\\
&\le |\tilde r(s,a)-r(s,a)|
+\gamma\Big|\int V(s')\,\big(\tilde P-P\big)(ds'\mid s,a)\Big|.
\end{align*}
Using the uniform reward bound $|\tilde r-r|\le \varepsilon_r$ and Lemma~\ref{lem:tv_expect} with $\mu=\tilde P(\cdot\mid s,a)$, $\nu=P(\cdot\mid s,a)$, $f=V$ yields
\[
\Big|\int V\,d(\tilde P-P)\Big|
\le \|V\|_\infty\, d_{\mathrm{TV}}\big(\tilde P(\cdot\mid s,a),P(\cdot\mid s,a)\big)
\le \varepsilon_P\,\|V\|_\infty.
\]
Substituting into the previous display gives, for every $(s,a)$,
\[
|\tilde Q_V(s,a)-Q_V(s,a)|\le \varepsilon_r+\gamma\,\varepsilon_P\|V\|_\infty.
\]
Taking $\max_a$ and then $\sup_s$ yields the operator perturbation bound \eqref{eq:operator_perturb}:
\[
\|(\tilde TV)-(TV)\|_\infty\le \varepsilon_r+\gamma\,\varepsilon_P\|V\|_\infty.
\]
Lemma~\ref{lem:tilde_contraction} shows that $\tilde T$ is a $\gamma$-contraction in $\|\cdot\|_\infty$, hence it admits a unique fixed point
$\tilde V^\star=\tilde T\tilde V^\star$. The true operator $T$ is also a $\gamma$-contraction, with unique fixed point $V^\star=TV^\star$.

To bound the fixed-point error, start from the identity
\[
\tilde V^\star - V^\star = \tilde T\tilde V^\star - TV^\star.
\]
Add and subtract $\tilde T V^\star$ inside the right-hand side and apply the triangle inequality:
\[
\|\tilde V^\star - V^\star\|_\infty
\le \|\tilde T\tilde V^\star-\tilde T V^\star\|_\infty + \|\tilde T V^\star - T V^\star\|_\infty.
\]
By Lemma~\ref{lem:tilde_contraction}, $\|\tilde T\tilde V^\star-\tilde T V^\star\|_\infty\le \gamma\|\tilde V^\star-V^\star\|_\infty$.
Therefore,
\[
\|\tilde V^\star - V^\star\|_\infty
\le \gamma\|\tilde V^\star-V^\star\|_\infty + \|\tilde T V^\star - T V^\star\|_\infty,
\]
which rearranges to
\[
(1-\gamma)\|\tilde V^\star - V^\star\|_\infty
\le \|\tilde T V^\star - T V^\star\|_\infty.
\]
Applying \eqref{eq:operator_perturb} with $V=V^\star$ gives
\[
\|\tilde T V^\star - T V^\star\|_\infty
\le \varepsilon_r+\gamma\,\varepsilon_P\|V^\star\|_\infty.
\]
Dividing by $1-\gamma$ yields the claimed bound \eqref{eq:fixed_point_error}.
\end{proof}

%%%%%%%%%%%%%%%%%%%%%%%%%%%%%%%%%%%%%%%%%%%%%%%%%%%%%%%%%%%%

\end{document}